\newcommand{\openone}{\leavevmode\hbox{\small1\normalsize\kern-.33em1}}
\def\UrlSpecials{\do\~{\kern -.15em\lower .7ex\hbox{~}\kern .04em}} \catcode`~=13 
\newcommand{\nn}{\nonumber}
\newcommand{\calB}{\mathcal{B}}
\newcommand{\calG}{\mathcal{G}}
\newcommand{\calI}{\mathcal{I}}
\newcommand{\calN}{\mathcal{N}}
\newcommand{\calP}{\mathcal{P}}
\newcommand{\calU}{\mathcal{U}}
\newcommand{\calX}{\mathcal{X}}
\newcommand{\calY}{\mathcal{Y}}
\newcommand{\calZ}{\mathcal{Z}}
\newcommand{\ba}{\mathbf{a}}
\newcommand{\bb}{\mathbf{b}}
\newcommand{\bg}{\mathbf{g}}
\newcommand{\bI}{\mathbf{I}}
\newcommand{\bu}{\mathbf{u}}
\newcommand{\bv}{\mathbf{v}}
\newcommand{\bV}{\mathbf{V}}
\newcommand{\bx}{\mathbf{x}}
\newcommand{\bX}{\mathbf{X}}
\newcommand{\by}{\mathbf{y}}
\newcommand{\bz}{\mathbf{z}}
\newcommand{\bZ}{\mathbf{Z}}
\newcommand{\rmc}{\mathrm{c}}
\newcommand{\rmd}{\mathrm{d}}
\newcommand{\rml}{\mathrm{l}}
\newcommand{\rmL}{\mathrm{L}}
\newcommand{\rmQ}{\mathrm{Q}}
\newcommand{\rmu}{\mathrm{u}}
\newcommand{\bbE}{\mathbb{E}}
\newcommand{\bbN}{\mathbb{N}}
\newcommand{\bbP}{\mathbb{P}}
\newcommand{\bbR}{\mathbb{R}}
\DeclareMathAlphabet{\mathbsf}{OT1}{cmss}{bx}{n}
\DeclareMathAlphabet{\mathssf}{OT1}{cmss}{m}{sl}
\DeclareSymbolFont{bsfletters}{OT1}{cmss}{bx}{n}  
\DeclareSymbolFont{ssfletters}{OT1}{cmss}{m}{n}
\DeclareMathSymbol{\bsfGamma}{0}{bsfletters}{'000}
\DeclareMathSymbol{\ssfGamma}{0}{ssfletters}{'000}
\DeclareMathSymbol{\bsfDelta}{0}{bsfletters}{'001}
\DeclareMathSymbol{\ssfDelta}{0}{ssfletters}{'001}
\DeclareMathSymbol{\bsfTheta}{0}{bsfletters}{'002}
\DeclareMathSymbol{\ssfTheta}{0}{ssfletters}{'002}
\DeclareMathSymbol{\bsfLambda}{0}{bsfletters}{'003}
\DeclareMathSymbol{\ssfLambda}{0}{ssfletters}{'003}
\DeclareMathSymbol{\bsfXi}{0}{bsfletters}{'004}
\DeclareMathSymbol{\ssfXi}{0}{ssfletters}{'004}
\DeclareMathSymbol{\bsfPi}{0}{bsfletters}{'005}
\DeclareMathSymbol{\ssfPi}{0}{ssfletters}{'005}
\DeclareMathSymbol{\bsfSigma}{0}{bsfletters}{'006}
\DeclareMathSymbol{\ssfSigma}{0}{ssfletters}{'006}
\DeclareMathSymbol{\bsfUpsilon}{0}{bsfletters}{'007}
\DeclareMathSymbol{\ssfUpsilon}{0}{ssfletters}{'007}
\DeclareMathSymbol{\bsfPhi}{0}{bsfletters}{'010}
\DeclareMathSymbol{\ssfPhi}{0}{ssfletters}{'010}
\DeclareMathSymbol{\bsfPsi}{0}{bsfletters}{'011}
\DeclareMathSymbol{\ssfPsi}{0}{ssfletters}{'011}
\DeclareMathSymbol{\bsfOmega}{0}{bsfletters}{'012}
\DeclareMathSymbol{\ssfOmega}{0}{ssfletters}{'012}
\newcommand{\tilc}{\tilde{c}}
\newcommand{\tilF}{\tilde{F}}
\newcommand{\tilg}{\tilde{g}}
\newcommand{\tilR}{\tilde{R}}
\newcommand{\hatS}{\hat{S}}
\newcommand{\haty}{\hat{y}}
\newcommand{\hatY}{\hat{Y}}
\newcommand{\tilY}{\tilde{Y}}
\newcommand{\tilZ}{\tilde{Z}}
\newcommand{\btheta}{\bm{\theta}}
\newcommand{\bup}{\bm{\upsilon}}
\newcommand{\veps}{\varepsilon}
\newcommand{\bxi}{\bm{\xi}}
\newcommand{\bmu}{\bm{\mu}}
\DeclareMathOperator*{\argmin}{arg\,min}
\DeclareMathOperator{\sgn}{sgn}
\DeclareMathOperator{\cov}{\mathsf{Cov}}
\DeclareMathOperator{\rank}{rank}
\newcommand{\bzero}{\mathbf{0}}
\newcommand{\mathbbm}[1]{\text{\usefont{U}{bbm}{m}{n}#1}}
\newcommand{\reg}{\mathrm{reg}}
\newcommand{\HD}{\mathrm{\Delta h}}
\newenvironment{subtheorem}[1]{%
	\def\subtheoremcounter{#1}%
	\refstepcounter{#1}%
	\protected@edef\theparentnumber{\csname the#1\endcsname}%
	\setcounter{parentnumber}{\value{#1}}%
	\setcounter{#1}{0}%
	\expandafter\def\csname the#1\endcsname{\theparentnumber.\Alph{#1}}%
	\ignorespaces
}{%
	\setcounter{\subtheoremcounter}{\value{parentnumber}}%
	\ignorespacesafterend
}
\newcounter{parentnumber}
\begin{document}

\title{Information-Theoretic Characterization of the Generalization Error
	for Iterative Semi-Supervised Learning}

\author{\name Haiyun He \email haiyun.he@u.nus.edu \\ 
	\addr Department of Electrical and Computer Engineering, \\
	National University of Singapore,\\
	117583 Singapore
	\AND 
	\name Hanshu Yan \email hanshu.yan@u.nus.edu \\
	\addr Department of Electrical and Computer Engineering, \\
	National University of Singapore,\\
	117583 Singapore 
	\AND 
	\name Vincent~Y.~F.~Tan \email vtan@nus.edu.sg \\
	\addr  Department of Mathematics, \\
	Department of Electrical and Computer Engineering, \\
	Institute of Operations Research and Analytics,\\
	National University of Singapore,\\
	119076, Singapore}

\editor{Aarti Singh}

\maketitle

\begin{abstract}
Using information-theoretic principles, we consider the generalization error (gen-error) of iterative semi-supervised learning (SSL) algorithms that iteratively generate pseudo-labels for a large amount of unlabelled data  to progressively refine the model parameters.
In contrast to most previous works that {\em bound} the gen-error, we provide an {\em exact} expression for the gen-error and particularize it to  the binary Gaussian mixture model. 
Our theoretical results suggest that when the class conditional variances are not too large, the gen-error decreases with the number of iterations, but quickly saturates.  On the flip side, if the class conditional variances (and so amount of overlap between the classes) are large, the gen-error  increases with the number of iterations. To mitigate this undesirable effect, we show that regularization can reduce  the gen-error.
The theoretical results are corroborated by extensive experiments on  the MNIST and CIFAR datasets in which we notice that  for easy-to-distinguish classes, the gen-error improves after several pseudo-labelling iterations, but saturates afterwards, and for more difficult-to-distinguish classes, regularization improves the generalization performance. 
\end{abstract}

\begin{keywords}
	Generalization error, Semi-supervised learning, Pseudo-label, Information theory, Binary Gaussian mixture.
\end{keywords}

\section{Introduction}
In real-life machine learning applications,  it is relatively easy and inexpensive to obtain large amounts of unlabelled data, while the number of labelled data examples is usually small due to the high cost of annotating them with true labels. In light of this,  semi-supervised learning (SSL) has come to the fore \citep{books/mit/06/CSZ2006,zhu2008semi,van2020survey}. SSL makes use of the abundant unlabelled data to augment the performance of learning tasks with few labelled data examples. This has been shown to outperform supervised and unsupervised learning under certain conditions. For example, in a classification problem, the correlation between the additional unlabelled data  and the labelled data may help to enhance the accuracy of classifiers. Among the plethora of SSL methods, pseudo-labelling \citep{lee2013pseudo} has been observed to be a simple and efficient way to improve the generalization performance empirically. In this paper, we consider the problem  of pseudo-labelling a subset of the unlabelled data at each iteration based on the previous output parameter and then refining the model progressively, but
we are interested in analysing this procedure theoretically.  Our goal in this paper is to understand the impact of pseudo-labelling on the generalization error. 

A learning algorithm can be viewed as a randomized map from the training dataset to the output model parameter. The output is highly data-dependent and may suffer from overfitting to the given dataset. In statistical learning theory, the \emph{generalization error (gen-error)}, or generalization bias, is defined as the expected gap between the test and training losses, and is used to measure the extent to which the algorithms overfit to the training data \citep{russo2016controlling,xu2017information,kawaguchi2017generalization}. In SSL problems, the unlabelled data are expected to improve the generalization performance in a certain manner and thus, it is a worthy endeavor to investigate the behaviour theoretically.  Although there exist many works studying the gen-error for supervised learning problems, the gen-error of SSL algorithms is yet to be explored.  

\subsection{Related Works}\label{Sec:related works}
The extensive literature review is categorized into three aspects.

\paragraph{Semi-supervised learning:} There have been many existing results discussing about various methods of SSL. The book by \citet{books/mit/06/CSZ2006} presented a comprehensive overview of the SSL methods both theoretically and practically. \citet{chawla2005learning} presented an empirical study of
various SSL techniques on a variety of datasets and investigated sample-selection bias when the labelled and unlabelled data are from different distributions. \citet{zhu2008semi} partitioned SSL methods into six main classes: generative models, low-density separation methods, graph-based methods, self-training and co-training. 
Pseudo-labelling is a technique among the self-training and co-training \citep{zhu2009introduction}. In self-training, the model is initially trained by the limited number of labelled data and generate pseudo-labels to the unlabelled data. Subsequently, the model is retrained with the pseudo-labelled data and repeats the process iteratively. It is a simple and effective SSL method without restrictions on the data samples \citep{triguero2015self}. A variety of works have also shown the benefits of utilizing the unlabelled data.
\citet{singh2008unlabeled} developed a finite sample analysis that characterized how the unlabelled data  improves the excess risk compared to the supervised learning, with respect to the number of unlabelled data and the margin between different classes. \citet{li2019multi} studied multi-class classification with unlabelled data and provided a sharper generalization error bound using the notion of Rademacher complexity that yields a faster convergence rate. \citet{pmlr-v124-zhu20b} considered the general SSL setting by assuming the loss function to be $\beta$-exponentially concave or the $0$-$1$ loss, and used a Bayesian method for prediction instead of empirical risk minimization which we consider. The author presented an upper bound for the excess risk and the learning rate in terms of the number of labelled and unlabelled data examples.
 \citet{carmon2019unlabeled} proved that using unlabelled data can help to achieve high robust accuracy as well as high standard accuracy at the same time. \citet{dupre2019improving} considered iteratively pseudo-labelling the whole unlabelled dataset with a confidence threshold and showed that the accuracy converges relatively quickly.  \citet{oymak2021theoretical}, in which part of our analysis hinges on, studied SSL under  the binary Gaussian mixture model setup and characterized the correlation between the learned and the optimal estimators concerning the margin and the regularization factor.  Recently, \citet{pmlr-v151-aminian22a} considered the scenario where the labelled and unlabelled data are not generated from the same distribution and these distributions may change over time, exhibiting so-called covariate shifts. They provided an upper bound for the gen-error and proposed the Covariate-shift SSL (CSSL) method which outperforms some previous SSL algorithms under this setting.
 However, these works do not investigate how the unlabelled data   affects the generalization error over the iterations.

\paragraph{Generalization error bounds:} The traditional way of analyzing generalization error involves using the Vapnik--Chervonenkis or VC dimension \citep{vapnik2000} and  the Rademacher complexity  \citep{boucheron2005theory}. Recently, \citet{russo2016controlling} proposed using the mutual information between the estimated output of an algorithm and the actual realized value of the estimates to analyze and bound the bias in data analysis, which can be regarded equivalent to the generalization error. This new approach is simpler and can handle a wider range of loss functions compared to the abovementioned methods and other methods such as differential privacy. It also paves a new way to improving generalization capability of learning algorithms from an information-theoretic viewpoint. Following \citet{russo2016controlling}, \citet{xu2017information} derived upper bounds on generalization error  of learning algorithms with mutual information between the input dataset and the output hypothesis, which formalizes the intuition that less information that a learning algorithm can extract from training dataset leads to less overfitting. Later \citet{pensia2018generalization} derived generalization error bounds for noisy and iterative algorithms and the key contribution is to bound the mutual information between input data and output hypothesis.  \citet{negrea2019information} improved mutual information bounds for Stochastic Gradient Langevin Dynamics (SGLD) via data-dependent estimates compared to distribution-dependent bounds.

However, one  major shortcoming of the aformentioned mutual information bounds is that the bounds  go  to infinity for (deterministic) learning algorithms without noise, e.g., Stochastic Gradient Descent (SGD). Some other works have tried to overcome this problem. \citet{lopez2018generalization} derived upper bounds on the generalization error using the Wasserstein distance involving the distributions of input data and output hypothesis, which are shown to be tighter under some natural cases. \citet{esposito2021generalization} derived generalization error bounds via R\'{e}nyi-, $f$-divergences and maximal leakage. \citet{steinke2020reasoning} proposed using the Conditional Mutual Information (CMI) to bound the generalization error; the CMI is useful as it possesses  the chain rule property. \citet{bu2020tightening} provided a tightened upper bound based on the \emph{individual} mutual information (IMI) between the \emph{individual} data sample and the output.   
\citet{wu2020information} extended \citet{bu2020tightening}'s result to   transfer learning problems and characterized the upper bound based on IMI and KL-divergence.
In a similar manner, \citet{jose2020information} provided a tightened bound on transfer generalization error based on the Jensen--Shannon divergence. Moreover, recently, \citet{aminian2021exact} and \citet{bu2021characterizing}   recently derived the exact characterization of gen-error for supervised learning and transfer learning with the Gibbs algorithm.

Regularization is an important technique to reduce the model variance  \citep{anzai2012pattern}, but there are few works that theoretically analyse the relationship between the gen-error and regularization. \citet{moody1992effective} characterized the gen-error as a function of the  regularization parameter in supervised nonlinear learning systems and showed that the gen-error decreases as the parameter increases.  \citet{bousquet2002stability} provided a stability-based gen-error upper bound in terms of the regularization parameter in supervised learning. \citet{mignacco2020role} studied how the regularization affects the expected accuracy in high-dimensional GMM supervised classification problem.

\paragraph{Gaussian mixture models (GMM):} The GMM is a popular, simple but non-trivial model that has been studied by many researchers.   The performance of GMM classification problems depends on the data structure.  The classical work of \citet{castelli1996relative} studied the classification problem in a binary mixture model with known conditional distributions but unknown mixing parameter and characterized the relative value of labelled and unlabelled data in improving the convergence rate of classification error probability.
\citet{akaho2000nonmonotonic} characterized the generalization bias of general GMMs in supervised learning and discussed its dependency on data noise. \citet{watanabe2006stochastic} considered GMM in Bayesian learning and provided bounds for variational stochastic complexity. \citet{wang2021binary} and \citet{muthukumar2021classification} studied the dependence of the bGMM classification performance (using the $0$-$1$ loss) on the structure of data covariance by considering SVM and linear interpolation. 

However, all these aforementioned works do not investigate the generalization performance of SSL algorithms.

\subsection{Contributions}

Our main contributions are as follows.
\begin{enumerate}
	\item In Section \ref{Sec:preliminaries}, we leverage results by \citet{bu2020tightening} and \citet{wu2020information} to derive an information-theoretic gen-error bound at each iteration for iterative SSL; see Theorem~\ref{Coro: sub Gau gen}. Moreover,  in contrast to most previous works that bound the gen-error, we derive an \emph{exact} characterization of gen-error at each iteration for \emph{negative log-likelihood (NLL)} loss functions (see Theorem \ref{Thm:exact gen}).
	
	\item In Section \ref{Sec:main results}, we particularize Theorem \ref{Thm:exact gen} to the binary Gaussian mixture model (bGMM) with in-class variance $\sigma^2$. We show that for any fixed number of data samples, there exists a critical value   $\sigma_0$ such that when the data variance (representing the overlap between classes)  $\sigma^2<\sigma_0^2$, the gen-error decreases in the iteration count $t$ and converges quickly with a sufficiently large amount of unlabelled data. When $\sigma^2>\sigma_0^2$, the gen-error increases instead, which means using the unlabelled data does not help to reduce the gen-error across the SSL iterations.  The empirical gen-error corroborates the theoretical results, which suggests that the characterization serves as a useful rule-of-thumb to understand  how the gen-error changes across the SSL iterations and it  can be used to establish conditions under which unlabelled data can help in terms of generalization.

	\item In Section \ref{Sec:regularization}, we theoretically and empirically show that for difficult-to-classify problems with large overlap between classes, regularization can effectively help to mitigate the undesirable increase of the gen-error across the SSL iterations.
	
	\item In Section \ref{Sec:experiments}, we implement the pseudo-labelling procedure on  the MNIST and CIFAR datasets with few labelled data and abundant unlabelled data. The experimental results corroborate the phenomena for the bGMM that  the gen-error decreases quickly in the early pseudo-labelling iterations and saturates  thereafter for easy-to-distinguish classes but increases for hard-to-distinguish classes. By adding $\ell_2$-regularization to the hard-to-distinguish problem, we also observe  improvements to the  gen-error similar to that for the bGMM.
%
\end{enumerate}

\section{Problem Setup}

Let the instance space be $\calZ=\calX\times \calY\subset\bbR^{d+1}$, the model parameter space be $\Theta $ and the loss fucntion be $l:\calZ \times \Theta \to \bbR $, where $d\in\bbN$. 
We are given a labelled training dataset $S_{\rml}=\{Z_1,\ldots,Z_n\}=\{(X_i,Y_i)\}_{i=1}^n$ drawn from $\calZ$, where each $Z_i=(X_i,Y_i)$ is independently and identically distributed (i.i.d.) from $P_{Z}=P_{X,Y}\in\calP(\calZ)$. For any $i\in[n]$, $X_i$ is a vector of features and $Y_i$ is a label indicating the class to which $X_i$ belongs. However, in many real-life  machine learning applications, we only have  a limited number of labelled data while we have access to a large amount of unlabelled data, which are expensive to annotate. Then we can incorporate the unlabelled training data together with the labelled data to improve the performance of the model. This procedure is called  
\emph{semi-supervised learning (SSL)}. We are given an independent unlabelled training dataset $S_{\rmu}=\{X'_1,\ldots,X'_{\tau m}\}, \tau\in\bbN$, where each $X'_i$ is  generated i.i.d.\ from $P_X\in\calP(\calX)$. Typically, $m\gg n$.

In the following, we consider the \emph{iterative self-training with pseudo-labelling} in SSL setup, as shown in Figure \ref{Fig:system}. Let $t\in[0:\tau]$ denote the iteration count. In the initial round ($t=0$), the labelled data $S_{\rml}$ are first used to learn an initial model parameter $\theta_0\in\Theta$. Next, we split the unlabelled dataset $S_{\rmu}$ into $\tau$ disjoint equal-size sub-datasets $\{S_{\rmu,k}\}_{k=1}^\tau$, where $S_{\rmu,k}=\{X'_{(k-1)m+1},\ldots,X'_{km}\}$. In each subsequent round $t\in[1:\tau]$, based on $\theta_{t-1}$ trained from the previous round, we use a predictor $f_{\theta_{t-1}}:\calX \mapsto \calY$ to assign a \emph{pseudo-label} $\hatY'_i$ to the unlabelled sample $X'_i$ for all $i\in \calI_t:=\{(t-1)m,(t-1)m+1,\ldots,tm \}$. Let $\hatS_{\rmu,t}=\{(X_i',\hatY_i')\}_{i\in\calI_t}$ denote the $t^{\mathrm{th}}$ pseudo-labelled dataset. After pseudo-labelling, both the labelled data $S_{\rml}$ and the pseudo-labelled data $\hatS_{\rmu,t}$  are used to learn a new model parameter $\theta_t$. The procedure is then repeated iteratively until the maximum number of iterations $\tau$ is reached. 

\begin{figure}[t]
	\centering
	\hspace{-0.4cm}
	\includegraphics[width=.99\linewidth]{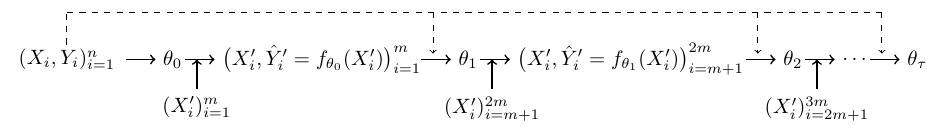}
	\caption{Paradigm of iterative self-training with pseudo-labelling in SSL}
	\label{Fig:system}
	\vspace{-.1in}
\end{figure}
This setup is a classical and widely-used model in the realm of self-training in SSL \citep{books/mit/06/CSZ2006,zhu2008semi,zhu2009introduction,lee2013pseudo}, where in each iteration, only a subset of the unlabelled data are used. Furthermore, as discussed by \citet{arazo2020pseudo}, this method is less likely to overfit to incorrect pseudo-labels, compared to using all the unlabelled data in each iteration (also see Figure~\ref{Fig: horse-ship all vs subset}). Under this setup of iterative SSL, during each iteration $t$, our \emph{goal} is to find a model parameter $\theta_t\in\Theta$ that minimizes the population risk with respect to the underlying data distribution
\begin{align}
	L_{P_Z}(\theta_t):=\bbE_{Z\sim P_Z}[l(\theta_t,Z)].
\end{align}
Since $P_Z$ is unknown, $L_{P_Z}(\theta_t)$ cannot be computed directly. Hence, we instead minimize the empirical risk. The procedure is termed \emph{empirical risk minimization} (ERM). For any model parameter $\theta_t\in\Theta$, the empirical risk of the labelled data is defined as
\begin{align}
	L_{S_{\rml}}(\theta_t):=\frac{1}{n}\sum_{i=1}^n l(\theta_t,Z_i),
\end{align}
and for $t\geq 1$, the empirical risk of pseudo-labelled data $\hatS_{\rmu,t}$ as
\begin{align}
	L_{\hatS_{\rmu,t}}(\theta_t):=\frac{1}{m}\sum_{i \in \calI_t} l(\theta_t,(X_i',\hatY_i')).
\end{align}
We set $L_{\hatS_{\rmu,t}}(\theta_t)=0$ for $t=0$. For a fixed weight $w\in[0,1]$, the total empirical risk can be defined as the following linear combination of $L_{S_{\rml}}(\theta_t)$ and $L_{\hatS_{\rmu,t}}(\theta_t)$:
\begin{align}
	L_{S_{\rml},\hatS_{\rmu,t}}(\theta_t)
	&:=w L_{S_\rml}(\theta_t)+(1-w)L_{\hatS_{\rmu,t}}(\theta_t). \label{Eq: def of empirical risk}
\end{align} 
In the usual case where the algorithm minimizes the average of the empirical training losses, one should set $w=\frac{n}{n+m}$.
An SSL algorithm can be characterized by a randomized map from the labelled and unlabelled training data $S_{\rml}$, $S_{\rmu}$ to a model parameter $\theta$ according to a conditional distribution $P_{\theta|S_{\rml},S_{\rmu}}$. Then at each iteration $t$, we can use the sequence of conditional distributions $\{P_{\theta_k|S_{\rml},S_{\rmu}}\}_{k=0}^t$ with $P_{\theta_0|S_{\rml},S_{\rmu}}=P_{\theta_0|S_{\rml}}$ to represent an iterative SSL algorithm.  The \emph{generalization error} at the $t$-th iteration is defined as the expected \emph{gap} between the population risk of $\theta_t$ and the empirical risk on the training data:
\begin{align}
	&\mathrm{gen}_t(P_Z, P_X, \{P_{\theta_k|S_{\rml},S_{\rmu}}\}_{k=0}^t, \{f_{\theta_k}\}_{k=0}^{t-1} ) \nn\\
	&:=\bbE[L_{P_Z}(\theta_t)-	L_{S_{\rml},\hatS_{\rmu,t}}(\theta_t)]\\
	&=w\bigg(\bbE_{\theta_t}\bbE_{Z}[l(\theta_t,Z)]-\frac{1}{n}\sum_{i=1}^n \bbE_{\theta_t,Z_i}[l(\theta_t,Z_i)]    \bigg)   \nn\\
	&\quad +   (1-w  ) \bigg( \bbE_{\theta_t}\bbE_{Z}[l(\theta_t,Z)]  -\frac{1}{m} \sum_{i\in\calI_t} \bbE_{\theta_t,X'_i,\hatY'_i}[l(\theta_t,(X'_i,\hatY'_i))]   \bigg). \label{Def: gent}
\end{align}
When $t=0$ and $w=1$, the definition of the generalization error reduces to that of vanilla supervised learning. Based on this definition, the expected population risk can be decomposed as 
\begin{equation}
\bbE[L_{P_Z}(\theta_t) ]=\bbE[L_{S_\rml,\hatS_{\rmu,t}}(\theta_t)]+\mathrm{gen}_t, \label{eqn:decomp}
\end{equation}
 where the first term  on the right-hand side of this equation is what the algorithm minimizes and reflects how well the output hypothesis fits the dataset, and the second term $\mathrm{gen}_t$ is used to measure the extent to which the iterative learning algorithm overfits the training data at the $t$-th iteration. To minimize $\bbE[L_{P_Z}(\theta_t) ]$, we need both terms in \eqref{eqn:decomp} to be small, but there exists a natural trade-off between them. While the algorithm aims to minimize the empirical risk $\bbE[L_{S_\rml,\hatS_{\rmu,t}}(\theta_t)]$,   studying and controlling $\mathrm{gen}_t$ can also help to reduce the population risk $\bbE[L_{P_Z}(\theta_t) ]$, which is the ultimate goal of learning. Instead of focusing on the total generalization error induced during the entire process, we are   interested in the following questions. How does $\mathrm{gen}_t$ evolve as  $t$ increases? Do the unlabelled data examples in $S_\rmu$ help to improve the generalization error?

\section{General Results}\label{Sec:preliminaries}
Inspired by the information-theoretic generalization results in \citet[Theorem~1]{bu2020tightening} and  \citet[Theorem 1]{wu2020information}, we derive an upper bound on the gen-error $\mathrm{gen}_t$ in terms of the mutual information between input data samples (either labelled or pseudo-labelled) and the output model parameter $\theta_t$, as well as the KL-divergence between the   data distribution and the joint distribution of feature vectors and pseudo-labels (cf.\ Theorem~\ref{Coro: sub Gau gen}). Furthermore, by considering the  NLL loss function \citep{mackay2003information,goodfellow2016deep}, we derive the exact characterization for the gen-error $\mathrm{gen}_t$ (cf.\ Theorem~\ref{Thm:exact gen}).

Recall that for a given $R>0$,   $L$ is an   {\em $R$-sub-Gaussian random variable}~\citep{vershynin_2018} if  its {\em cumulant generating function} $\Lambda_L(\lambda):=\log\bbE[\exp(\lambda(L-\bbE[L]))]\le \exp(\lambda^2R^2/2)$ for all $\lambda\in\bbR$. If $L$ is $R$-sub-Gaussian, we write this   as $L\sim \mathrm{subG}(R)$. Furthermore, let us recall the following somewhat non-standard information quantities \citep{negrea2019information,haghifam2020sharpened}.
\begin{definition}
	For   random variables $X$, $Y$ and $U$, define the \emph{disintegrated mutual information} between $X$ and $Y$ given $U$ as $I_{U}(X;Y):=D(P_{X,Y|U}\|P_{X|U}\otimes P_{Y|U})$, and the \emph{disintegrated KL-divergence} between $P_X$ and $P_Y$ given $U$ as $D_U(P_X\|P_Y):=D(P_{X|U}\|P_{Y|U})$. These are $\sigma(U)$-measurable random variables.
	It follows that the conditional mutual information $I(X;Y|U)=\bbE_{U}[I_{U}(X;Y)]$  and the conditional KL-divergence $D(P_{X|U}\|P_{Y|U}| P_U)=\bbE_U[D_U(P_X\|P_Y)]$.
	
	For   distributions $P$, $Q$ and $V$, define the {\em cross-entropy} as $h(P,Q):=\bbE_{P}[-\log Q]$ and the {\em divergence between the cross-entropies} as $\HD(P\|Q|V):=h(P,V)-h(Q,V)$.
\end{definition}

 Let $\theta^{(t)}=(\theta_0,\ldots,\theta_t)$ for any $t\in[0:\tau]$ and $w=1$ for $t=0$.  In iterative SSL, we can characterize the gen-error as shown in Theorem \ref{Thm:gen thm} by applying the law of total expectation.
 
 \begin{subtheorem}{theorem}\label{Thm:gen thm}
 	\begin{theorem}[Gen-error upper bound for iterative SSL]\label{Coro: sub Gau gen}
 		Suppose $l(\theta,Z)\sim \mathrm{subG}(R)$ under $Z\sim P_Z$ for all $\theta\in\Theta$, then for any $t\in[0:\tau]$, 
 		\begin{align}\label{Eq: iterative IMI}
 			&\big|\mathrm{gen}_t(P_Z, P_X, \{P_{\theta_k|S_{\rml},S_{\rmu}}\}_{k=0}^t, \{f_{\theta_k}\}_{k=0}^{t-1} ) \big| \nn\\
 			&\leq \frac{w}{n}\sum_{i=1}^n \bbE_{\theta^{(t-1)}}\Big[\sqrt{2 R^2 I_{\theta^{(t-1)}}^{(i)}} \Big]+\frac{1-w}{m} \sum_{i\in\calI_t}   \bbE_{\theta^{(t-1)}}\Big[ \sqrt{2R^2 \big(I_{\theta^{(t-1)}}'^{(i)}+D_{\theta^{(t-1)}}'^{(i)}  \big) }\Big],
 		\end{align}
 		where $I_{\theta^{(t-1)}}^{(i)}:=I_{\theta^{(t-1)}}(\theta_t;Z_i)$, $I_{\theta^{(t-1)}}'^{(i)}:=I_{\theta^{(t-1)}}(\theta_t;X_i',\hatY_i')$, and $D_{\theta^{(t-1)}}'^{(i)}:=D_{\theta^{(t-1)}}(P_{X_i',\hatY_i'}\| P_Z)$.
 	\end{theorem}
 	
 	\begin{theorem}[Exact gen-error for iterative SSL]\label{Thm:exact gen}
 		Consider the NLL loss function $l(\theta,Z)=-\log p_{\theta}(Z)$, where $p_{\theta}(Z)$ is the likelihood of $Z$ under parameter $\theta$. For any $t\in[0:\tau]$,
 		\begin{align}
 			&\mathrm{gen}_t(P_Z, P_X, \{P_{\theta_k|S_{\rml},S_{\rmu}}\}_{k=0}^t, \{f_{\theta_k}\}_{k=0}^{t-1} )\nn\\
 			&= \bbE_{\theta^{(t)}} \bigg[\frac{w}{n}\sum_{i=1}^n   \HD^{(i)}_{\theta_t} +  \frac{1-w}{m}   \sum_{i\in\calI_t} \big( \HD'^{(i)}_{\theta^{(t)}}+\widetilde{\HD}'^{(i)}_{\theta^{(t)}}\big)  \bigg], \label{Eq:exact gent}
 		\end{align}
 		where 
 		\begin{align}
 			 \HD^{(i)}_{\theta_t}&:=\HD(P_Z\|P_{Z_i|\theta_t}|p_{\theta_t}), \qquad
 			\HD'^{(i)}_{\theta^{(t)}} := \HD(P_Z\|P_{X_i',\hatY_i'|\theta^{(t-1)}}|p_{\theta_t}),\quad \mbox{and}\\
 			 \widetilde{\HD}'^{(i)}_{\theta^{(t)}}&:= \HD(P_{X_i',\hatY_i'|\theta^{(t-1)}}\| P_{X_i',\hatY_i'|\theta^{(t)}}|p_{\theta_t} ).
 		\end{align}
 	\end{theorem}
 \end{subtheorem}

The proof of Theorem~\ref{Coro: sub Gau gen} is provided in Appendix \ref{pf of Thm: iterative gen bound}, in which we provide a general  upper bound not only applicable to sub-Gaussian loss functions. The proof of Theorem~\ref{Thm:exact gen} is provided in Appendix~\ref{pf of Thm:exact gen}. Specifically, for NLL loss functions, Theorem~\ref{Thm:exact gen} provides an {\em exact} characterization of the gen-error at each iteration. This is in stark contrast to most works on information-theoretic generalization error in which only {\em bounds} are provided. 

In contrast to \citet[Theorem 1]{bu2020tightening} and  \citet[Theorem 1]{wu2020information} which pertain to supervised learning, Theorem \ref{Thm:gen thm} characterizes the gen-error at each iteration during the pseudo-labelling and training process.  Note that the quantities in Theorem \ref{Thm:exact gen} satisfy
$\HD^{(i)}_{\theta_t}=I_{\theta^{(t-1)}}^{(i)}+D(P_Z\|p_{\theta_t})-D(P_{Z_i|\theta_t}\|p_{\theta_t})$ and $\widetilde{\HD}'^{(i)}_{\theta^{(t)}}=I_{\theta^{(t-1)}}'^{(i)}+D_{\theta^{(t-1)}}(P_{X_i',\hatY_i'}\|p_{\theta_t})-D_{\theta^{(t-1)}}(P_{X_i',\hatY_i'|\theta_t}\|p_{\theta_t})$. Thus, it is plausible that the upper bound based on $I_{\theta^{(t-1)}}^{(i)}$ and $I_{\theta^{(t-1)}}'^{(i)}$ in Theorem \ref{Coro: sub Gau gen} can help to understand and control the exact gen-error.
Intuitively,  the mutual information between the individual input data sample $Z_i$ and the output model parameter $\theta_t$ in Theorem~\ref{Coro: sub Gau gen} and the cross-entropy divergences $\HD^{(i)}_{\theta_t}$, $\widetilde{\HD}'^{(i)}_{\theta^{(t)}} $ in Theorem~\ref{Thm:exact gen} both measure the extent to which the algorithm is sensitive to each data example at each iteration $t$. The KL-divergence between the underlying $P_Z$ and pseudo-labelled distribution $P_{X_i',\hatY_i'}$ in Theorem~\ref{Coro: sub Gau gen} and the cross-entropy divergence $\HD'^{(i)}_{\theta^{(t)}}$ in Theorem~\ref{Thm:exact gen} measure how effectively the pseudo-labelling process works.  As $n\to\infty$ and $m\to\infty$, we show that the mutual information (as well as $\HD^{(i)}_{\theta_t}$ and $\widetilde{\HD}'^{(i)}_{\theta^{(t)}} $) vanishes but the divergences $D_{\theta^{(t-1)}}'^{(i)}$ and $\HD'^{(i)}_{\theta^{(t)}}$ do not, which reflects the impact  of pseudo-labelling on the gen-error. 



In iterative learning algorithms, by applying the law of total expectation and conditioning the information-theoretic quantities on the output model parameters $\theta^{(t-1)}=\{\theta_1,\ldots,\theta_{t-1}\}$ from previous iterations, we are able to calculate the gen-error iteratively. 
In the next section, we apply the exact iterated gen-error in Theorem \ref{Thm:exact gen} to a classification problem under a specific generative model---the bGMM. This simple model allows us to derive a   tractable characterization on the gen-error as a function of iteration number $t$ that we can compute numerically.

\section{Main Results on bGMM}\label{Sec:main results}
We now particularize the iterative semi-supervised classification  setup to the bGMM. We evaluate   \eqref{Eq:exact gent}   to understand the effect of multiple self-training rounds on the gen-error.

\subsection{Iterative SSL under bGMM}
Fix a unit vector $\bmu\in\bbR^d$ and a scalar $\sigma\in\bbR_+ = (0,\infty)$. Under the bGMM with mean $\bmu$ and standard deviation (std.\ dev.) $\sigma$ (bGMM($\bmu,\sigma$)), we assume that the distribution of any labelled data example $(\bX,Y)$ can be specified as follows.
Let  $\calY=\{-1,+1\}$, $Y \sim P_Y=\mathrm{unif}\{-1,+1\}$, and $\bX|Y\sim \calN(Y\bmu,\sigma^2 \bI_d)$, where $\bI_d$ is the identity matrix of size $d\times d$.

The random vector $\bX$ is distributed according to the mixture distribution $$p_{\bmu}=\frac{1}{2}\calN(\bmu,\sigma^2\bI_d)+\frac{1}{2}\calN(-\bmu,\sigma^2\bI_d).$$
In the unlabelled dataset $S_{\rmu}$, each $\bX_i'$ for $i\in[1:\tau m]$ is drawn i.i.d.\ from   $p_{\bmu}$.

Let $\Theta\subset\bbR^d$ such that $\bmu\in\Theta$. For any $\btheta\in\Theta$, under the bGMM($\btheta,\sigma$), the joint distribution of any pair of $(\bX,Y)\in\calZ$ is given by $ \calN(Y\btheta,\sigma^2 \bI_d)\otimes P_Y$. The NLL loss function can be expressed as
\begin{align}
	 l(\btheta,(\bX,Y)) 
	&=  -\log p_{\btheta}(\bX,Y)  
	=  -\log \big(P_Y(Y)p_{\btheta}(\bX|Y) \big) \nn\\*
	&=-\log \frac{1}{2\sqrt{(2\pi)^d}\sigma^d } + \frac{1}{2\sigma^2}(\bX-Y\btheta)^\top(\bX-Y\btheta).
\end{align}
The population risk minimizer is given by $\argmin_{\btheta\in\Theta}\bbE_{\bX,Y}[l(\btheta,(\bX,Y))]=\bmu$.


%

Under this setup, the iterative SSL procedure is shown in Figure \ref{Fig:system}, but the labelled dataset $S_{\rml}$ is only used to train in the initial round $t=0$; we discuss the reuse of $S_\rml$ in all iterations in Corollary~\ref{Coro:reuse labeldata gen bound}. That is, in \eqref{Eq: def of empirical risk}, we set $w=0$. The algorithm operates in the following steps.
\begin{itemize}[leftmargin= 12 pt, topsep=-3pt,itemsep=0pt]
	\item  \textbf{Step 1: Initial round $t=0$ with $S_{\rml}$:} By minimizing the empirical risk of labelled dataset $S_{\rml}$
	{\setlength{\abovedisplayskip}{3pt}
		\setlength{\belowdisplayskip}{3pt}
	\begin{align}\label{Eq:L_Sl}
		L_{S_{\rml}}(\btheta)=\frac{1}{n}\sum_{i=1}^n l(\btheta,(\bX_i,Y_i)) 
		\overset{\rmc}{=} \frac{1}{2\sigma^2 n}\sum_{i=1}^n(\bX_i-Y_i\btheta)^\top(\bX_i-Y_i\btheta),
	\end{align}	}
	where $\stackrel{\rmc}{=}$ means that both sides differ by a constant independent of $\btheta$,  we obtain the minimizer
	{\setlength{\abovedisplayskip}{3pt}
		\setlength{\belowdisplayskip}{3pt}
	\begin{align}
		\btheta_0=\argmin_{\btheta\in\Theta}L_{S_{\rml}}(\btheta)=\frac{1}{n}\sum_{i=1}^n Y_i \bX_i. \label{Eq: def theta_0}
	\end{align}}

	\item \textbf{Step 2: Pseudo-label data in $S_{\rmu}$:} At each iteration $t\in[1:\tau]$, for any $i\in \calI_t$, we use $\btheta_{t-1}$ to assign a pseudo-label for $\bX_i'$, that is, $\hatY_i'=f_{\btheta_{t-1}}(\bX_i')=\sgn(\btheta_{t-1}^\top \bX_i')$.

	\item \textbf{Step 3: Refine the model:} We then use the pseudo-labelled dataset $\hatS_{\rmu,t}$ to train the new model. By minimizing the empirical risk of $\hatS_{\rmu,t}$
	{\setlength{\abovedisplayskip}{3pt}
		\setlength{\belowdisplayskip}{3pt}
	\begin{align}\label{Eq:L_hatS}
		L_{\hatS_{\rmu,t}}(\btheta) = \frac{1}{m}\sum_{i\in\calI_t } l(\btheta,(\bX'_i,\hatY_i')) 
		 \overset{\rmc}{=}  \frac{1}{2\sigma^2 m} \sum_{i\in\calI_t}(\bX'_i-\hatY_i'\btheta)^\top(\bX'_i-\hatY_i'\btheta),
	\end{align}}
	we obtain the new model parameter
	{\setlength{\abovedisplayskip}{4pt}
		\setlength{\belowdisplayskip}{4pt}
	\begin{align}
		& \btheta_t =\frac{1}{m}\sum_{i\in\calI_t} \hatY_i'\bX'_i = \frac{1}{m} \sum_{i\in\calI_t}  \sgn(\btheta_{t-1}^\top \bX'_i)\bX'_i. \label{Eq:theta_t}
	\end{align}}
	If $t<\tau$, go back to Step 2.
	
\end{itemize}

\subsection{Definitions}
To state our result succinctly, we first define some non-standard notations and functions.
From \eqref{Eq: def theta_0}, we know that $\btheta_0\sim\calN(\bmu,\frac{\sigma^2}{n} \bI_d)$ and inspired by \citet{oymak2021theoretical}, we can decompose $\btheta_0$ as  $$\btheta_0=\Big(1+\frac{\sigma}{\sqrt{n}}\xi_0\Big)\bmu+\frac{\sigma}{\sqrt{n}}\bmu^{\bot},$$
where $\xi_0\sim\calN(0,1)$, $\bmu^{\bot}\sim\calN(\bzero,\bI_d-\bmu\bmu^\top)$, and $\bmu^{\bot}$ is perpendicular to $\bmu$ and independent of $\xi_0$ (the details of this decomposition are provided in Appendix \ref{pf of Thm:exact gen GMM}). 

Given a pair of vectors $(\ba,\bb)$, define their correlation coefficient as $\rho(\ba,\bb):=\frac{\langle\ba,\bb\rangle}{\|\ba\|_2\|\bb\|_2}$. 
The correlation coefficient between the estimated and true parameters is 
\begin{align}
     \alpha(\xi_0,\bmu^{\bot})  :=  \rho(\btheta_0,\bmu) =  \frac{1+\frac{\sigma}{\sqrt{n}}\xi_0}{\sqrt{(1 + \frac{\sigma}{\sqrt{n}}\xi_0)^2 + \frac{\sigma^2}{n}\|\bmu^{\bot}\|_2^2}} . \label{Eq:rho_0}
\end{align}
Let  $\beta(\xi_0,\bmu^{\bot})=\sqrt{1-\alpha(\xi_0,\bmu^{\bot})^2}$. We abbreviate  $\alpha(\xi_0,\bmu^{\bot})$ and $\beta(\xi_0,\bmu^{\bot})$ to $\alpha$ and $\beta$ respectively in the following.  Then the normalized vector $\btheta_0 /\|\btheta_0\|_2$ can be decomposed as follows
\begin{align}
	\bar{\btheta}_0:= \frac{\btheta_0 }{\|\btheta_0\|_2}=\alpha\bmu+ \beta \bup, \label{Eq:theta0 decomp mu up}
\end{align}
where $\bup=\bmu^{\bot}/\|\bmu^{\bot}\|_2$. Let
$\bar{\btheta}_0^{\bot}:=(2\beta^2\bmu-2\alpha\beta\bup)/\sigma$, which is a vector perpendicular to $\bar{\btheta}_0$.

Let $\rmQ(\cdot):=1-\Phi(\cdot)$. Define the {\em correlation evolution function} $F_\sigma : [-1,1]\to[-1,1]$ that  quantifies the increase to the correlation (between the current model parameter and the optimal one) and improvement to the generalization error as the iteration counter increases from $t$ to $t+1$:  
\begin{align}
	        F_{\sigma}(x)&:=\frac{J_{\sigma}(x)}{\sqrt{J_{\sigma}^2(x)+K_{\sigma}^2(x)}} ,\quad \mbox{where} \label{eqn:Fsigma}\\ 
	         ~~ J_{\sigma}(x)&:=1-2\rmQ \bigg(\frac{x}{\sigma}\bigg)+\frac{2\sigma x}{\sqrt{2\pi}}\exp\bigg(-\frac{x^2}{2\sigma^2}\bigg), \quad\mbox{and} \label{Eq:J func} \\
	K_{\sigma}(x)&:=\frac{2\sigma \sqrt{1-x^2}}{\sqrt{2\pi}}\exp\bigg(-\frac{x^2}{2\sigma^2}\bigg). \label{Eq:K func}
\end{align}
The $t^{\mathrm{th}}$ iterate of the function  $F_{\sigma}$ is defined recursively as $F_{\sigma}^{(t)}:=F_{\sigma}\circ F_{\sigma}^{(t-1)}$
with $F_{\sigma}^{(0)}(x)=x$.
As shown in Figure~\ref{Fig: Fsig}, for any fixed $\sigma$, we can see that  $F_{\sigma}^{(2)}(x)\geq F_{\sigma}(x)\geq x$ for $x\geq 0$ and $F_{\sigma}^{(2)}(x)<F_{\sigma}(x)<x$ for $x< 0$. It can also be easily deduced that for any $t\in[0:\tau]$, $F_{\sigma}^{(t+1)}(x) \geq F_{\sigma}^{(t)}(x)$ for any $x\geq 0$ and $F_{\sigma}^{(t+1)}(x)<F_{\sigma}^{(t)}(x)$ for any $x< 0$. This important observation implies  that if the correlation $\alpha$, defined in~\eqref{Eq:rho_0}, is positive,  $F_{\sigma}^{(t)}(\alpha)$ increases with $t$; and vice versa. Moreover, as shown in Figure \ref{Fig: Fsig_sig} in Appendix~\ref{pf of Thm:exact gen GMM}, by varying $\sigma$, we observe that a smaller $\sigma$ results in a  larger $|F_{\sigma}(x)|$.
\begin{figure}
	\vspace{-5pt}
	\centering
	\includegraphics[width=0.4\linewidth]{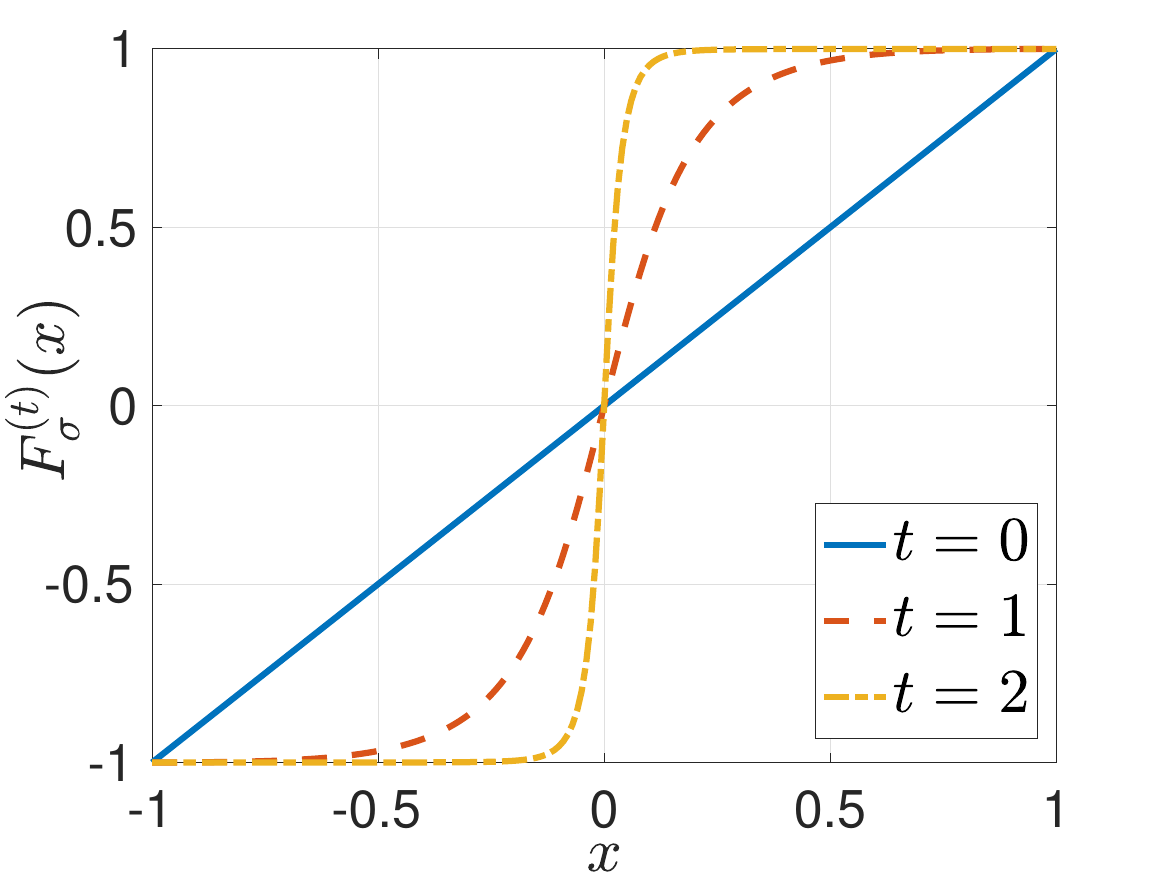}
	\vspace{-7pt}
	\caption{$F_{\sigma}^{(t)}(x)$ versus  $x$ for different $t$ when $\sigma=0.5$.}
	\label{Fig: Fsig}
\end{figure}

\subsection{Main Theorem}\label{SubSec:main thm}

By applying the result in Theorem \ref{Thm:exact gen}, the following theorem provides an exact characterization for the generalization error at each iteration $t$ for $m$ large enough. 
\begin{theorem}[Exact gen-error for iterative SSL under bGMM]\label{Thm:exact gen GMM}
Fix any $\sigma\in\bbR_+$, $d\in\bbN$. 
The gen-error at $t=0$ is given by
\begin{align}
	&\mathrm{gen}_0(P_\bZ, P_\bX, P_{\btheta_0|S_{\rml},S_{\rmu}} )=\frac{d}{n}. \label{Eq:exact gen_0}
\end{align}
Let $\alpha=\alpha(\xi_0,\bmu^{\bot})$. For each $t\in[1:\tau]$, for almost all sample paths (i.e., almost surely), 
\begin{align}
	&\mathrm{gen}_t(P_{\bZ}, P_{\bX}, \{P_{\btheta_k|S_{\rml},S_{\rmu}}\}_{k=0}^t, \{f_{\btheta_k}\}_{k=0}^{t-1} ) \nn\\
	&=\bbE_{\xi_0,\bmu^{\bot}} \bigg[\frac{(m-1)(J_{\sigma}^2(F_{\sigma}^{(t-1)}(\alpha)) +  K_{\sigma}^2(F_{\sigma}^{(t-1)}(\alpha)))}{m\sigma^2}  -\frac{J_{\sigma}(F_{\sigma}^{(t-1)}(\alpha))}{\sigma^2}\bigg]+o(1), \label{Eq:exact gent bGMM}
\end{align}
where $o(1)$ is a term that vanishes as $m\to\infty$.
\end{theorem}
The proof of Theorem \ref{Thm:exact gen GMM} is provided in Appendix \ref{pf of Thm:exact gen GMM}. Several remarks are in order.

First, the gen-error at $t=0$ corresponds to the asymptotic result of supervised maximum likelihood estimation in works by \citet{akaho2000nonmonotonic} and \citet{aminian2021exact}.
We numerically plot the quantity  in \eqref{Eq:exact gent bGMM}, $g_{\sigma}^{(m)}(x):=((m-1)(J_{\sigma}^2(x)+K_{\sigma}^2(x))-mJ_{\sigma}(x))/\sigma^2$, for $x\in[-1,1]$ in Figure \ref{Fig: g_sig_m} in Appendix \ref{pf of Thm:exact gen GMM}, which shows that for all $\sigma_1>\sigma_2$, $g_{\sigma_1}^{(m)}(x)>g_{\sigma_2}^{(m)}(x)$ when $x>0$. From \eqref{Eq:rho_0}, we can see that $\alpha$ is close to 1 of high probability, which means that $\sigma\mapsto g_{\sigma}(x)$ is monotonically increasing in $\sigma$ with high probability. As a result, \eqref{Eq:exact gent bGMM} increases as $\sigma$ increases. This is consistent with the intuition that when the training data have larger overlap between classes, it is more difficult to generalize well. Moreover, $F_{\sigma}^{(t)}(\alpha)$ is also close to 1 of high probability, and thus \eqref{Eq:exact gent bGMM} saturates with $t$ quickly.

Second, by ignoring the $o(1)$ term, 
we compare the theoretical $\mathrm{gen}_t$ (cf.~\eqref{Eq:exact gen_0} and \eqref{Eq:exact gent bGMM}) and the empirical gen-error from the repeated synthetic experiments with $d=2$, $n=10$ and $m=1000$, as shown in Figure \ref{Fig:exact gen diff sig}. It can be seen that the theoretical  $\mathrm{gen}_t$   matches the empirical gen-error well, which means that the characterization in \eqref{Eq:exact gent bGMM} serves as a useful rule-of-thumb for how the gen-error changes over the SSL iterations.
When the variance is small (e.g., $\sigma^2=0.6^2$), as shown in Figure~\ref{Fig:exactgen_d=2_sig06}, the gen-error decreases significantly from $t=0$ to $t=1$ and then quickly converges to a non-zero constant.  Recall the correlation evolution function $F_{\sigma}$ in \eqref{eqn:Fsigma}. Given any pair of $(\xi_0,\bmu^{\bot})$, if $\alpha(\xi_0,\bmu^{\bot})>0$, $F_\sigma^{(t)}(\alpha(\xi_0,\bmu^{\bot}))>F_\sigma^{(t-1)}(\alpha(\xi_0,\bmu^{\bot}))$ for all $t\in[1:\tau]$, as shown in Figure~\ref{Fig: Fsig}. This means that if the quality of the labelled data $S_\rml$ is reasonably good, by using $\btheta_0$ which is learned from $S_\rml$, the generated pseudo-labels for the unlabelled data are largely correct. Then the subsequent parameters $\btheta_t$ for $t\ge1$ learned from the large number of pseudo-labelled data examples can improve the generalization error. With sufficiently large amount of training data,   algorithm   converges at very early stage.  In addition, for more general cases (e.g.,  non-diagonal class covariance matrices), it takes more iterations for the gen-error to reach a plateau, as shown in Figure~\ref{Fig:emp_gen_cov}.

When the variance is large (e.g. $\sigma^2=3^2$), as shown in Figure~\ref{Fig:exactgen_d=2_sig3},  the gen-error increases with iteration $t$. The result shows that when the overlap between different classes is large enough, using the unlabelled data may not be able to improve the generalization performance. The intuition is that at the initial iteration with a limited number of labelled data, the learned parameter $\btheta_0$ cannot pseudo-label the unlabelled data with sufficiently high accuracy. Thus, the unlabelled data is not labelled well by the pseudo-labelling operation and hence, cannot help to improve the generalization error. To gain more insight, in Figure \ref{Fig:exactgen_vs_sig}, we numerically plot $\mathrm{gen}_1$ versus different values of $\sigma$ under the same setting. It is interesting to find that there exists a $\sigma_0$ such that for $\sigma<\sigma_0$, $\mathrm{gen}_1<\mathrm{gen}_0$, which means the gen-error can be reduced with the help of abundant unlabelled data, while for $\sigma>\sigma_0$, using the unlabelled data can even harm the generalization performance.

\begin{figure}[!t]
	\centering
	\subfigure[$\sigma=0.6$]{
	\includegraphics[width=0.4\linewidth]{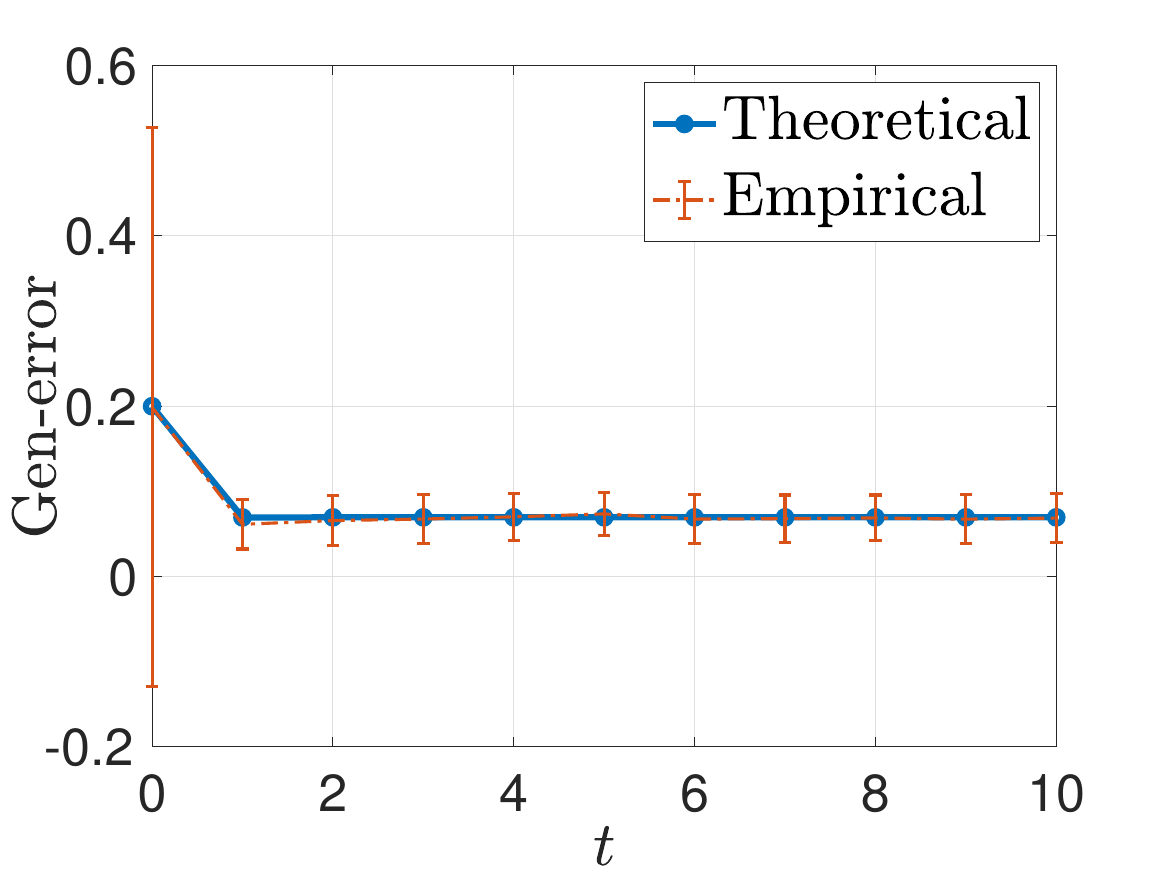}
	\label{Fig:exactgen_d=2_sig06}
	}
	\hspace{0pt}
	\subfigure[$\sigma=3$]{
		\includegraphics[width=0.4\linewidth]{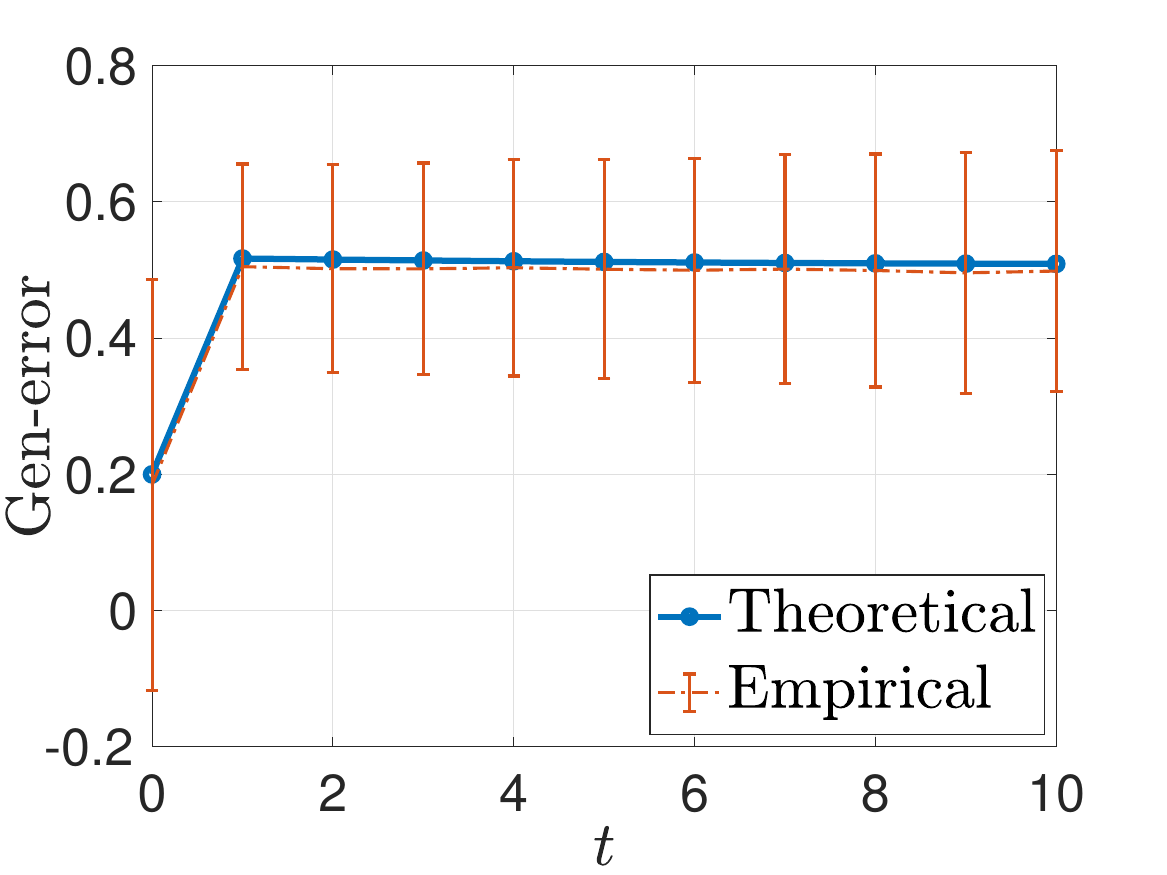}
		\label{Fig:exactgen_d=2_sig3}
	}
	\caption{Comparison of the theoretical $\mathrm{gen}_t$ and the empirical gen-error at each iteration~$t$.}
	\label{Fig:exact gen diff sig}
\end{figure}
\begin{figure}[!t]
	\centering
	\begin{minipage}{0.4\linewidth}
		\centering
		\includegraphics[width=1\linewidth]{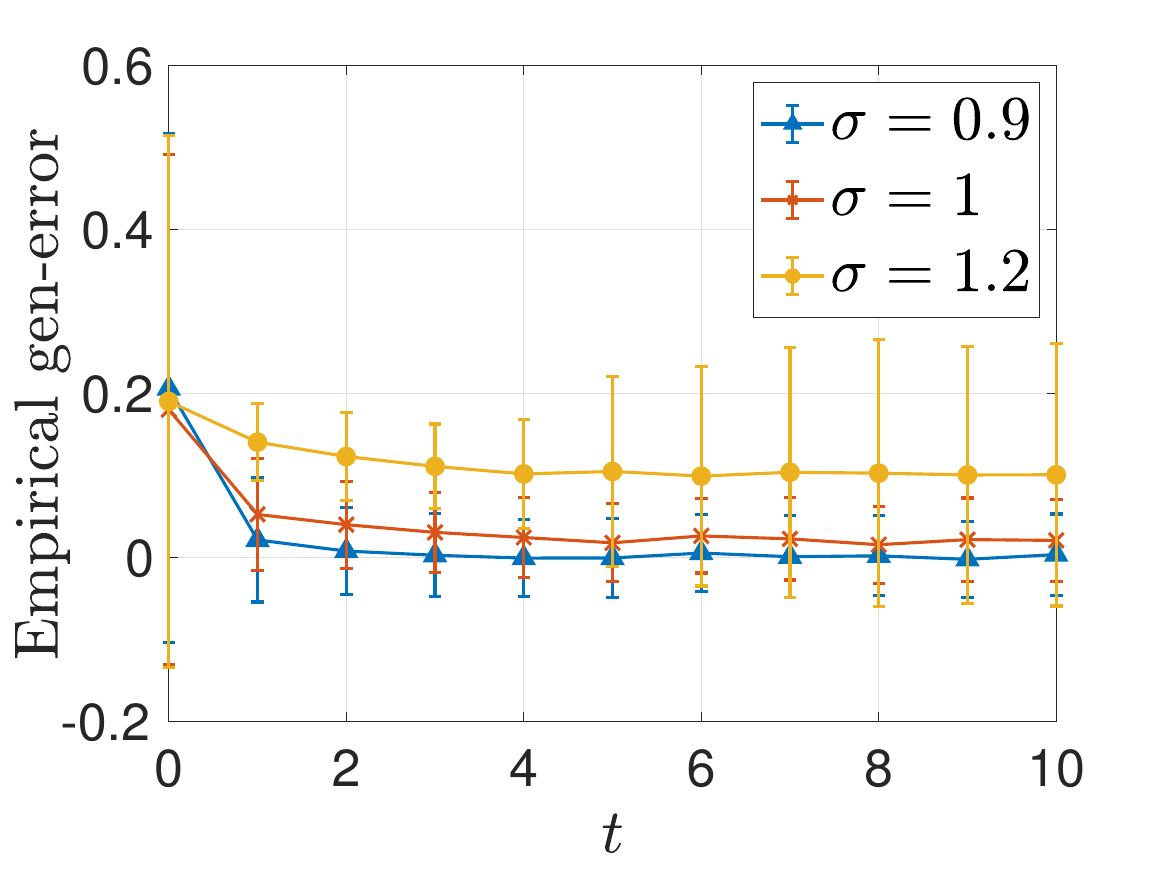}
		\vspace{-20pt}
 		\caption{Empirical gen-error with covariance matrix $\sigma^2\times[0.6, 0.3; 0.3, 0.8]$.}
		\label{Fig:emp_gen_cov}
	\end{minipage}
	\hspace{5pt}
	\begin{minipage}{0.4\linewidth}
	\centering
	\includegraphics[width=1\linewidth]{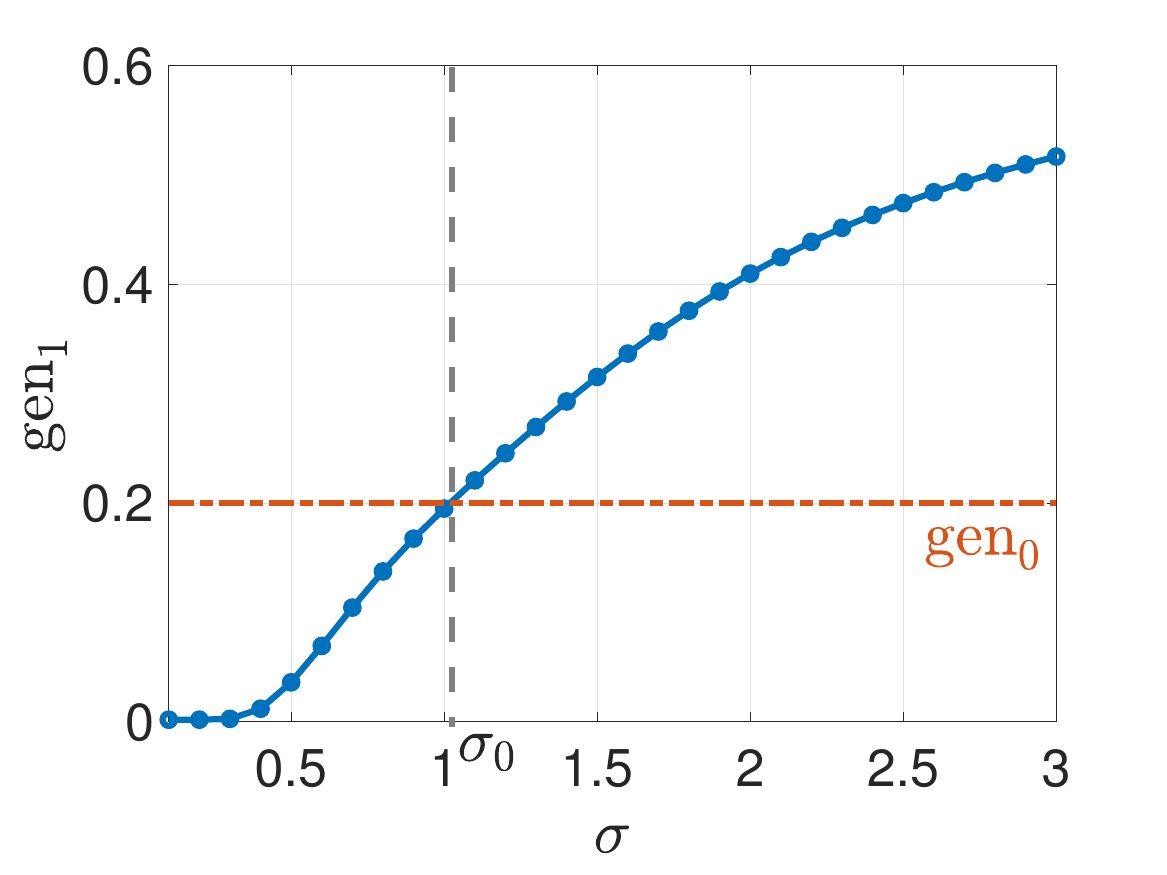}
	\vspace{-20pt}
	\caption{ Theoretical gen-error at $t\!=\!1$ versus  different std.\ devs.\ $\sigma\in[0.1,3]$. 
	}
	\label{Fig:exactgen_vs_sig}
	\end{minipage}
	\vspace{-15pt}
\end{figure}

Third, let us examine the effect of $n$, the number of labelled training samples. By expanding~$\alpha$, defined in~\eqref{Eq:rho_0}, using a Taylor series, we have
\begin{align}
	\alpha
	=1-\frac{\sigma^2}{2n}\|\bmu^{\bot}\|_2^2+o\bigg(\frac{1}{n}\bigg).  \label{Eq:alpha taylor}
\end{align}
It can be seen that as $n$ increases, $\alpha$ converges to $1$ in probability. Suppose the dimension $d=2$ and $\bmu=(1,0)$. Then $\bmu^{\bot}=[0,\mu_2^{\bot}]$ where $\mu_2^{\bot}\sim\calN(0,1)$. By letting $m\to\infty$, the gen-error $\mathrm{gen}_1$ (cf. \eqref{Eq:exact gent bGMM}) can be rewritten as
\begin{align}
	  \mathrm{gen}_1
	\approx \int_{-\sqrt{2}}^{\sqrt{2}}\sqrt{\frac{n}{\pi\sigma^2} }\, e^{-\frac{ny^2}{\sigma^2}} \, g_{\sigma}^\infty(1-y^2) ~\rmd y, 
\end{align}
where $g_{\sigma}^{(\infty)}(1-y^2):=(J_\sigma^2(1-y^2)+K_\sigma^2(1-y^2)-J_\sigma(1-y^2))/\sigma^2$ and thus, $\mathrm{gen}_1$ is a decreasing function of $n$. We further deduce that for any $t$, $\mathrm{gen}_t$ is decreasing in $n$.

Fourth, we consider an ``enhanced'' scenario in which the labelled data in $S_{\rml}$ are reused in each iteration. Set $w=\frac{n}{n+m}$ in~\eqref{Eq: def of empirical risk}. We can  extend Theorem \ref{Thm:exact gen GMM} to   Corollary \ref{Coro:reuse labeldata gen bound}  provided in Appendix \ref{Append:reuse S_l}.
It can be seen from Figure~\ref{Fig:gen_bd_with_labelled} that $\mathrm{gen}_t$ still decreases from $t=0$ to $1$ and saturates afterwards. We find that when $\sigma=0.6$, $n=10$, $m=1000$, the gen-error is almost the same as that one in Figure~\ref{Fig:exactgen_d=2_sig06}, which means that for large enough $\frac{m}{n}$, reusing the labelled data does not necessarily help to improve the generalization performance. Moreover, when $m=100$, $\mathrm{gen}_t$ is higher than that for $m=1000$, which coincides with the intuition that increasing the number of unlabelled data helps to reduce the generalization error.

Fifth, it is natural to wonder what the effect is when $m$, the number of unlabelled data examples, is held fixed and $n$, the number of labelled data examples, increases. In  Figure~\ref{Fig: exact gen0vsgen1}, we numerically plot $\mathrm{gen}_0=\frac{d}{n}$ in~\eqref{Eq:exact gen_0} and (the theoretical) $\mathrm{gen}_1$ in~\eqref{Eq:exact gent bGMM} for $n$ ranging from $2$ to $50$, $m=1000$, $\sigma=0.6$ and $d=2$. As $n$ increases, $\mathrm{gen}_0$ and $\mathrm{gen}_1$ both decrease, which is as expected. However, when $n$ is larger than a certain value ($30$ in this case),  we find that $\mathrm{gen}_0$ becomes smaller than $\mathrm{gen}_1$. This implies that with sufficiently many labelled training data, the generalization error based on the labelled training data is already sufficiently low, and incorporating the pseudo-labelled data in fact adversely affects the generalization error. Understanding this phenomenon precisely is an interesting avenue for future work.
\begin{figure}[t]
\centering
\includegraphics[width=0.4\linewidth]{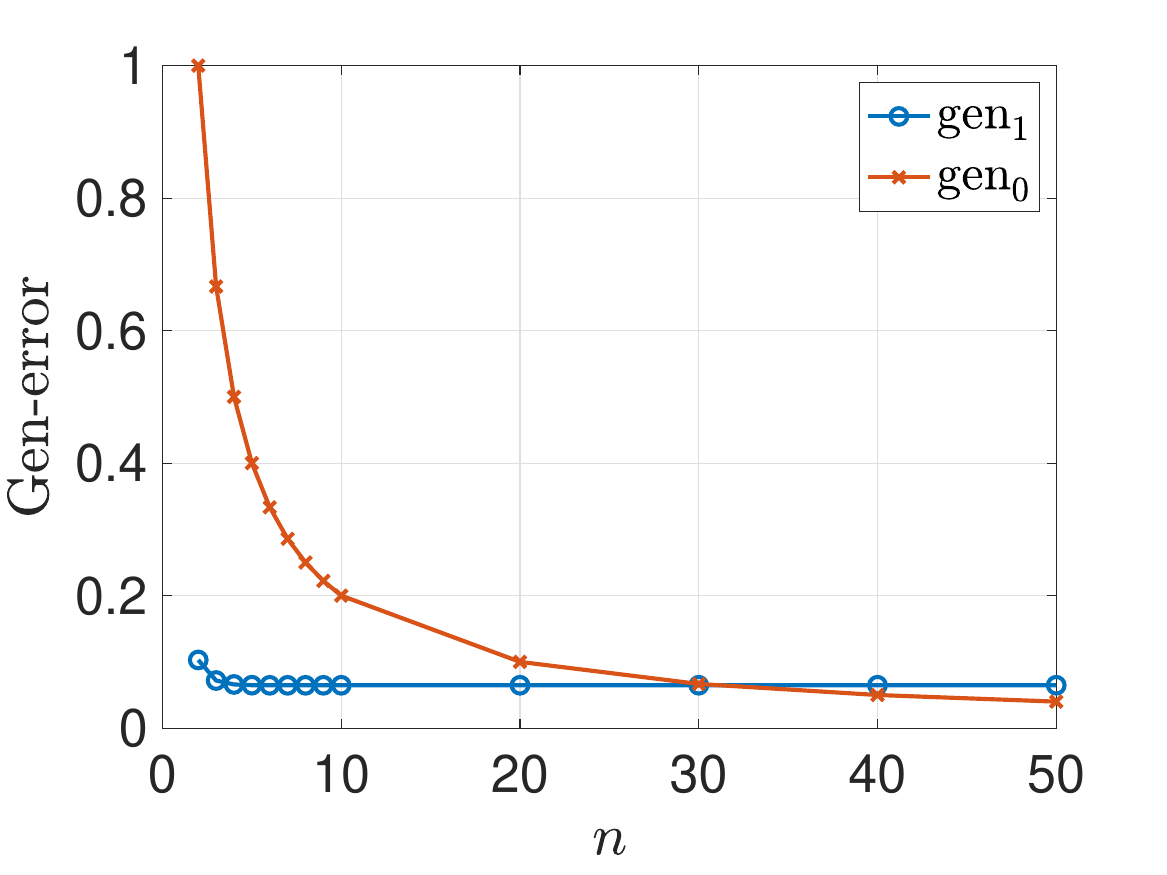}
		\caption{$\mathrm{gen}_0$ vs. $\mathrm{gen}_1$ for different $n$.}
		\label{Fig: exact gen0vsgen1}
\end{figure}

Finally, to verify the validity of the gen-error upper bound in Theorem~\ref{Coro: sub Gau gen}, we further apply the bound to this setup and prove that the upper bound exhibits   similar behaviour of the evolution of gen-error as $t$ increases. See Appendix \ref{pf of Thm:gen bound GMM}.

\begin{figure}[!t]
	\vspace{-3pt}
	\centering
	\begin{minipage}{0.4\linewidth}
		\centering
		\includegraphics[width=1\linewidth]{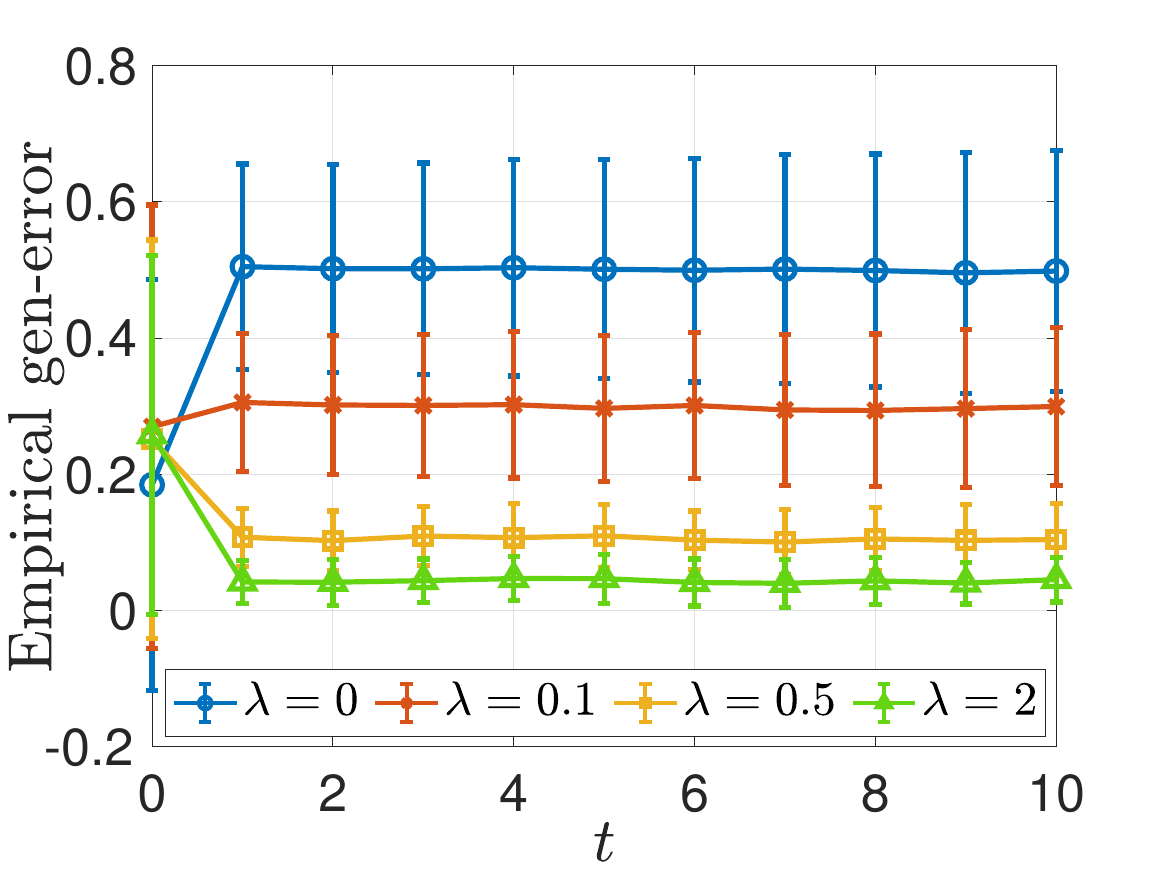}
		\caption{Empirical gen-error versus  $t$ for $\sigma=3$ for different $\lambda$.}
		\label{fig:empgenvstdifflamsig3}
	\end{minipage}
	\hspace{5pt}
	\begin{minipage}{0.4\linewidth}
		\centering
		\includegraphics[width=1\linewidth]{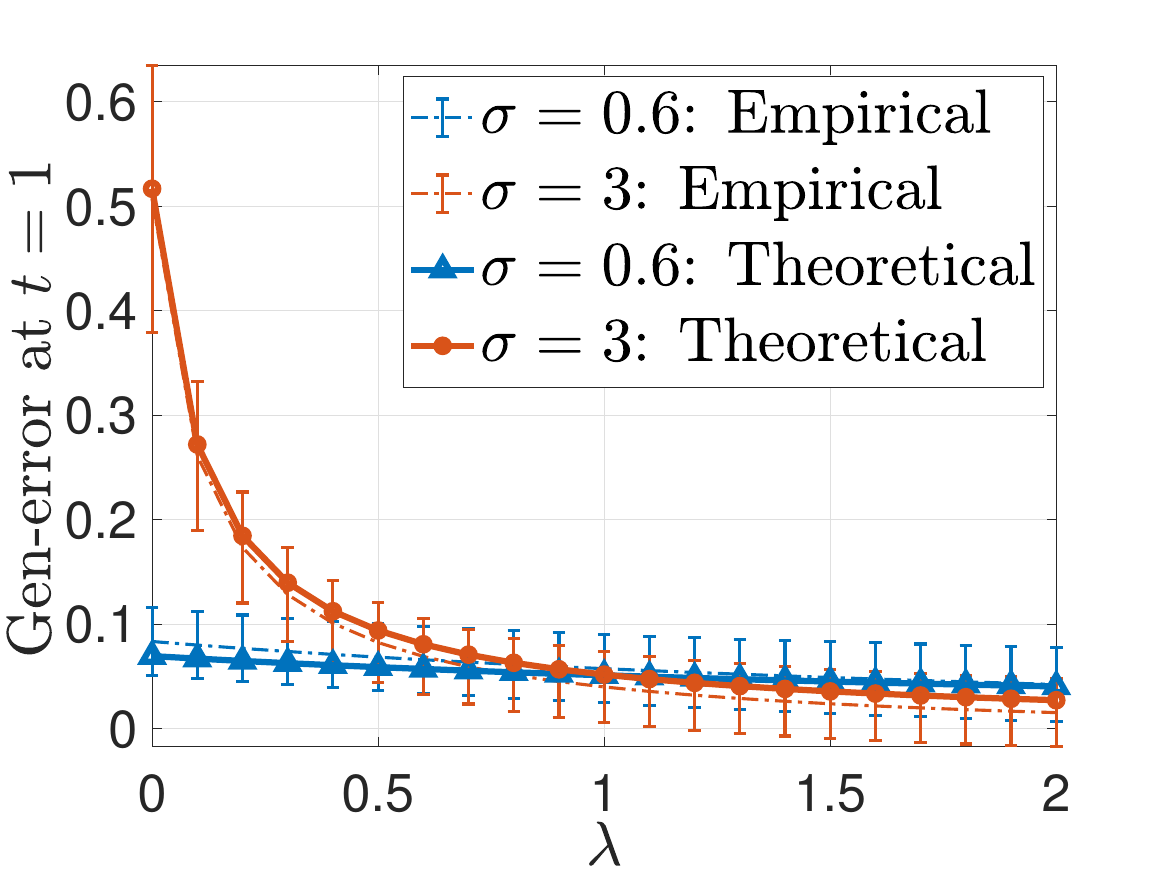}
		\caption{Theoretical and empirical $\mathrm{gen}_1$ vs.\  $\lambda$ for different~$\sigma$.}
		\label{fig:gen1_bd_and_emp_diffsig}
	\end{minipage}
\end{figure}

\section{Improving the Gen-Error for Difficult Problems via Regularization}\label{Sec:regularization}
In Section \ref{SubSec:main thm}, it is shown that for difficult classification problems with large class conditional variance, the gen-error increases after using pseudo-labelled data. The reason is that the learned initial parameter $\btheta_0$ can only generate low-accurate pseudo-labels and thus the pseudo-labelled data cannot help improve the generalization performance.  In this section, we prove that by adding regularization to the loss function,  we can mitigate the undesirable increase of gen-error across the pseudo-labelling iterations.

Since $\mathrm{gen}_0$ in \eqref{Eq:exact gen_0} does not depend on data variance $\sigma^2$, here we focus on subsequent iterations $t\in[1:\tau]$. By considering the $\ell_2$ regularization (i.e., adding $\frac{\lambda}{2}\|\btheta\|_2^2$ to \eqref{Eq:L_hatS}), we obtain the new parameters (cf.~\eqref{Eq:theta_t}) as follows
\begin{align}
	\btheta_t^{\mathrm{reg}}=\frac{\btheta_t}{1+\sigma^2\lambda}, \quad\forall\, t\in[1:\tau]. \label{Eq: def theta_t reg}
\end{align}
The  derivations are provided in Appendix \ref{pf of Thm:gen bound GMM reg}. By applying Theorem \ref{Thm:exact gen GMM}, the following theorem provides a characterization for the gen-error of the case with regularization at each iteration $t$ for $m$ large enough. Let $\mathrm{gen}_t^{\reg}$ denotes the gen-error of the case with regularization and we drop the fixed quantities $(P_\bZ, P_\bX, \{P_{\btheta_k|S_{\rml},S_{\rmu}}\}_{k=0}^t, \{f_{\btheta_k}\}_{k=0}^{t-1} )$ for notational simplicity.
\begin{theorem}[Gen-error with regularization]\label{Thm:gen bound GMM reg}
	Fix any  $d\in\bbN$, and $\sigma, \lambda \in\bbR_+$.
	The gen-error at any $t\in[1:\tau]$ is 
	{\setlength{\abovedisplayskip}{6pt}
		\setlength{\belowdisplayskip}{4pt}
	\begin{align}
		\mathrm{gen}_t^{\reg}=\frac{\mathrm{gen}_t}{1+\sigma^2\lambda}.
		 \label{Eq:gent reg}
	\end{align}}
\end{theorem}
The proof of Theorem~\ref{Thm:gen bound GMM reg} is provided in Appendix~\ref{pf of Thm:gen bound GMM reg}. From~\eqref{Eq:gent reg}, we   observe that as $\lambda$ increases, the gen-error decreases. 
In Figure~\ref{fig:empgenvstdifflamsig3}, we first empirically show that regularization can help mitigate the increase of gen-error during SSL iterations for hard-to-distinguish classes by comparing the empirical gen-error under $\lambda=0, 0.1, 0.5, 2$ when $\sigma=3$ and $d=2$. Then in Figure \ref{fig:gen1_bd_and_emp_diffsig}, we plot the theoretical gen-error in \eqref{Eq:gent reg} versus $\lambda$ when $t=1$ for the cases with small and large  variances, i.e., $\sigma^2=0.6^2$ and $\sigma^2=3^2$. We also  compare the theoretical results with the empirical gen-errors, which turn out to corroborate the theoretical ones. For the case with smaller variance, the improvement on gen-error is barely visible as $\lambda$ increases.  For the case with larger variance, the decrease of the gen-error is more pronounced, which implies that $\ell_2$-regularization can effectively mitigate the impact on the gen-error induced by large class overlapping and pseudo-labels with low accuracy.

The adept reader might naturally wonder why  one would not set $\lambda\to\infty$ in \eqref{Eq:gent reg}, which results in the gen-error tending to zero, which presumably is a desirable phenomenon. However, ultimately, what we wish to control is the expected population risk, which, according to \eqref{eqn:decomp}, is the sum of the expected empirical risk on the training data and the gen-error. Even if the gen-error is zero, the expected population risk might be large. Hence,   as~$\lambda$ increases, we see a tradeoff between the gen-error and the empirical risk.

\begin{table*}[!t]\small
	\centering
	\begin{minipage}{\linewidth}
		\centering
	\begin{tabular}{c|c|c|c}
		\toprule[2pt]
		Classes & RGB-mean $\ell_2$ distance & RGB-variance $\ell_2$ distance & Difficulty \\ \midrule[2pt]
		horse-ship & 0.0180  & 3.90e-05 & Easy    \\ \hline
		automobile-truck &  0.0038 & 7.06e-05  & Moderate \\ \hline
		cat-dog & 0.0007 & 4.95e-05 & Challenging  \\ \hline
	\end{tabular}
	\caption{The $\ell_2$ distances between the RGB-mean and RGB-variance of different pairs of classes from the CIFAR10 dataset.}
	\label{Table:RGB mean var}
	\vspace{-5pt}
	\end{minipage}
	\begin{minipage}{\linewidth}
		\centering
	\begin{tabular}{c !{\vrule width 2pt} c|c|c|c|c|c}
		\toprule[1pt]
		Classes & \makecell[c]{horse\\$\to$ship}  & \makecell[c]{ship \\ $\to$horse} & \makecell[c]{automobile \\ $\to$truck}  & \makecell[c]{truck \\ $\to$automobile} & \makecell[c]{cat \\ $\to$dog} & \makecell[c]{dog \\ $\to$cat} \\ \hline
		Number & 17 & 3 & 61 & 64  & 93 & 137  \\ 
		\hline
		Difficulty & \multicolumn{2}{c|}{Easy} & \multicolumn{2}{c|}{Moderate}  & \multicolumn{2}{c}{Challenging} \\ 
		\bottomrule[1pt]
	\end{tabular}
	\caption{Number of images misclassified out of 1000 \citep{liu2018unsupervised}.}
	\label{Table:confusion matrix}
	\end{minipage}
\end{table*}

\section{Experiments on Benchmark Datasets}\label{Sec:experiments}


 To further illustrate that our theory is indeed behind the empirical behaviour of the iterative self-training with pseudo-labelling, in this section, we conduct experiments on real-world benchmark datasets, which demonstrates that our theoretical results on the bGMM example can also reflect the training dynamics on more realistic real-world tasks. The code  to reproduce all the experiments can be found at \url{https://github.com/HerianHe/GenErrorSSL_2022.git}.

Recall that in the bGMM example, a higher standard deviation $\sigma$ represents a higher in-class variance, larger class-overlap, and consequently higher difficulty in classification. By a whitening argument, this also holds for bGMMs with non-isotropic covariance matrices. In our experiments on real-world data, we use the difficulty level of classification to  mimic different in-class variances of bGMM.  We pick two easy-to-distinguish class pairs (``automobile" and ``truck", ``horse'' and ``ship'') from the CIFAR-10 dataset  \citep{krizhevsky2009learning}  as an analogy to bGMM with small in-class variance, and one difficult-to-distinguish class pair (``cat'' and ``dog'') from the same dataset as an analogy to bGMM with large in-class variance. 
Furthermore, to extend the analogy to multi-class classification, we conduct experiments on the 10-class MNIST dataset to gain more intuition.

We train deep neural networks (DNNs) via an iterative self-learning strategy (under the same setting as Figure \ref{Fig:system}) to perform binary and multi-class classification. In the first iteration, we only use a few labelled data examples to initialize the DNN with  a sufficient number of epochs.
In the subsequent iterations, we first sample a subset of unlabelled data and generate pseudo-labels for them via the model trained from the previous iteration. Then we update the model for a small number of epochs with both the labelled and pseudo-labelled data.

\textbf{Experimental settings:}
For binary classification, we collect pairs of classes of images, i.e., ``automobile" and ``truck", ``horse'' and ``ship'', and ``cat'' and ``dog'' from the CIFAR10 \citep{krizhevsky2009learning} dataset. In this dataset, each class has $5000$ images for training and $1000$ images for testing. For each selected pair of classes, we manually divide the $10000$ training images into two sets: 
the labelled training set with $500$ images and the unlabelled training set with $9500$ images. We train a convolutional neural network, ResNet-10 \citep{he2016deep}, and use stochastic gradient descent (SGD) optimizer to minimize the cross-entropy loss. In PyTorch, the cross-entropy loss is defined as the negative logarithm of the output softmax probability corresponding to the true class, which is analogous to the NLL of the data under the parameters of the neural network.
For the task with $\ell_2$-regularization, we train the neural network by setting different weight decay parameters (equivalent to $\lambda \times$ learning rate). In each pseudo-labelling iteration, we sample $2500$ unlabelled images. The complete training procedure lasts for $50$ self-training iterations.

We further validate our theoretical contributions on a multi-class classification problem in which  we train a ResNet-6 model with the cross-entropy loss to perform $10$-class handwritten digits classification on the MNIST \citep{lecun1998gradient} dataset. We sample $51000$ images from the training set, which contains $6000$ images for each of the ten classes. We divide them into two sets, i.e., a labelled training set with $1000$ images and an unlabelled set with $50000$ images. The optimizer and training iterations follow those in the aforementioned binary classification tasks without regularization.

\begin{figure*}[t]
	\vspace{-10pt}
	\centering
	\subfigure[``horse-ship'': gen-error]{
		\includegraphics[width=0.23\linewidth]{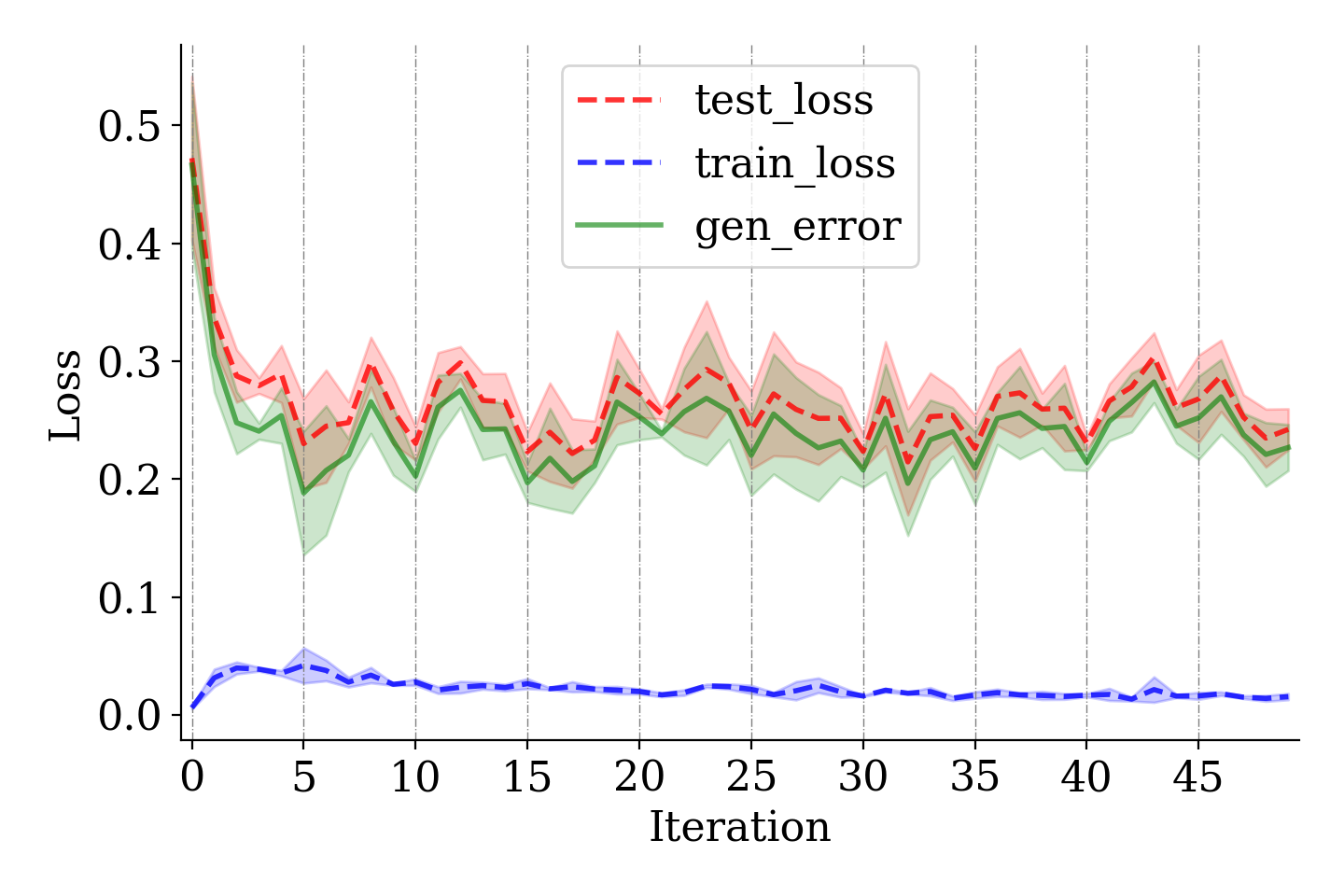}
		\label{Fig:bin_cifar10_horse_ship loss}}
	\subfigure[``automobile-truck'': gen-error]{
		\includegraphics[width=0.23\linewidth]{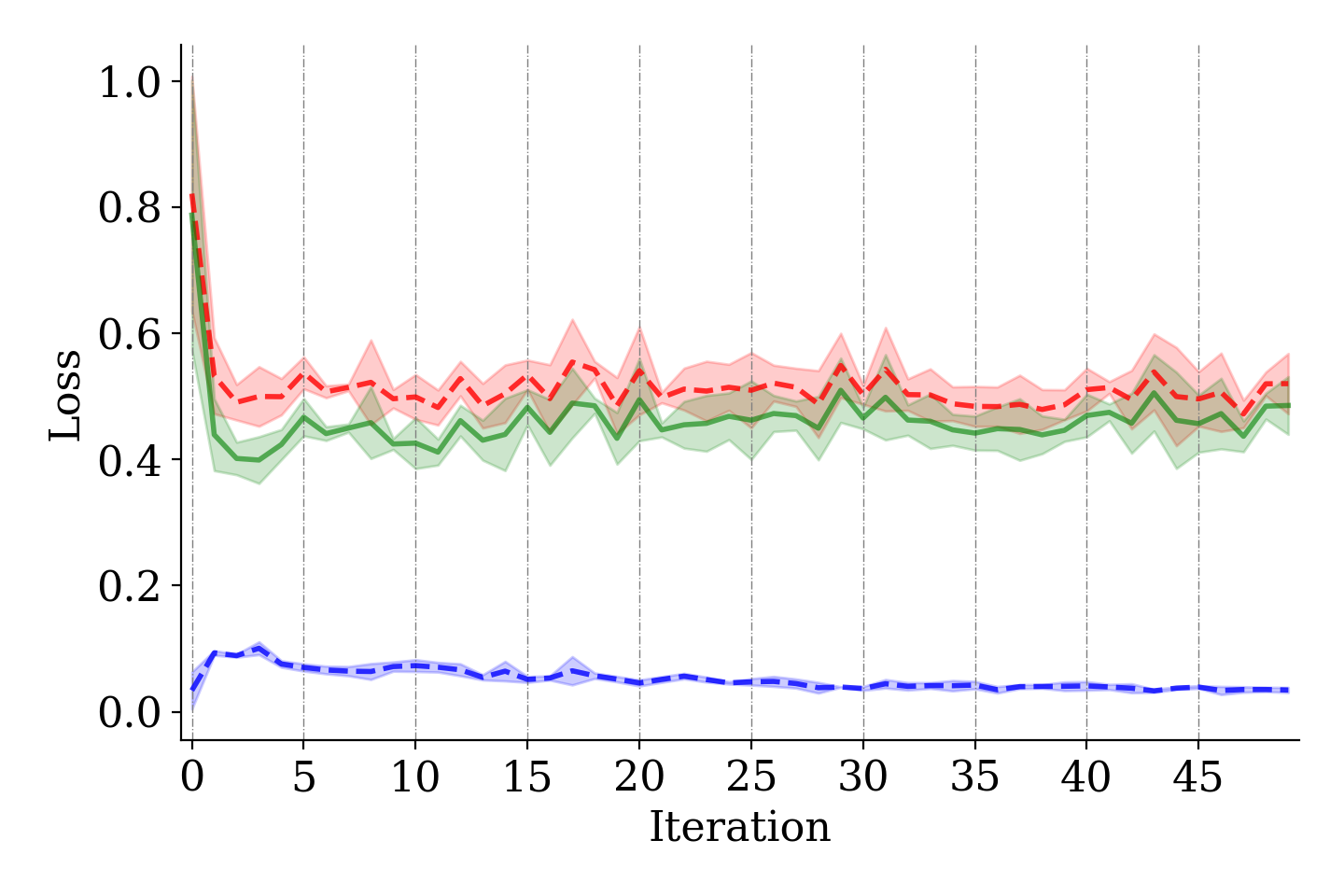}
		\label{Fig:bin_cifar10_car_truck loss}}
	\hspace{-10pt}
	\subfigure[MNIST: gen-error]{
		\includegraphics[width=0.23\linewidth]{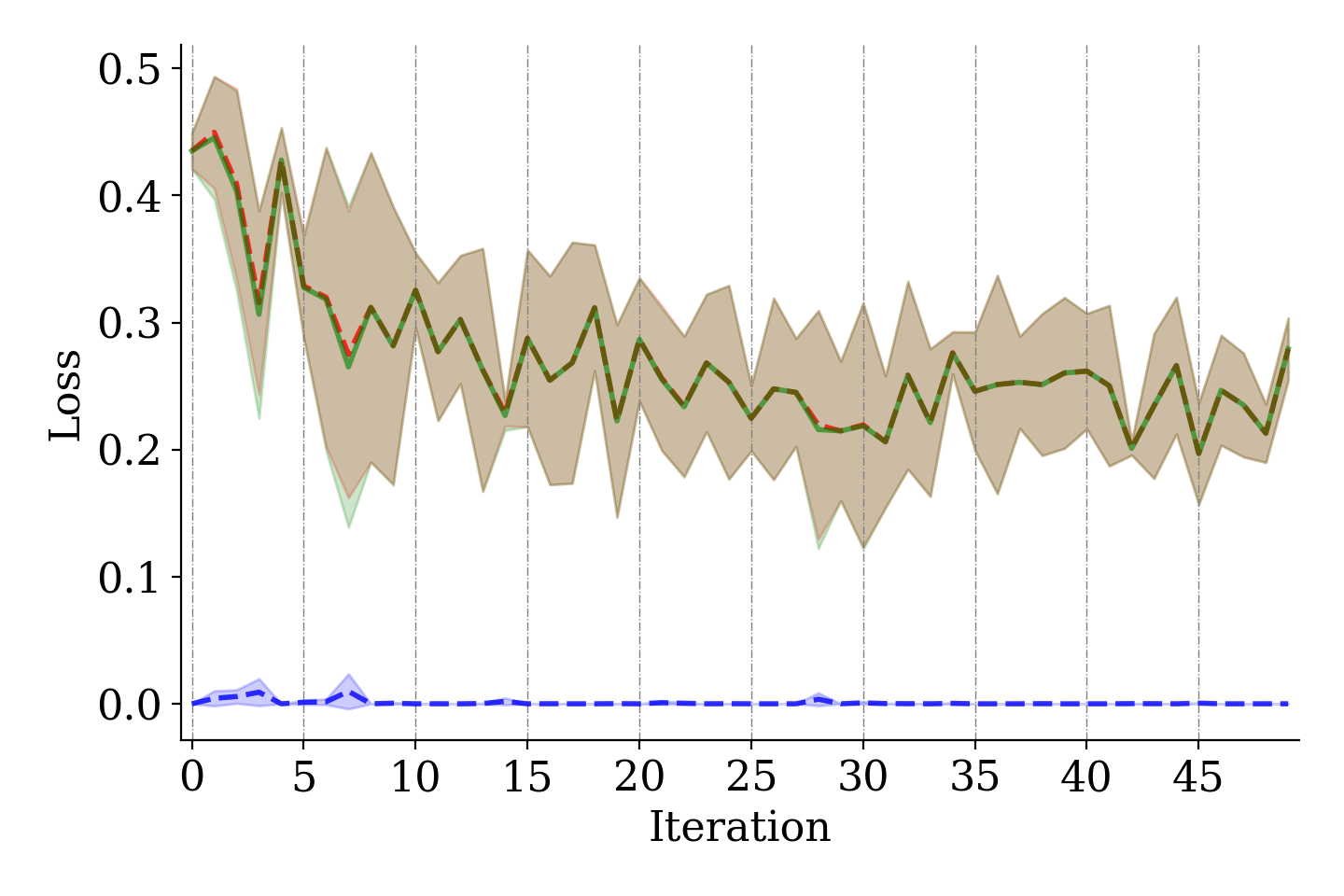}
		\label{Fig:mul_mnist loss}}
	\hspace{-5pt}
	\subfigure[``cat-dog'': gen-error]{
		\includegraphics[width=0.23\linewidth]{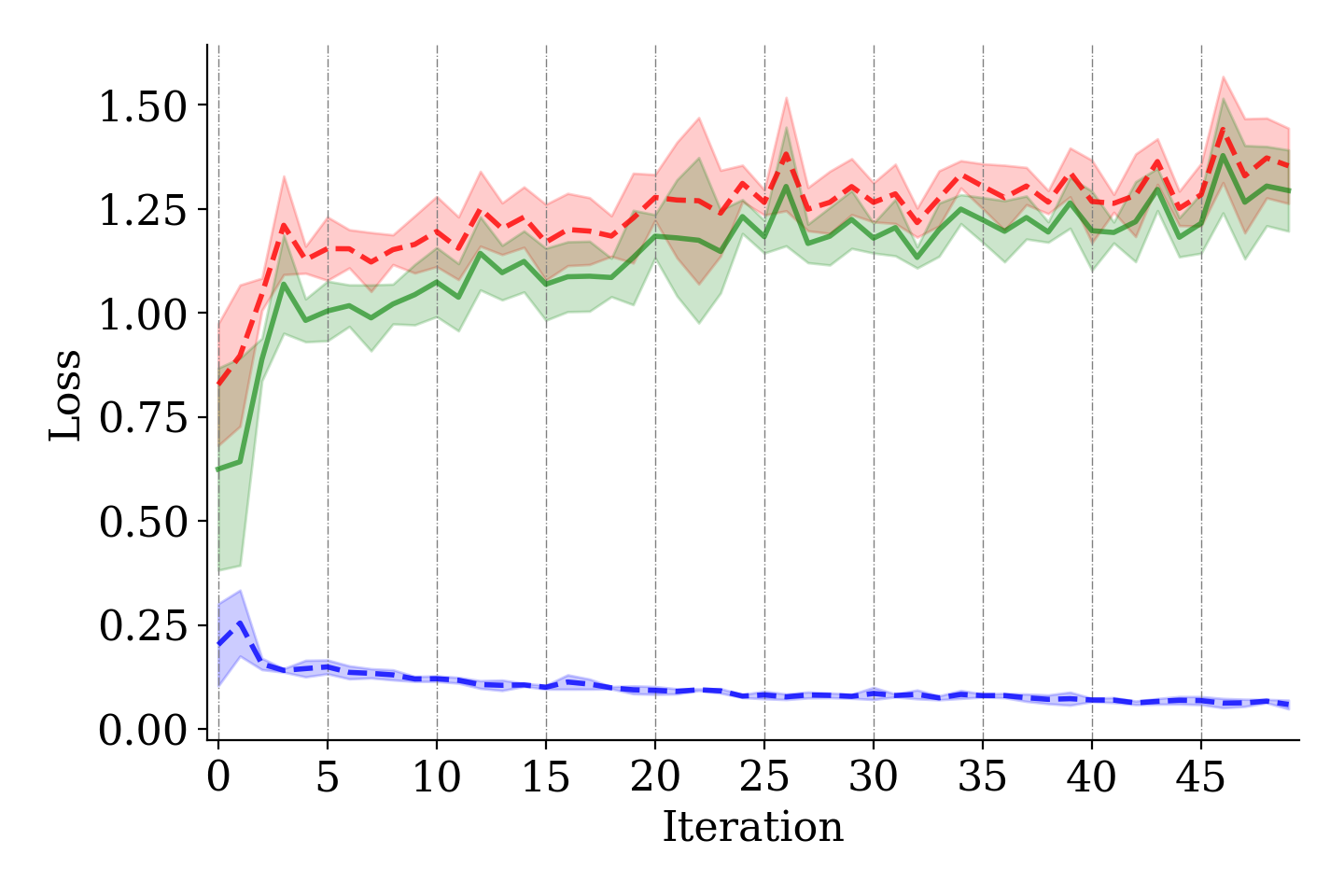}
		\label{Fig:bin_cifar10_cat_dog loss}}
	\hspace{-10pt}
	\raisebox{20pt}{\subfigure[``horse-ship'': accuracy]{
			\includegraphics[width=0.23\linewidth]{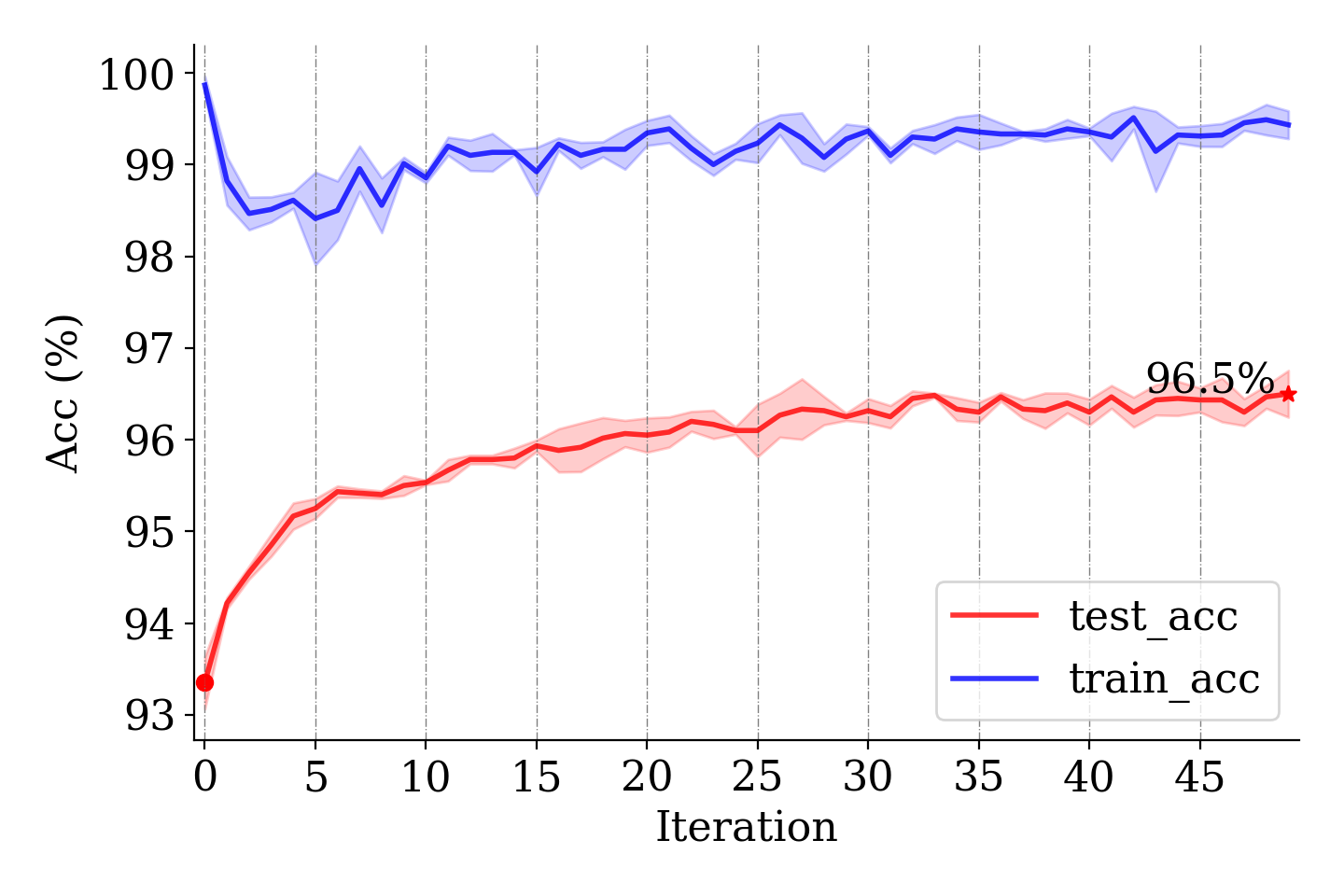}
			\label{Fig:bin_cifar10_horse_ship acc}}}
	\hspace{-0pt}
	\raisebox{20pt}{\subfigure[``automobile-truck'': accuracy]{
			\includegraphics[width=0.23\linewidth]{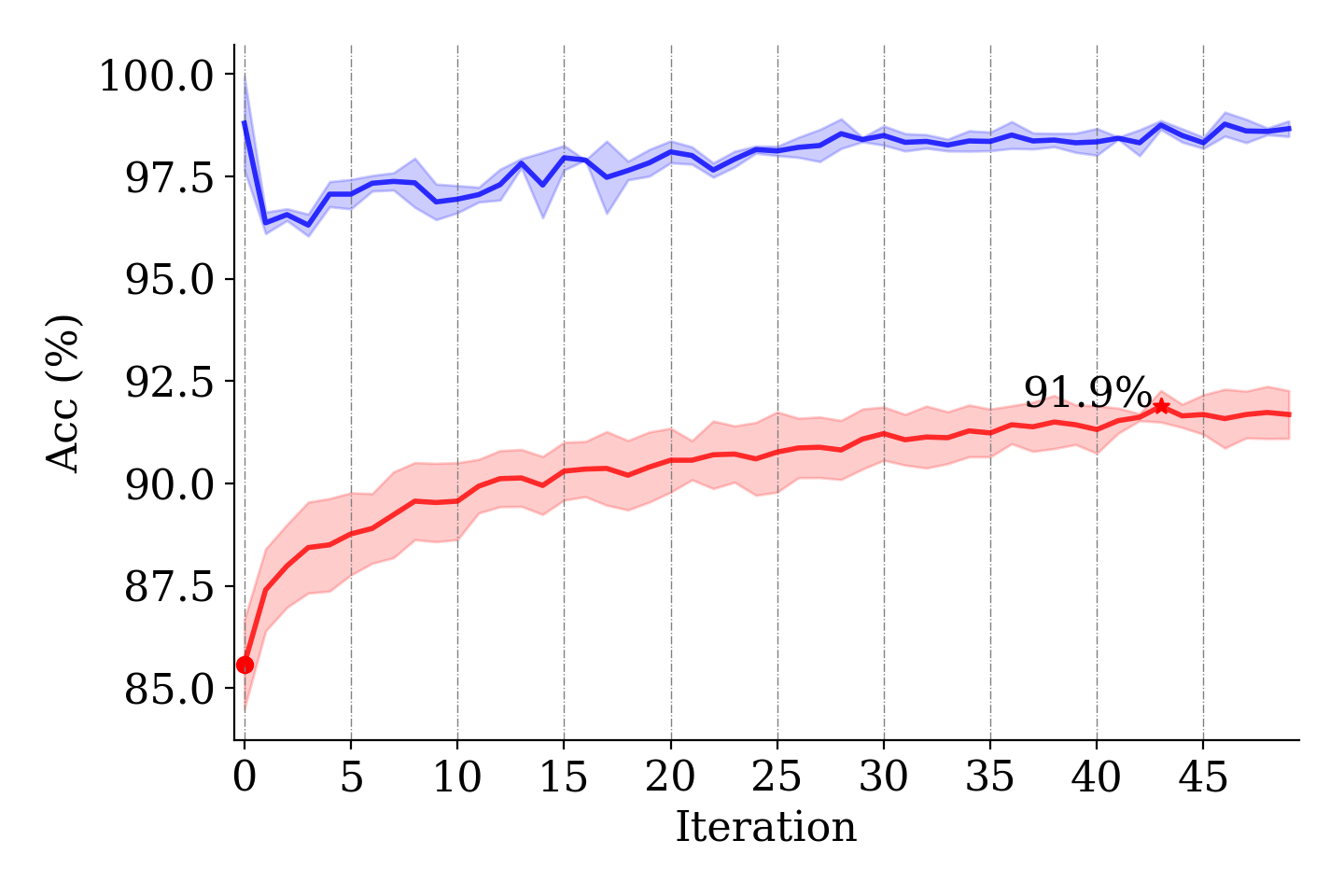}
			\label{Fig:bin_cifar10_car_truck acc}}}
	\hspace{-10pt}
	\raisebox{20pt}{\subfigure[MNIST: accuracy]{
			\includegraphics[width=0.23\linewidth]{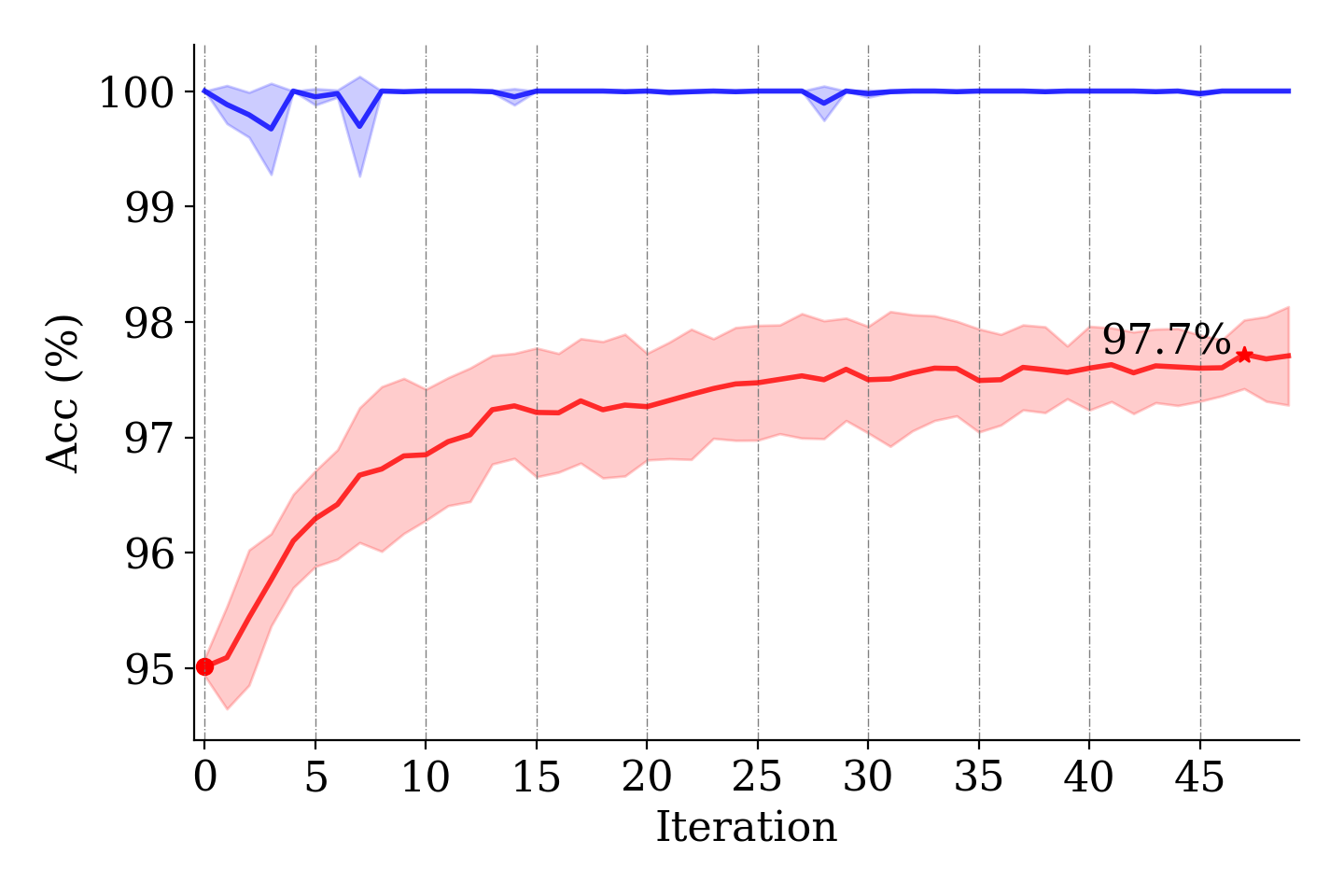}
			\label{Fig:mul_mnist acc}}}
	\hspace{-5pt}
	\raisebox{20pt}{\subfigure[``cat-dog'': accuracy]{
			\includegraphics[width=0.23\linewidth]{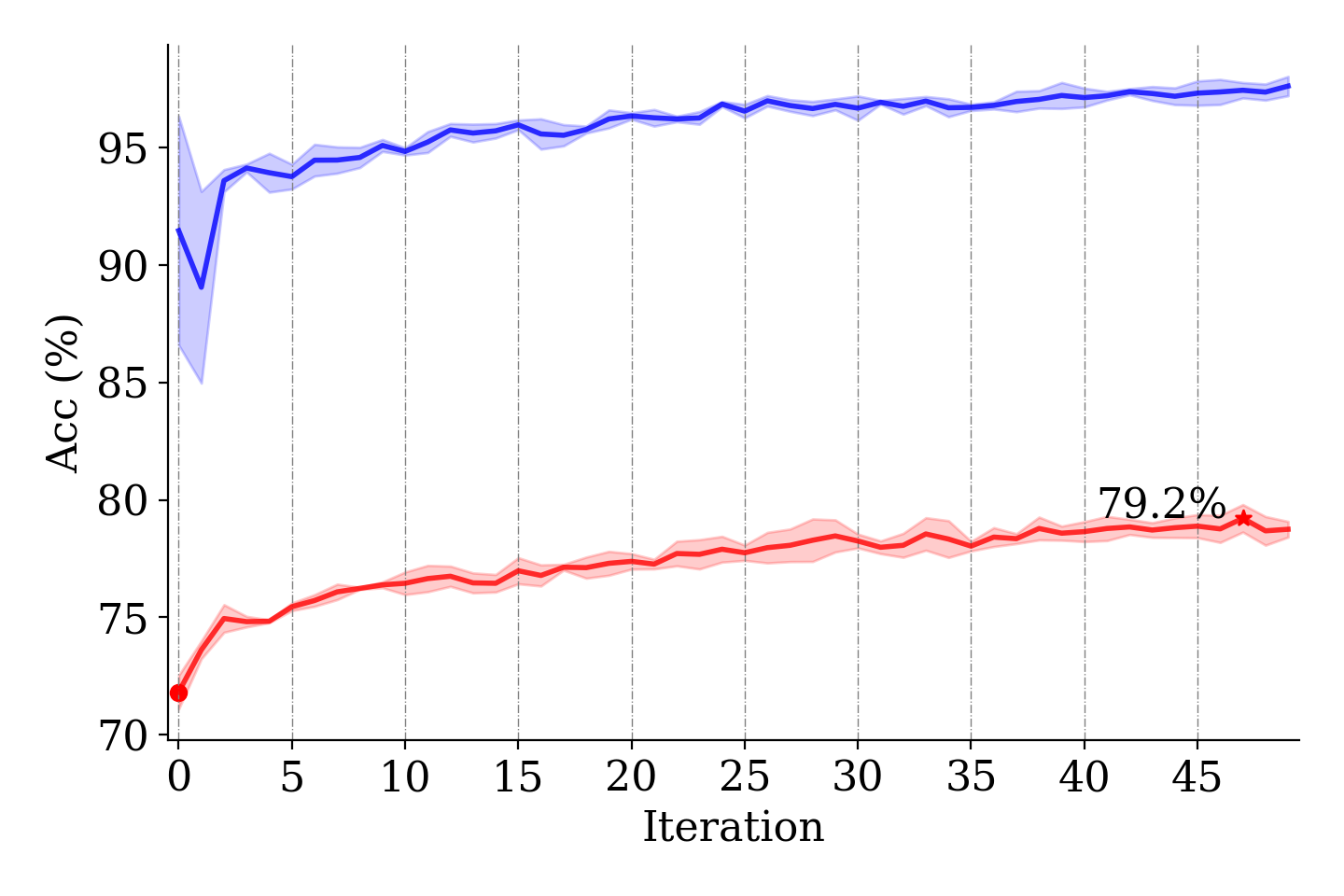}
			\label{Fig:bin_cifar10_cat_dog acc}}}
	\hspace{-10pt}
	\raisebox{20pt}{\subfigure[``cat-dog'': gen-error with weight decay set to  $5\times 10^{-4}$.]{
			\includegraphics[width=0.23\linewidth]{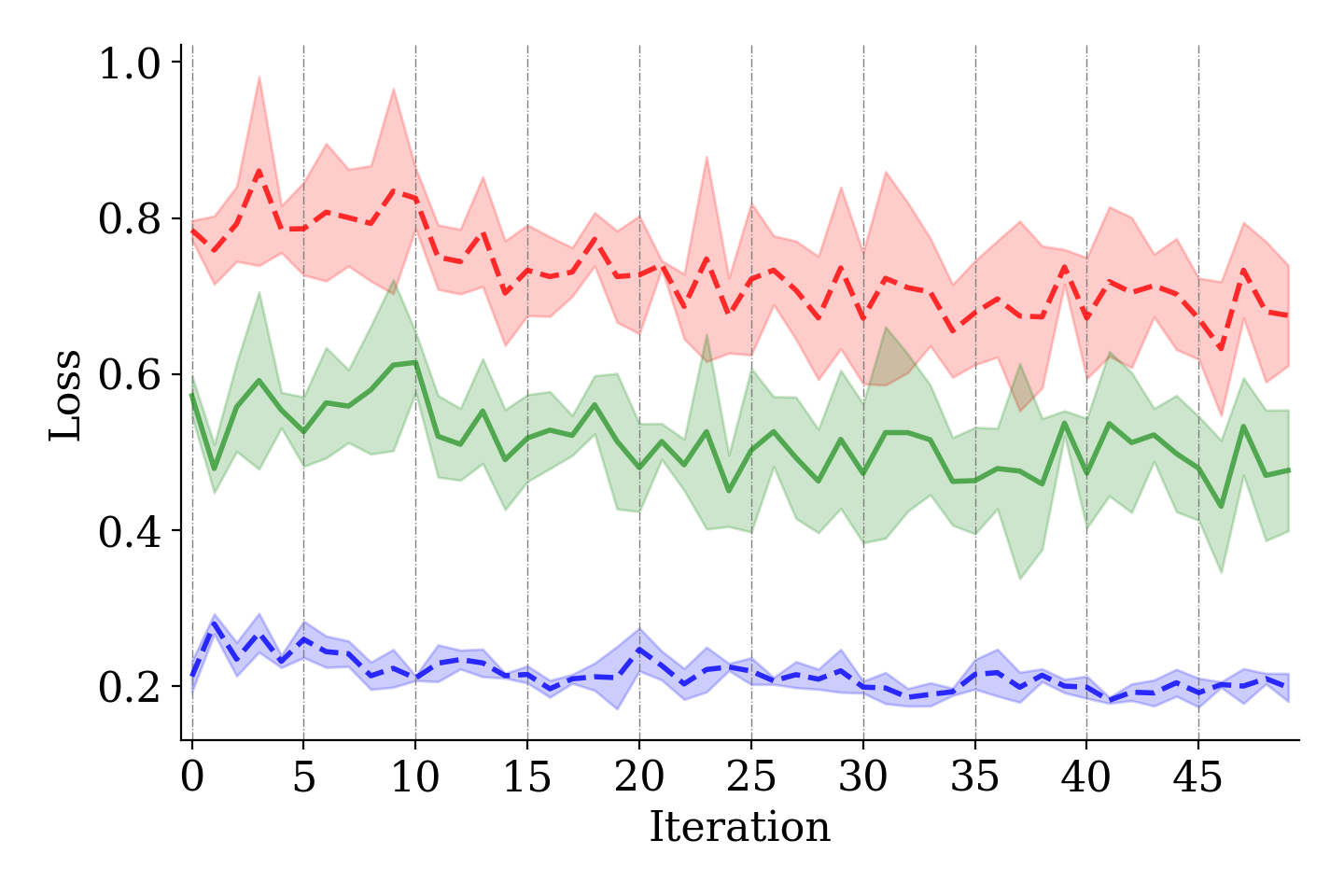}
			\label{Fig:bin_cifar10_dot_cat reg loss}}}
	\hspace{5pt}
	\raisebox{20pt}{\subfigure[``cat-dog'': accuracy with weight decay set to $5\times 10^{-4}$.]{
			\includegraphics[width=0.23\linewidth]{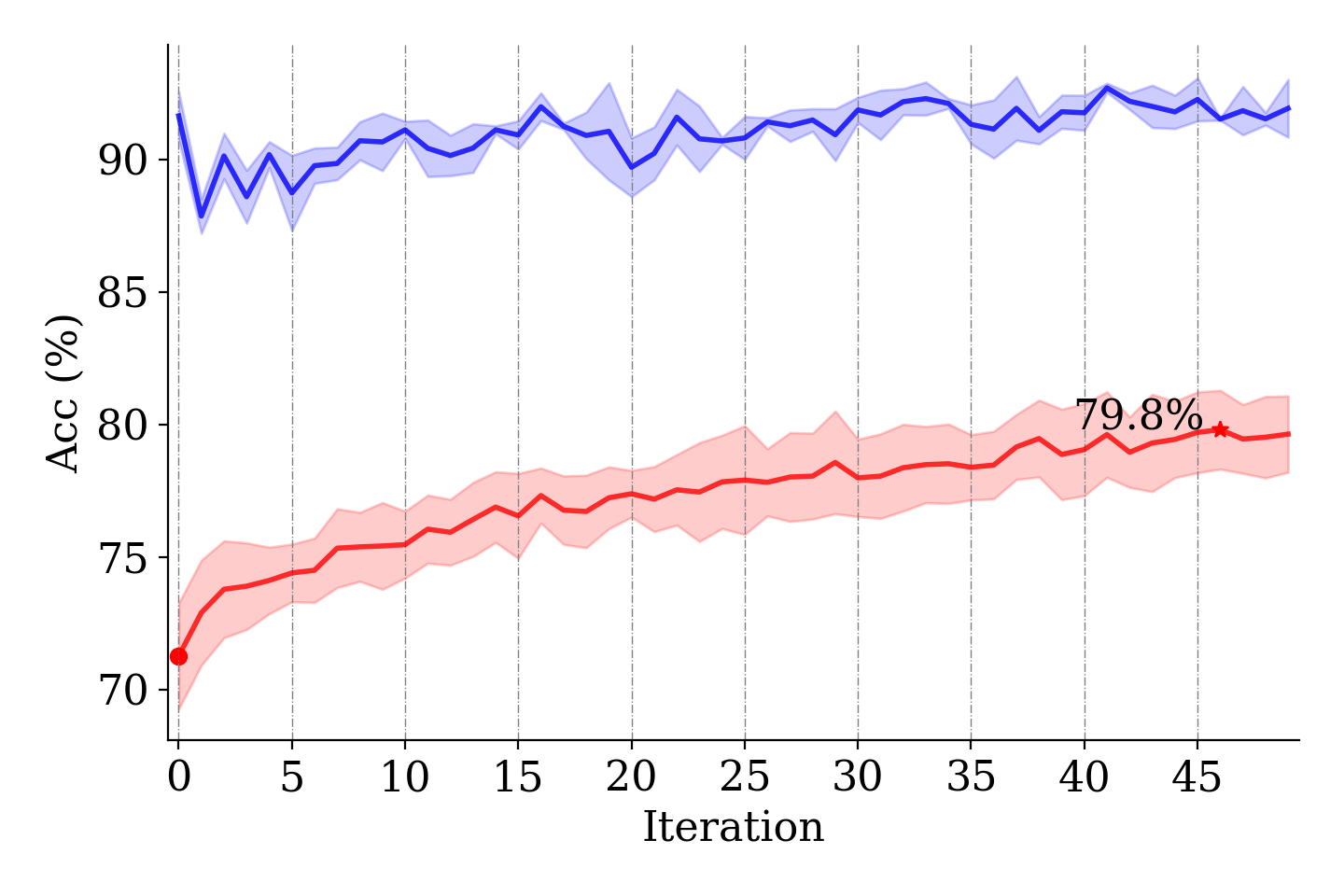}
			\label{Fig:bin_cifar10_dot_cat reg acc}}	}
	\hspace{10pt}
	\raisebox{20pt}{\subfigure[``cat-dog'': gen-error after convergence versus  weight decay.]{
			\includegraphics[height=0.16\linewidth, width=0.23\linewidth]{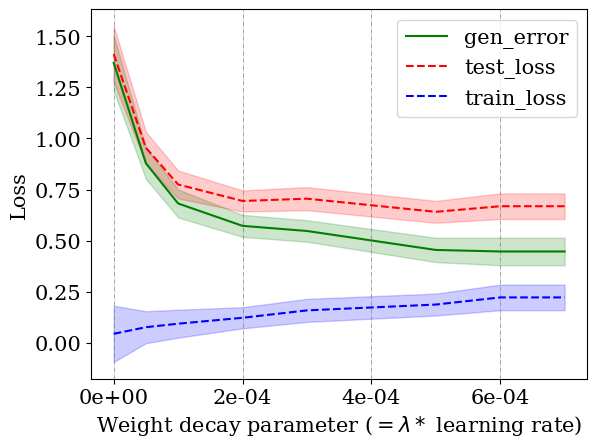}
			\label{Fig:bin_cifar10_dot_cat wd}} }
	\vspace{-7pt}
	\caption{(a)--(c) \& (e)--(g): easier-to-distinguish classes ``horse'' vs.\ ``ship'' and ``automobile'' vs.\ ``truck''; (d),(h), (i)--(k): harder-to-distinguish classes ``cat'' vs.\ ``dog''.}
	\label{Fig:real exp}
	\vspace{-10pt}
\end{figure*}

\textbf{Experimental observations:}
We perform each experiment $3$ times and report the average test and training (cross entropy)  losses, the gen-error, and test and training accuracies in Figures \ref{Fig:real exp}. 
To illustrate the difficulty level of classification for each pair, we first calculated the mean and variance of the RGB (i.e., the red-green-blue color values) values of the images to show the difference of the images between the two classes.
In Table \ref{Table:RGB mean var}, we display the RGB means and variances of the test data in six classes taken from the CIFAR10 dataset. 
We observe that the RGB variances of each pair are almost $0$ (and small compared to the RGB-mean $\ell_2$ distances), and thus, the  RGB-mean $\ell_2$ distance is indicative of the difficulty of the classification task. Indeed, a smaller RGB-mean $\ell_2$ distance implies a higher overlap of the two classes and consequently,  greater difficulty in distinguishing them. Therefore,   the ``cat-dog'' pair, which is more  difficult to  disambiguate compared to the ``horse-ship'' and ``automobile-truck'' pairs, is analogous to the bGMM with large variance (i.e. large overlap between the positive and negative classes).  
Furthermore, in Table \ref{Table:confusion matrix}, we quote the commonly-used confusion matrix for the CIFAR10 dataset in \cite[Fig.~7]{liu2018unsupervised}, which quantifies how many out of 1000 images of each class are misclassified to any other class. It is obvious that fewer misclassified images indicates lower classification difficulty, which corresponds with Table \ref{Table:RGB mean var}. These two tables   provide an indication of the level of difficulty to distinguish different pairs of classes.

In Figures~\ref{Fig:bin_cifar10_horse_ship loss}--\ref{Fig:mul_mnist loss}, for easier-to-distinguish classes (based on the high classification accuracy and low loss, as well as Tables \ref{Table:RGB mean var} and \ref{Table:confusion matrix}),
 the gen-error appears to have relatively large reduction in the early training iterations and then fluctuates around a constant value afterwards. For example, in Figure~\ref{Fig:bin_cifar10_horse_ship loss}, the gen-error converges to around $0.25 $ after $5$ iterations; in Figure~\ref{Fig:bin_cifar10_car_truck loss}, it converges to around $0.45$ after $5$ iterations. For multi-classification of MNIST in Figure~\ref{Fig:mul_mnist loss}, the gen-error also converges to around $0.25$ after $20$ iterations. These results corroborate the theoretical and empirical analyses in the bGMM case with small variance, which again verifies that  Theorem~\ref{Thm:exact gen GMM} and Corollary~\ref{Coro:reuse labeldata gen bound} can shed light on the empirical gen-error on benchmark datasets. It also reveals that the generalization performance of iterative self-training on real datasets from relatively distinguishable classes can be quickly improved with the help of unlabelled data. In Figures \ref{Fig:bin_cifar10_horse_ship acc}, \ref{Fig:bin_cifar10_car_truck acc} and \ref{Fig:mul_mnist acc}, we also show that the test accuracy increases with the iterations and has significant improvement compared to the initial iteration when only labelled data are used.

 In Figures \ref{Fig:bin_cifar10_cat_dog loss} and \ref{Fig:bin_cifar10_cat_dog acc}, we perform another binary classification experiment on the harder-to-distinguish pair, ``cat'' and ``dog'' (based on low accuracy at the initial point as well as Tables \ref{Table:RGB mean var} and \ref{Table:confusion matrix}).
We observe that the gen-error (and the test loss) does not decrease across the self-training iterations even though the test accuracy increases. This again corroborates the result in Figure \ref{Fig:exactgen_d=2_sig3} for the bGMM with large variance. The fact that both the  test loss and test accuracy appear to increase with the iteration is, in fact, not contradictory.  To intuitively explain this, in binary classification using the softmax (hence, logistic)  function to predict the output classes, suppose the learned probability of a data example belonging to its true class is $p \in (1/2, 1]$, the classification is correct. In other words, the accuracy is 100\%. However, when $p$ (i.e., the classification confidence) decreases towards $(1/2)^+$, the corresponding decision margin $2p-1$ \citep{cao2019learning} also decreases and the test loss  $-\log p$ increases commensurately. Thus, when the decision margin is small, even though the test accuracy may increase as the iteration counter increases, the test loss may also increase at the same time; this represents our lack of confidence.

We further investigate the effect of $\ell_2$-regularization on ``cat-dog'' classification. In Figures \ref{Fig:bin_cifar10_dot_cat reg loss} and \ref{Fig:bin_cifar10_dot_cat reg acc}, we show that by setting the weight decay parameter to be $0.0005$, the increase of gen-error for the ``cat-dog'' classification task can be mitigated and the test accuracy is improved by 0.6\% as well; compare this to Figures \ref{Fig:bin_cifar10_cat_dog loss} and \ref{Fig:bin_cifar10_cat_dog acc}. In Figure \ref{Fig:bin_cifar10_dot_cat wd}, we plot the average gen-error over the last 10 iterations versus the weight decay parameter; this is   shown to decrease as the weight decay increases (compared to Figure \ref{fig:gen1_bd_and_emp_diffsig}). 
In summary, our above observations correspond to that  for the bGMM, namely that the unlabelled data do not always help to improve the gen-error but adding regularization can help to compensate for the undesirable impact. 

\begin{figure}[t]
	\centering
	\includegraphics[width=.5\linewidth]{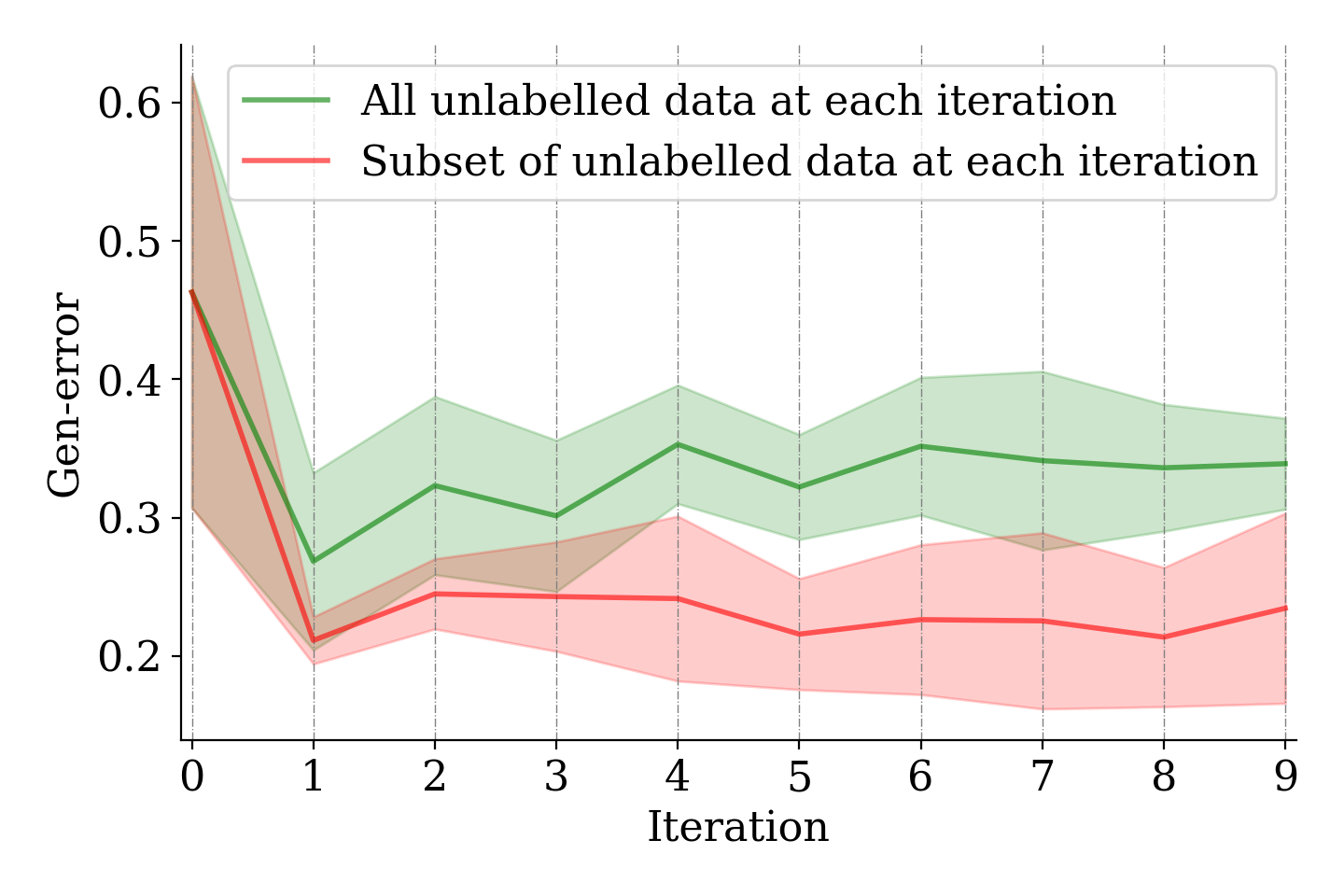}
	\caption{Comparison of the  gen-error  for the ``horse'' and ``ship'' classification task}
	\label{Fig: horse-ship all vs subset}
\end{figure}

Furthermore, we study the effect of reusing {\em all} the unlabelled data at each iteration.
	Under the same experimental setup as above, we conduct an additional experiment on the ``horse-ship'' pair in the CIFAR-10 dataset by using {\em all} $9,500$ unlabelled images in \emph{each} iteration. The self-training procedure lasts for $10$ iterations.  Figure \ref{Fig: horse-ship all vs subset}  compares the gen-error of this additional experiment with the one of the same experiment in Figure~\ref{Fig:bin_cifar10_horse_ship loss}. We find that when using the unlabelled data all at once, as the pseudo-labelling iteration increases, the gen-error is even higher than that for our original setup. This can possibly be attributed to overfitting.

\section{Concluding Remarks and Future Work}
In this paper, we have analyzed the gen-error of iterative SSL algorithms that pseudo-label large amounts of unlabelled data to progressively refine the parameters of a given model. We particularized the general bounds and exact expressions on the gen-error  for the bGMM to gain some theoretical insight into the problem. These were then corroborated by experiments on benchmark datasets. The theoretical analyses and experimental results reinforce the main message of this paper---namely, that in the low-class-overlap or easy-to-classify scenario, pseudo-labelling can help to reduce the gen-error. On the other hand, for the high-class-overlap or difficult-to-classify scenario, pseudo-labelling can in fact hurt. Thus, the key takeaway from our paper is that practitioners should be judicious in adopting pseudo-labelling techniques, for they may degrade the overall performance.

There are three avenues for future research. First, our analytical results are only applicable to the bGMM. This yields valuable insights, but the model is admittedly restrictive. Generalizing our analyses to other statistical models for classification such as logistic regression  will  be instructive. Secondly, our work focuses on the gen-error. Often bounds on the {\em population risk} are desired as the population risk is the key determinant of the performance of classification algorithms. Bounding the population risk in the SSL setting would thus be interesting. Finally, analyzing other families of SSL algorithms beyond those that utilize pseudo-labelling would provide a clearer theoretical picture about the utility of  SSL.  

\subsection*{Acknowledgements}

We sincerely thank the reviewers and action editor for their meticulous reading and insightful comments that led to a significantly improved version of the paper. 

This research/project is supported by the National Research Foundation Singapore and DSO National Laboratories under the AI Singapore Programme (AISG Award No: AISG2-RP-2020-018), by  an
  NRF Fellowship (A-0005077-01-00), and by Singapore Ministry of Education (MOE) AcRF Tier 1 Grants (A-0009042-01-00 and A-8000189-01-00).

\appendix
\section*{Appendices}

\section{Proof of Theorem \ref{Coro: sub Gau gen}}\label{pf of Thm: iterative gen bound}


We commence with some notation. For any convex function $\psi:[0,b) \mapsto \bbR$, its \emph{Legendre dual} $\psi^*$ is defined as $\psi^*(x):=\sup_{\lambda\in[0,b)} \lambda x-\psi(\lambda)$ for all $x\in[0, \infty)$.
According to \citet[Lemma 2.4]{boucheron2013concentration}, when $\psi(0)=\psi'(0)=0$, $\psi^*(x)$ is a nonnegative convex and nondecreasing function on $[0, \infty)$. Moreover, for every $y\geq 0$, its generalized inverse function $\psi^{*-1}(y):=\inf\{x \geq 0: \psi^*(x)\geq y\}$ is concave and can be rewritten as $\psi^{*-1}(y)=\inf_{\lambda\in[0,b)}\frac{y+\psi(\lambda)}{\lambda}$.

We first introduce the following theorem that is applicable to more general loss functions.
\begin{theorem}\label{Thm: iterative gen bound}
	For any $\tilde{\theta}_t\in\Theta$, let $\psi_{-}(\lambda,\tilde{\theta}_t)$ and $\psi_{+}(\lambda,\tilde{\theta}_t)$ be convex functions of $\lambda$ and $\psi_{+}(0,\tilde{\theta}_t)=\psi_{+}'(0,\tilde{\theta}_t)=\psi_{-}(0,\tilde{\theta}_t)=\psi_{-}'(0,\tilde{\theta}_t)=0$.
	Assume  that $\Lambda_{l(\tilde{\theta}_t,\tilZ)}(\lambda,\tilde{\theta}_t)\leq \psi_{+}(\lambda,\tilde{\theta}_t)$ for all $\lambda\in[0,b_+)$ and $\Lambda_{l(\tilde{\theta}_t,\tilZ)}(\lambda,\tilde{\theta}_t)\leq \psi_{-}(\lambda,\tilde{\theta}_t)$ for $\lambda\in(b_-,0]$ under distribution $P_{\tilZ|\theta^{(t-1)}}=P_Z$, where $0<b_+ \leq \infty$ and $-\infty \leq b_-<0$. Let $\psi_+(\lambda)=\sup_{\tilde{\theta}_t}\psi_{+}(\lambda,\tilde{\theta}_t)$ and $\psi_-(\lambda)=\sup_{\tilde{\theta}_t}\psi_{-}(\lambda,\tilde{\theta}_t)$. We have
	\begin{align}
		&\mathrm{gen}_t(P_Z, P_X, \{P_{\theta_k|S_{\rml},S_{\rmu}}\}_{k=0}^t, \{f_{\theta_k}\}_{k=0}^{t-1} ) \nn\\
		&\leq \frac{w}{n}\sum_{i=1}^n\bbE_{\theta^{(t-1)}}\left[\psi_-^{*-1}(I_{\theta^{(t-1)}}(\theta_t;Z_i))\right] \nn\\
		&\quad  +\frac{1-w}{m}\sum_{i\in\calI_t}\bbE_{\theta^{(t-1)}}\left[\psi_-^{*-1}\big(I_{\theta^{(t-1)}}(\theta_t;X_i',\hatY_i')+D_{\theta^{(t-1)}}(P_{X_i',\hatY_i'}\| P_Z) \big)\right],
		\end{align}
		and
		\begin{align}
		&-\mathrm{gen}_t(P_Z, P_X, \{P_{\theta_k|S_{\rml},S_{\rmu}}\}_{k=0}^t, \{f_{\theta_k}\}_{k=0}^{t-1} ) \nn\\
		&\leq \frac{w}{n}\sum_{i=1}^n\bbE_{\theta^{(t-1)}}\left[\psi_+^{*-1}(I_{\theta^{(t-1)}}(\theta_t;Z_i))\right]  \nn\\
		&\quad  +\frac{1-w}{m}\sum_{i\in\calI_t}\bbE_{\theta^{(t-1)}}\left[\psi_+^{*-1}\big(I_{\theta^{(t-1)}}(\theta_t;X_i',\hatY_i')+D_{\theta^{(t-1)}}(P_{X_i',\hatY_i'}\| P_Z) \big)\right],
	\end{align}
	where $P_{X_i',\hatY_i'|\theta^{(t-1)}}(x,y|\hat{\theta}^{(t-1)})=P_X(x)\mathbbm{1}\{y=f_{\hat{\theta}_{t-1}}(x)\}$ for any $x\in\calX$, $y\in\calY$ and $\hat{\theta}^{(t-1)}\in\Theta^{t-1}$, and $P_{Z|\theta^{(t-1)}}=P_Z$.
\end{theorem}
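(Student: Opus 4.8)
The plan is to decompose the generalization error according to the two groups of training samples (labelled and pseudo-labelled) using the linearity in~\eqref{Def: gent}, and then bound each per-sample contribution via a Donsker--Varadhan--type variational argument conditioned on the previous iterates $\theta^{(t-1)}$. First I would fix the realization $\hat\theta^{(t-1)}$ and work with the disintegrated measures. For a labelled sample $Z_i$, the quantity to control is $\bbE_{\theta_t}[\bbE_Z[l(\theta_t,Z)]] - \bbE_{\theta_t,Z_i}[l(\theta_t,Z_i)]$, i.e.\ the difference between integrating $l(\theta_t,\cdot)$ under the product $P_{\theta_t|\theta^{(t-1)}}\otimes P_{Z}$ versus under the joint $P_{\theta_t,Z_i|\theta^{(t-1)}}$. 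For a pseudo-labelled sample $(X_i',\hat Y_i')$, the reference distribution of the test loss is still $P_Z$, whereas the sample is drawn from $P_{X_i',\hat Y_i'|\theta^{(t-1)}}$; the extra mismatch between $P_{X_i',\hat Y_i'|\theta^{(t-1)}}$ and $P_Z$ is exactly what will produce the additional KL term.

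The key technical step is the following standard lemma: if $\Lambda_{l(\tilde\theta_t,\tilde Z)}(\lambda) \le \psi_+(\lambda)$ on $[0,b_+)$ with $\psi_+(0)=\psi_+'(0)=0$, then for any two joint laws $P,Q$ on $(\theta_t,W)$ with $P_{\theta_t}=Q_{\theta_t}$ and $P_W = Q_W = P_Z$ (on the relevant coordinate), one has
\begin{align}
  \bbE_{P}[l(\theta_t,W)] - \bbE_{Q}[l(\theta_t,W)] \le \psi_+^{*-1}\big(D(P\|Q)\big),
\end{align}
and symmetrically with $\psi_-$ for the reversed difference. This is proved by writing $D(P\|Q) = \sup_\lambda \{\lambda\,\bbE_P[l] - \log\bbE_Q[e^{\lambda l}]\}$ (Donsker--Varadhan), subtracting $\lambda\,\bbE_Q[l]$, invoking the sub-$\psi$ bound on the CGF under $Q$, and then inverting using the representation $\psi_+^{*-1}(y)=\inf_{\lambda\in[0,b_+)}\frac{y+\psi_+(\lambda)}{\lambda}$ recalled before the statement, together with Jensen to pass the expectation over $\theta_t$ inside. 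Applying this with $P = P_{\theta_t,Z_i|\theta^{(t-1)}}$, $Q = P_{\theta_t|\theta^{(t-1)}}\otimes P_{Z}$ gives $D(P\|Q) = I_{\theta^{(t-1)}}(\theta_t;Z_i)$; applying it with $P = P_{\theta_t,X_i',\hat Y_i'|\theta^{(t-1)}}$ and $Q = P_{\theta_t|\theta^{(t-1)}}\otimes P_{Z}$, and using the chain rule $D(P_{\theta_t,X_i',\hat Y_i'|\theta^{(t-1)}}\| P_{\theta_t|\theta^{(t-1)}}\otimes P_Z) = I_{\theta^{(t-1)}}(\theta_t;X_i',\hat Y_i') + D_{\theta^{(t-1)}}(P_{X_i',\hat Y_i'}\|P_Z)$ (which holds because, given $\theta^{(t-1)}$, the pseudo-labelled sample is independent of the source distribution choice, so the divergence decomposes additively), yields the pseudo-labelled term. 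Summing over $i$ with weights $w/n$ and $(1-w)/m$, then taking $\bbE_{\theta^{(t-1)}}$ and using Jensen once more on the concave $\psi_\pm^{*-1}$ (after first bounding per-realization), gives both displayed inequalities.

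The main obstacle I anticipate is the careful bookkeeping of the conditioning on $\theta^{(t-1)}$: one must check that under $P_{\tilde Z|\theta^{(t-1)}} = P_Z$ the sub-$\psi$ assumption on the CGF is uniform in $\tilde\theta_t$ (hence passing to $\psi_\pm(\lambda) = \sup_{\tilde\theta_t}\psi_\pm(\lambda,\tilde\theta_t)$ is legitimate and preserves $\psi_\pm(0)=\psi_\pm'(0)=0$), and that the Donsker--Varadhan step is applied to the disintegrated measures so that the resulting bound is a genuine $\sigma(\theta^{(t-1)})$-measurable random variable before the outer expectation. The concavity and monotonicity of $\psi_\pm^{*-1}$ on $[0,\infty)$ (from \citet[Lemma~2.4]{boucheron2013concentration}) is what makes the application of Jensen's inequality valid and is exactly where the additive KL term gets absorbed inside a single $\psi_\pm^{*-1}(\cdot)$. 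Specializing $\psi_+(\lambda)=\psi_-(\lambda)=R^2\lambda^2/2$, for which $\psi^{*-1}(y)=\sqrt{2R^2 y}$, then recovers Theorem~\ref{Coro: sub Gau gen}.
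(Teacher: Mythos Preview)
Your proposal is correct and mirrors the paper's proof closely: decompose $\mathrm{gen}_t$ sample-by-sample, condition on $\theta^{(t-1)}$, apply the Donsker--Varadhan inequality with the (per-$\theta_t$ centered) loss as test function, invoke the sub-$\psi_\pm$ hypothesis on the conditional CGF, take the supremum over $\tilde\theta_t$ to pass to $\psi_\pm(\lambda)$, and invert via $\psi_\pm^{*-1}$. The one place your argument differs is the pseudo-labelled term: you apply Donsker--Varadhan once to $D\big(P_{\theta_t,X_i',\hat Y_i'|\theta^{(t-1)}}\,\big\|\,P_{\theta_t|\theta^{(t-1)}}\otimes P_Z\big)$ and then use the chain-rule identity to recognise this as $I_{\theta^{(t-1)}}(\theta_t;X_i',\hat Y_i')+D_{\theta^{(t-1)}}(P_{X_i',\hat Y_i'}\|P_Z)$, whereas the paper lower-bounds the two summands separately by Donsker--Varadhan and then drops the cross terms via ``Jensen's inequality'' at \eqref{Eq: use Jensen ineq}. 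Your route is in fact the sound one: the step \eqref{Eq: use Jensen ineq} as written would require $\bbE_{Q_1}[\lambda l]\ge\log\bbE_{Q_1}[e^{\lambda l}]$ (with $Q_1=P_{\theta_t|\theta^{(t-1)}}\otimes P_{X_i',\hat Y_i'|\theta^{(t-1)}}$), which is the \emph{reverse} of Jensen; the displayed inequality \eqref{Eq: use Jensen ineq} is nonetheless true precisely because $I+D$ equals the single KL on which you apply Donsker--Varadhan directly. One small wrinkle in your lemma statement: the hypothesis $P_W=P_Z$ is neither needed nor satisfied in the pseudo-labelled application (there $P_W=P_{X_i',\hat Y_i'|\theta^{(t-1)}}$); only $Q=P_{\theta_t|\theta^{(t-1)}}\otimes P_Z$ and $P_{\theta_t}=Q_{\theta_t}$ are required, and your chain-rule step already respects this.
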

\begin{proof}
Consider the Donsker--Varadhan variational representation of the KL-divergence between any two distributions $P$ and $Q$ on $\calX$:
\begin{align}
	D(P\|Q)=\sup_{g\in\calG}\Big\{\bbE_{X\sim P}[g(X)]-\log\bbE_{X\sim Q}[e^{g(X)}]\Big\},
\end{align}
where the supremum is taken over the set of  measurable functions in $\calG=\{g:\calX \mapsto \bbR: \bbE_{X\sim Q}[e^{g(X)}]<\infty\}$.

Recall that $\tilde{\theta}_t$ and $\tilZ$ are independent copies of $\theta_t$ and $Z$ respectively, such that $P_{\tilde{\theta}_t,\tilZ}=Q_{\theta_t}\otimes P_Z$, $P_{\tilde{\theta}_t,\tilZ|\theta^{(t-1)}}=P_{\theta_t|\theta^{(t-1)}}\otimes P_Z$.
For any iterative SSL algorithm, by applying the law of total expectation, the generalization error can be rewritten as
\begin{align}
	&\mathrm{gen}_t(P_Z, P_X, \{P_{\theta_k|S_{\rml},S_{\rmu}}\}_{k=0}^t, \{f_{\theta_k}\}_{k=0}^{t-1} ) \nn\\ &=w\bigg(\bbE_{\theta_t}[\bbE_{Z}[l(\theta_t,Z)]]-\frac{1}{n}\sum_{i=1}^n \bbE_{\theta_t,Z_i}[l(\theta_t,Z_i)] \bigg) \nn\\
	&\quad +(1-w)\bigg(\bbE_{\theta_t}[\bbE_{Z}[l(\theta_t,Z)]]-\frac{1}{m}\sum_{i \in\calI_t}\bbE_{\theta_t,X'_i,\hatY'_i}[l(\theta_t,(X'_i,\hatY'_i))] \bigg) \\
	&=\frac{w}{n}\sum_{i=1}^n \bigg(\bbE_{\tilde{\theta}_t,\tilZ}[l(\tilde{\theta}_t,\tilZ)]-\bbE_{\theta_t,Z_i}[l(\theta_t,Z_i)]\bigg) \nn\\*
	&\quad +\frac{1-w}{m}\sum_{i\in\calI_t}\bigg(\bbE_{\tilde{\theta}_t,\tilZ}[l(\tilde{\theta}_t,\tilZ)]-\bbE_{\theta_t,X'_i,\hatY'_i}[l(\theta_t,(X'_i,\hatY'_i))]  \bigg)\\
	&=\frac{w}{n}\sum_{i=1}^n \bbE_{\theta^{(t-1)}}\Big[ \bbE_{\tilde{\theta}_t,\tilZ}\big[l(\tilde{\theta}_t,\tilZ)|\theta^{(t-1)}\big]-\bbE_{\theta_t,Z_i}\big[l(\theta_t,Z_i)|\theta^{(t-1)}\big] \Big]  \nn\\*
	&\quad +\frac{1  -  w}{m}\sum_{i\in\calI_t}  \bbE_{\theta^{(t-1)}}\Big[\bbE_{\tilde{\theta}_t,\tilZ}\big[l(\tilde{\theta}_t,\tilZ)|\theta^{(t-1)} \big]  -  \bbE_{\theta_t,X'_i,\hatY'_i}\big[l(\theta_t,(X'_i,\hatY'_i)) |\theta^{(t-1)} \big]  \Big]. \label{Eq: condition expectation gen_t}
\end{align}

Note that $\psi_+(\lambda)=\sup_{\tilde{\theta}_t}\psi_{+}(\lambda,\tilde{\theta}_t)$ and $\psi_-(\lambda)=\sup_{\tilde{\theta}_t}\psi_{-}(\lambda,\tilde{\theta}_t)$ are convex, and so their Legendre duals $\psi_-^*$, $\psi_+^*$, and the corresponding inverses are well-defined. 

Let $\check{l}(\theta,z)=l(\theta,z)-\bbE_{Z}[l(\theta,Z)]$. We have the fact that $\bbE_{\tilZ}[\check{l}(\tilde{\theta}_t,\tilZ)]=0$ for any $\tilde{\theta}_t$. Again, by the Donsker--Varadhan variational representation of the KL-divergence, for any fixed $\theta^{(t-1)}$ and any $\lambda\in[0,b_+)$, we have
\begin{align}
	&I_{\theta^{(t-1)}}(\theta_t;Z)=D(P_{\theta_t,Z|\theta^{(t-1)}}\|P_{\theta_t|\theta^{(t-1)}}\otimes P_Z) \nn\\
	&\geq \bbE_{\theta_t,Z}[\lambda \check{l}(\theta_t,Z)|\theta^{(t-1)}]-\log \bbE_{\tilde{\theta}_t,\tilZ}[e^{\lambda \check{l}(\tilde{\theta}_t,\tilZ)}|\theta^{(t-1)}]\\
	&= \bbE_{\theta_t,Z}[\lambda \check{l}(\theta_t,Z)|\theta^{(t-1)}]-\log \bbE_{\tilde{\theta}_t|\theta^{(t-1)}}\bbE_{\tilZ}\big[e^{\lambda \check{l}(\tilde{\theta}_t,\tilZ)}\big] \\
	&=\bbE_{\theta_t,Z}[\lambda \check{l}(\theta_t,Z)|\theta^{(t-1)}]-\log \bbE_{\tilde{\theta}_t|\theta^{(t-1)}}\big[\exp\big(\Lambda_{l(\tilde{\theta}_t,\tilZ)}(\lambda,\tilde{\theta}_t) \big) \big] \label{Eq: use def Lambda(theta)}\\
	&\geq \lambda\bbE_{\theta_t,Z}[l(\theta_t,Z)-\bbE_{Z}[l(\theta_t,Z)]|\theta^{(t-1)}]-\log \bbE_{\tilde{\theta}_t|\theta^{(t-1)}}\big[\exp(\psi_+(\lambda,\tilde{\theta}_t)) \big] \label{Eq: use Lambda(theta) bound} \\
	&\geq \lambda\bbE_{\theta_t,Z}[l(\theta_t,Z)-\bbE_{Z}[l(\theta_t,Z)]|\theta^{(t-1)}]-\psi_+(\lambda) \label{Eq: use sup psi}\\
	&=\lambda\big(\bbE_{\theta_t,Z}[l(\theta_t,Z)|\theta^{(t-1)}]-\bbE_{\tilde{\theta}_t,\tilZ}[l(\tilde{\theta}_t,\tilZ)|\theta^{(t-1)}] \big)-\psi_+(\lambda).
\end{align}
where \eqref{Eq: use def Lambda(theta)} follows from the definition of $\Lambda_{l(\tilde{\theta}_t,\tilZ)}(\lambda,\tilde{\theta}_t)$ in \eqref{Def:CGF fixed theta}, \eqref{Eq: use Lambda(theta) bound} follows from the assumption that $\Lambda_{l(\tilde{\theta}_t,\tilZ)}(\lambda,\tilde{\theta}_t)\leq \psi_+(\lambda,\tilde{\theta}_t)$ for all $\lambda\in[0,b_+)$, and \eqref{Eq: use sup psi} follows because  $\psi_+(\lambda)=\sup_{\tilde{\theta}_t}\psi_{+}(\lambda,\tilde{\theta}_t)$. Thus, we have
\begin{align}
	&\bbE_{\theta_t,Z}[l(\theta_t,Z)|\theta^{(t-1)}]-\bbE_{\tilde{\theta}_t,\tilZ}[l(\tilde{\theta}_t,\tilZ)|\theta^{(t-1)}] \nn\\
	&\leq \inf_{\lambda\in[0,b_+)}\frac{I_{\theta^{(t-1)}}(\theta_t;Z)+\psi_{+}(\lambda)}{\lambda}=\psi_+^{*-1}\big(I_{\theta^{(t-1)}}(\theta_t;Z)\big). \label{Eq: psi_+ 1}
\end{align}
Similarly, for $\lambda\in(b_-,0]$,
\begin{align}
	&\bbE_{\tilde{\theta}_t,\tilZ}[l(\tilde{\theta}_t,\tilZ)|\theta^{(t-1)}]-\bbE_{\theta_t,Z}[l(\theta_t,Z)|\theta^{(t-1)}] \nn\\
	&\leq \inf_{\lambda\in[0,-b_-)}\frac{I_{\theta^{(t-1)}}(\theta_t;Z)+\psi_{-}(\lambda)}{\lambda}=\psi_-^{*-1}\big(I_{\theta^{(t-1)}}(\theta_t;Z)\big). \label{Eq: psi_- 1}
\end{align}

By applying the same techniques, for any pair of pseudo-labelled random variables $(X',\hatY')$ used at iteration $t$ and any $\lambda\in[0,b_+)$, we have
\begin{align}
	&I_{\theta^{(t-1)}}(\theta_t;X',\hatY')+D_{\theta^{(t-1)}}(P_{X',\hatY'}\| P_Z) \nn\\
	&=D_{\theta^{(t-1)}}(P_{\theta_t,X',\hatY'}\|P_{\theta_t}\otimes P_{X',\hatY'})+D_{\theta^{(t-1)}}(P_{\theta_t}\otimes P_{X',\hatY'}\| P_{\theta_t}\otimes P_Z) \label{Eq:I+D bound begin}\\
	&\geq \bbE_{\theta_t,X',\hatY'}[\lambda l(\theta_t,(X',\hatY')) | \theta^{(t-1)}]-\log \bbE_{\theta_t}\big[\bbE_{X',\hatY'}[e^{\lambda l(\theta_t,(X',\hatY'))} |\theta^{(t-1)}] |\theta^{(t-1)} \big] \nn\\*
	&\quad +\bbE_{\theta_t}\big[\bbE_{X',\hatY'}[\lambda l(\theta_t,(X',\hatY')) |\theta^{(t-1)}] | \theta^{(t-1)} \big]-\log \bbE_{\theta_t}\big[\bbE_{Z}[e^{\lambda l(\theta_t,Z)}] |\theta^{(t-1)} \big]\\
	&\geq \bbE_{\theta_t,X',\hatY'}[\lambda l(\theta_t,(X',\hatY'))|\theta^{(t-1)}]-\log \bbE_{\theta_t}\big[\bbE_{Z}[e^{\lambda l(\theta_t,Z)}] | \theta^{(t-1)} \big] \label{Eq: use Jensen ineq} \\
	&=\lambda \Big(\bbE_{\theta_t,X',\hatY'}[\lambda l(\theta_t,(X',\hatY')) |\theta^{(t-1)}]-\bbE_{\theta_t}\big[\bbE_{Z}[l(\theta_t,Z)] | \theta^{(t-1)} \big] \Big) \nn\\
	&\quad -\log \bbE_{\tilde{\theta}_t|\theta^{(t-1)}}\big[\exp\big(\Lambda_{l(\tilde{\theta}_t,\tilZ)}(\lambda,\tilde{\theta}_t) \big) \big]\\
	&\geq \lambda \Big(\bbE_{\theta_t,X',\hatY'}[\lambda l(\theta_t,(X',\hatY'))]-\bbE_{\theta_t}\big[\bbE_{Z}[l(\theta_t,Z)] \big] \Big)-\psi_+(\lambda) \label{Eq:I+D bound end},
\end{align}
where \eqref{Eq: use Jensen ineq} follows from Jensen's inequality.
Thus, we get
\begin{align}
	&\bbE_{\theta_t,X'_i,\hatY'_i}\big[l(\theta_t,(X',\hatY')) |\theta^{(t-1)} \big]-\bbE_{\tilde{\theta}_t,\tilZ}\big[l(\tilde{\theta}_t,\tilZ)|\theta^{(t-1)} \big]\nn\\
	&\leq \psi_+^{*-1}\big(I_{\theta^{(t-1)}}(\theta_t;X',\hatY')+D_{\theta^{(t-1)}}(P_{X',\hatY'}\| P_Z) \big) \label{Eq: psi_+ 2}
\end{align}
and 
\begin{align}
	&\bbE_{\tilde{\theta}_t,\tilZ}\big[l(\tilde{\theta}_t,\tilZ)|\theta^{(t-1)} \big]-\bbE_{\theta_t,X'_i,\hatY'_i}\big[l(\theta_t,(X',\hatY')) |\theta^{(t-1)} \big] \nn\\
	& \leq \psi_-^{*-1}\big(I_{\theta^{(t-1)}}(\theta_t;X',\hatY')+D_{\theta^{(t-1)}}(P_{X',\hatY'}\| P_Z) \big). \label{Eq: psi_- 2}
\end{align}
The proof is completed  by applying inequalities \eqref{Eq: psi_+ 1}, \eqref{Eq: psi_- 1}, \eqref{Eq: psi_+ 2} and \eqref{Eq: psi_- 2} to the expansion of $\mathrm{gen}_t$ in~\eqref{Eq: condition expectation gen_t}. 
\end{proof}

Let $\tilde{\theta}_t$ and $\tilZ$ be independent copies of $\theta_t$ and $Z$ respectively, such that $P_{\tilde{\theta}_t,\tilZ}=Q_{\theta_t}\otimes P_{Z}$, where $Q_{\theta_t}$ is the marginal distribution of $\theta_t$. 
For any fixed $\tilde{\theta}_t\in\Theta$, let the cumulant generating function (CGF) of $l(\tilde{\theta}_t,\tilZ)$ be
\begin{align}
	\Lambda_{l(\tilde{\theta}_t,\tilZ)}(\lambda,\tilde{\theta}_t):=\log \bbE_{\tilZ}[e^{\lambda(l(\tilde{\theta}_t,\tilZ)-\bbE_{\tilZ}[l(\tilde{\theta}_t,\tilZ)])}]. \label{Def:CGF fixed theta}
\end{align}

When the loss function $l(\theta,Z)\sim \mathrm{subG}(R)$ under $Z\sim P_Z$ for any $\theta\in\Theta$, we have $\Lambda_{l(\tilde{\theta}_t,\tilZ)}(\lambda,\tilde{\theta}_t)\leq \frac{R^2\lambda^2}{2}$ for all $\lambda\in\bbR$.
Then we can let $\psi_{-}(\lambda,\tilde{\theta}_t)=\psi_{+}(\lambda,\tilde{\theta}_t)=\frac{R^2\lambda^2}{2}$ for all $\tilde{\theta}_t\in\Theta$. Hence, $\psi_{+}(\lambda)=\psi_{-}(\lambda)=\sup_{\tilde{\theta}_t\in\Theta}\frac{R^2 \lambda^2}{2}=\frac{R^2 \lambda^2}{2}$ and  $\psi_+^{*-1}(y)=\psi_-^{*-1}(y)=\sqrt{2 R^2 y}$ for any $y\geq 0$.
Finally, Theorem \ref{Coro: sub Gau gen} can then be directly obtained from Theorem \ref{Thm: iterative gen bound}.

\section{Proof of Theorem \ref{Thm:exact gen}}\label{pf of Thm:exact gen}
Recall that $\tilde{\theta}_t$ and $\tilZ$ are independent copies of $\theta_t$ and $Z$ respectively, such that $P_{\tilde{\theta}_t,\tilZ}=Q_{\theta_t}\otimes P_Z$, $P_{\tilde{\theta}_t,\tilZ|\theta^{(t-1)}}=P_{\theta_t|\theta^{(t-1)}}\otimes P_Z$.

Recall the gen-error given in \eqref{Eq: condition expectation gen_t}. The first term in \eqref{Eq: condition expectation gen_t} can be rewritten as
\begin{align}
	&\bbE_{\theta^{(t-1)}}\Big[ \bbE_{\tilde{\theta}_t,\tilZ}\big[l(\tilde{\theta}_t,\tilZ)|\theta^{(t-1)}\big]-\bbE_{\theta_t,Z_i}\big[l(\theta_t,Z_i)|\theta^{(t-1)}\big] \Big] \nn\\
	&=\bbE_{\tilde{\theta}_t,\tilZ}\big[-\log p_{\tilde{\theta}_t}\big]-\bbE_{\theta_t,Z_i}\big[-\log p_{\theta_t}\big] \\
	&=\bbE_{\theta_t}\Big[h(P_Z,p_{\theta_t})-h(P_{Z_i|\theta_t},p_{\theta_t}) \Big]\\
	&=\bbE_{\theta_t}\Big[\HD(P_Z\|P_{Z_i|\theta_t}|p_{\theta_t}) \Big]. \label{Eq:first term gen}
\end{align}
The second term in \eqref{Eq: condition expectation gen_t} can be rewritten as
\begin{align}
	&\bbE_{\tilde{\theta}_t,\tilZ}[l(\tilde{\theta}_t,\tilZ)]-\bbE_{\theta_t,X'_i,\hatY'_i}[l(\theta_t,(X'_i,\hatY'_i))] \nn\\
	&=\bbE_{\theta_t,Z_i}[-\log p_{\theta_t} ] -\bbE_{\theta_t,X'_i,\hatY'_i}[-\log p_{\theta_t} ] \\
	&=\bbE_{\theta^{(t-1)}}\Big[\bbE_{\theta_t|\theta^{(t-1)}}\big[h(P_Z,p_{\theta_t})- h(P_{X_i',\hatY_i'|\theta^{(t-1)}},p_{\theta_t})+h(P_{X_i',\hatY_i'|\theta^{(t-1)}},p_{\theta_t}) \nn\\
	&\quad -h(P_{X_i',\hatY_i'|\theta^{(t)}},p_{\theta_t}) \big]  \Big] \\
	&= \bbE_{\theta^{(t)}}\Big[\HD(P_Z \| P_{X_i',\hatY_i'|\theta^{(t-1)}} | p_{\theta_t})+\HD(P_{X_i',\hatY_i'|\theta^{(t-1)}} \| P_{X_i',\hatY_i'|\theta^{(t)}} |p_{\theta_t} )   \Big]. \label{Eq:second term gen}
\end{align}
By combining \eqref{Eq:first term gen} and \eqref{Eq:second term gen}, the gen-error is finally given by
\begin{align}
	&\mathrm{gen}_t(P_Z, P_X, \{P_{\theta_k|S_{\rml},S_{\rmu}}\}_{k=0}^t, \{f_{\theta_k}\}_{k=0}^{t-1} )\nn\\ &=\frac{w}{n}\sum_{i=1}^{n}\bbE_{\theta_t}\Big[\HD(P_Z\|P_{Z_i|\theta_t}|p_{\theta_t}) \Big] \nn\\
	&\quad +\frac{w}{m}\sum_{i\in\calI_t} \bbE_{\theta^{(t)}}\Big[\HD(P_Z \| P_{X_i',\hatY_i'|\theta^{(t-1)}} | p_{\theta_t})+\HD(P_{X_i',\hatY_i'|\theta^{(t-1)}} \| P_{X_i',\hatY_i'|\theta^{(t)}} |p_{\theta_t} )   \Big]\\
	&=\bbE_{\theta^{(t)}}\bigg[\frac{w}{n}\sum_{i=1}^{n}\HD(P_Z\|P_{Z_i|\theta_t}|p_{\theta_t}) +\frac{w}{m}\sum_{i\in\calI_t} \big(\HD(P_Z \| P_{X_i',\hatY_i'|\theta^{(t-1)}} | p_{\theta_t})\nn\\*
	&\quad +\HD(P_{X_i',\hatY_i'|\theta^{(t-1)}} \| P_{X_i',\hatY_i'|\theta^{(t)}} |p_{\theta_t} ) \big)  \bigg].
\end{align}
Theorem \ref{Thm:exact gen} is thus proved.

\section{Proof of Theorem \ref{Thm:exact gen GMM}}\label{pf of Thm:exact gen GMM}
In the following, we abbreviate $\mathrm{gen}_t(P_{\bZ}, P_{\bX}, \{P_{\btheta_k|S_{\rml},S_{\rmu}}\}_{k=0}^t, \{f_{\btheta_k}\}_{k=0}^{t-1} )$ as $\mathrm{gen}_t$, if there is no risk of confusion. When the labelled data are not reused in the subsequent iterations, for $t\geq 1$, $w=0$.
\begin{itemize}
	\item \textbf{Iteration $t=0$:} Since $Y_i\bX_i\overset{\text{i.i.d.}}{\sim} \calN(\bmu,\sigma^2 \bI_d)$, we have $\btheta_0\sim \calN(\bmu,\frac{\sigma^2}{n} \bI_d)$. The gen-error $\mathrm{gen}_0$ is given by
	\begin{align}
		\mathrm{gen}_0&=\bbE_{\btheta_0} \bigg[\bbE_{\bZ}[-\log p_{\btheta_0}(\bZ)]- \bbE_{\bZ_i|\btheta_0}[-\log p_{\btheta_0}(\bZ_i)] \bigg] \nn\\
		&=\int Q_{\btheta_0}(\btheta)(P_{\bZ}(\bz)-P_{\bZ_i|\btheta_0}(\bz|\btheta))\log\frac{1}{p_{\btheta}(\bz)} \rmd \bz \rmd \btheta\\
		&=\frac{1}{2\sigma^2} \int Q_{\btheta_0}(\btheta)(P_{\bZ}(\bx,y)-P_{\bZ_i|\btheta_0}(\bx,y|\btheta)) \big(\bx^\top\bx-2y\btheta^\top\bx+\btheta^\top\btheta \big) \rmd \bx \rmd y \rmd \btheta\\
		&=-\frac{1}{2\sigma^2}\int P_{\bZ}(\bx,y)(Q_{\btheta_0}(\btheta)-P_{\btheta_0|\bZ_i}(\btheta|\bx,y))2y\btheta^\top\bx ~\rmd \bx \rmd y \rmd \btheta\\
		&=-\frac{1}{2\sigma^2}\int \Big( P_{\bZ}(\bx,1)(Q_{\btheta_0}(\btheta)-P_{\btheta_0|\bZ_i}(\btheta|\bx,1)) \nn\\*
		&\qquad \qquad \quad -  P_{\bZ}(\bx,-1)(Q_{\btheta_0}(\btheta)-P_{\btheta_0|\bZ_i}(\btheta|\bx,-1)) \Big)2\btheta^\top\bx ~\rmd \bx \rmd \btheta\\
		&=-\frac{1}{\sigma^2}\int \Big(\frac{\bmu-\bx}{n} P_{\bZ}(\bx,1)- \frac{\bmu+\bx}{n}P_{\bZ}(\bx,-1) \Big)^\top\bx ~\rmd \bx\\
		&=-\frac{1}{\sigma^2}\bigg(-\frac{d\sigma^2}{2n}-\frac{d\sigma^2}{2n} \bigg)\\
		&=\frac{d}{n}.
	\end{align}
	
	\item \textbf{Pseudo-label using $\btheta_0$:} For any $i\in[1:m]$ and $X_i'\in S_{\rmu}$, the pseudo-label is
	\begin{align}
		\hat{Y}_{i}'=\sgn(\btheta_0^\top \bX'_i).
	\end{align}
	Given any pair of $(\xi_0,\bmu^{\bot})$, $\btheta_0$ is fixed and $\{\hat{Y}_{i}' \}_{i\in[1:m]}$ are conditionally i.i.d.\ from  $P_{\hatY'|\xi_0,\bmu^{\bot}}\in\calP(\calY)$.
	Recall the pseudo-labelled dataset is defined as $\hat{S}_{u,1}=\{(\bX'_i,\hat{Y}_{i}')\}_{i=1}^m$. 
	
	Since $\btheta_0\sim\calN(\bmu,\frac{\sigma^2}{n} \bI_d)$, inspired by \citet{oymak2021theoretical}, we can decompose it as follows:
	\begin{align}\label{Eq:theta_0}
		\btheta_0&=\bmu+\frac{\sigma}{\sqrt{n}}\bxi=\bmu+\frac{\sigma}{\sqrt{n}}(\xi_0\bmu+\bmu^{\bot})=\bigg(1+\frac{\sigma}{\sqrt{n}}\xi_0\bigg)\bmu+\frac{\sigma}{\sqrt{n}}\bmu^{\bot}, 
	\end{align}
	where $\bxi\sim\calN(0,\bI_d)$, $\xi_0\sim\calN(0,1)$, $\bmu^{\bot} \perp \bmu$, $\bmu^{\bot}\sim\calN(0,\bI_d-\bmu\bmu^\top)$ and $\bmu^{\bot}$ is independent of $\xi_0$.
	
	Recall the correlation between $\btheta_0$ and $\bmu$ given in~\eqref{Eq:rho_0}, the decomposition of $\bar{\btheta}_0$ in~\eqref{Eq:theta0 decomp mu up} and $\alpha, \beta$.
	Since $\sgn(\btheta_0^\top \bX_i')=\sgn(\bar{\btheta}_0^\top \bX_i')$, in the following we can analyze the normalized parameter $\bar{\btheta}_0$ instead. 
	
	
	Given any $(\xi_0,\bmu^{\bot})$, $\alpha$ is fixed, and for any $i\in\bbN$, let us define a Gaussian noise vector $\bg_i\sim\calN(0,\bI_d)$ and decompose it as follows
	\begin{align}
		\bg_i=g_{0,i} \bmu+ g_i \bup + \bg_i^{\bot}, \label{Eq: Gaussian vec decomp1}
	\end{align}
	where $g_{0,i}, g_i \sim \calN(0,1)$, $\bg_i^{\bot}\sim \calN(0,\bI_d-\bmu\bmu^\top-\bup\bup^\top)$, $\bg_i^{\bot}\perp \bmu$, $\bg_i^{\bot}\perp \bup$, and $g_{0,i}, g_i, \bg^{\bot}_i$ are mutually independent.
	
	For any sample $\bX'_i\sim \calN(\bmu,\sigma^2 \bI_d)$, we can decompose it as
	\begin{align}
		\bX_i'&=\bmu+\sigma \bg_i=\bmu +\sigma (g_{0,i} \bmu+ g_i \bup + \bg_i^{\bot}). \label{Eq:X_i decomp}
	\end{align}  
	Then we have
	\begin{align}
		\bar{\btheta}_0^\top \bX_i'&=(\alpha \bmu+ \beta \bup)^\top (\bmu+\sigma \bg_i)\\
		&=\alpha+\sigma(\alpha \bmu+ \beta \bup)^\top(g_{0,i} \bmu+ g_i \bup + \bg_i^{\bot})\\
		&=\alpha+\sigma(\alpha g_{0,i}+\beta g_i)=:\alpha+\sigma h_i. \label{Eq:theta_0X_i decomp}
	\end{align}
	Note that $h_i \sim \calN(0,1)$ for any $\alpha\in[-1,1]$. 
	Similarly, for any sample $\bX'_i\sim\calN(-\mu,\sigma^2 \bI_d)$, we have 
	\begin{align}
		\bX'_i=-\bmu+\sigma\bg_i
	\end{align}
	and 
	\begin{align}
		\bar{\btheta}_0^\top \bX_i'&=-\alpha+\sigma h_i.
	\end{align}
	
	Denote the true label of $\bX'_i$ as $Y'_i$ and $P_{Y'_i}=P_Y= \mathrm{unif}(\{-1,+1\})$.
	The probability that the pseudo-label $\hatY_i'$ is equal to $1$ is given by
	\begin{align}
		\Pr(\hatY_i'=1)&=\Pr\big(\bar{\btheta}_0^\top \bX'_i>0\big)\\
		&=\frac{1}{2} \Pr \big(\bar{\btheta}_0^\top \bX'_i>0| Y'_i=1 \big)+\frac{1}{2} \Pr \big(\bar{\btheta}_0^\top \bX'_i>0| Y'_i=-1\big)\\
		&=\frac{1}{2}\bbE_\alpha\big[\Pr(\alpha+\sigma h_i >0) \big]+\frac{1}{2}\bbE_\alpha\big[\Pr\big(-\alpha+\sigma h_i >0 \big) \big]\\
		&=\frac{1}{2}\bbE_\alpha\bigg[\rmQ \bigg(-\frac{\alpha}{\sigma} \bigg)\bigg]+\frac{1}{2}\bbE_\alpha\bigg[\rmQ \bigg(\frac{\alpha}{\sigma} \bigg) \bigg]=\frac{1}{2}. \label{Eq:hatY prob}
	\end{align}
	We also have $\Pr(\hatY_i'=-1)=1-\Pr(\hatY_i'=1)=1/2$, and so $P_{\hatY_i'}=P_Y$.
	
	\item \textbf{Iteration $t=1$:} Recall \eqref{Eq:theta_t} and the new model parameter learned from the pseudo-labelled dataset $\hatS_{\rmu,1}$ is given by
	\begin{align}
		\btheta_1&=\frac{1}{m}\sum_{i=1}^m \hatY_i'\bX'_i=\frac{1}{m}\sum_{i=1}^m\sgn(\btheta_0^\top \bX'_i)\bX'_i=\frac{1}{m}\sum_{i=1}^m\sgn(\bar{\btheta}_0^\top \bX'_i)\bX'_i.  \label{Eq: theta1}
	\end{align}
	First let us calculate the conditional expectation of $\btheta_1$ given $\btheta_0$.	
	Given any $(\xi_0,\bmu^{\bot})$, for any $j\in[1:m]$, let $\bmu_1^{\xi_0,\bmu^{\bot}}:=\bbE[\sgn(\bar{\btheta}_0^\top \bX'_j)\bX'_j|\xi_0,\bmu^{\bot}]$ and $\bbP_{\xi_0,\bmu^{\bot}}$ denotes the probability measure under the parameters $(\xi_0,\bmu^{\bot})$.

	The expectation $\bmu_1^{\xi_0,\bmu^{\bot}}$ can be calculated as follows: 
	\begin{align}
		\bmu_1^{\xi_0,\bmu^{\bot}}&=\bbE[\sgn(\bar{\btheta}_0^\top \bX'_j)\bX'_j|\xi_0,\bmu^{\bot}] \\
		&=\bbE_{Y_j'}[~\bbE[\sgn(\bar{\btheta}_0^\top \bX'_j)\bX'_j ~|~ \xi_0,\bmu^{\bot},Y_j']~]\\
		&=\frac{1}{2}\bbE[\sgn(\bar{\btheta}_0^\top \bX'_j)\bX'_j ~|~\xi_0,\bmu^{\bot}, Y_j'=-1]+\frac{1}{2}\bbE[\sgn(\bar{\btheta}_0^\top \bX'_j)\bX'_j ~|~\xi_0,\bmu^{\bot}, Y_j'=1].
	\end{align}

	In contrast to~\eqref{Eq: Gaussian vec decomp1}, here we decompose the Gaussian random vector $\bg_j\sim\calN(0,\bI_d)$ in another way
	\begin{align}
		\bg_j=\tilg_j \bar{\btheta}_0+\tilde{\bg}_j^{\bot}, \label{Eq:new decomp Gaussian vec}
	\end{align}
	where $\tilg_j\sim\calN(0,1)$,  $\tilde{\bg}_j^{\bot}\sim\calN(0,\bI_d-\bar{\btheta}_0\bar{\btheta}_0^\top)$, $\tilg_j$ and $\tilde{\bg}_j^{\bot}$ are mutually independent and $\tilde{\bg}_j^{\bot} \perp \bar{\btheta}_0$. 
	
	Then we decompose $\bX_j'$ and $\bar{\btheta}_0^\top\bX_j'$ as
	\begin{align}
		\bX_j'&=Y_j'\bmu+\sigma\tilg_j \bar{\btheta}_0+\sigma\tilde{\bg}_j^{\bot}, \text{~and} \label{Eq:X new decomp} \\ 
		\bar{\btheta}_0^\top\bX_j'&=Y_j'\alpha+\sigma \tilg_j. \label{Eq:theta*X new decomp}
	\end{align}		
	Then we have
	\begin{align}
		&\bbE[\sgn(\bar{\btheta}_0^\top \bX'_j)\bX'_j ~|~ \xi_0,\bmu^{\bot}, Y_j'=-1] \nn\\
		&=\bbE[\sgn(-\alpha+\sigma \tilg_j)(-\bmu+\sigma\tilg_j \bar{\btheta}_0+\sigma\tilde{\bg}^{\bot}) ~|~ \xi_0,\bmu^{\bot}]\\
		&=-\bbE[\sgn(-\alpha+\sigma \tilg_j)| \xi_0,\bmu^{\bot}]\bmu+ \sigma\bbE[\sgn(-\alpha+\sigma \tilg_j)\tilg_j|\xi_0,\bmu^{\bot}]\bar{\btheta}_0 \nn\\
		&\quad +\sigma\bbE[\sgn(-\alpha+\sigma \tilg_j)\tilde{\bg}^{\bot}| \xi_0,\bmu^{\bot}]\\
		&=-\bbE[\sgn(-\alpha+\sigma \tilg_j)| \xi_0,\bmu^{\bot}]\bmu+ \sigma\bbE[\sgn(-\alpha+\sigma \tilg_j)\tilg_j|\xi_0,\bmu^{\bot}]\bar{\btheta}_0 \label{Eq:simplify E[theta_1]},
	\end{align}
	where \eqref{Eq:simplify E[theta_1]} follows since $\tilde{\bg}^{\bot}$ is independent of $\tilg_j$ and $\bbE[\tilde{\bg}^{\bot}]=0$.
	
	For the first term in \eqref{Eq:simplify E[theta_1]}, recall $\tilg_j\sim\calN(0,1)$ and we have
	\begin{align}
		&-\bbE[\sgn(-\alpha+\sigma \tilg_j)| \xi_0,\bmu^{\bot}]\bmu=\bigg(1-2\rmQ \bigg(\frac{\alpha}{\sigma}\bigg) \bigg)\bmu \label{Eq:1st term in simplify E[theta_1]}.
	\end{align}
	For the second term in \eqref{Eq:simplify E[theta_1]}, we have
	\begin{align}
		&\bbE[\sgn(-\alpha+\sigma \tilg_j)\tilg_j|\xi_0,\bmu^{\bot}]\bar{\btheta}_0 \nn\\
		&=\bigg(-\int_{-\infty}^{\frac{\alpha}{\sigma}}\frac{1}{\sqrt{2\pi}}e^{-\frac{g^2}{2}}g ~\rmd g\!+\!\int_{\frac{\alpha}{\sigma}}^{\infty}\frac{1}{\sqrt{2\pi}}e^{-\frac{g^2}{2}}g ~\rmd g \bigg)\bar{\btheta}_0
		\!=\!\frac{2}{\sqrt{2\pi}}\exp\bigg(-\frac{\alpha^2}{2\sigma^2}\bigg)\bar{\btheta}_0. \label{Eq:2nd term in simplify E[theta_1]}
	\end{align}
	By combining \eqref{Eq:1st term in simplify E[theta_1]} and \eqref{Eq:2nd term in simplify E[theta_1]}, we have
	\begin{align}
		\bbE\big[\sgn\big(\bar{\btheta}_0^\top \bX'_j\big)\bX'_j ~|~\xi_0,\bmu^{\bot}, Y_j'=-1 \big]  
		&=\bigg(1-2\rmQ \bigg(\frac{\alpha}{\sigma}\bigg) \bigg)\bmu+\frac{2\sigma}{\sqrt{2\pi}}\exp\bigg(-\frac{\alpha^2}{2\sigma^2}\bigg)\bar{\btheta}_0,
	\end{align}
	and similarly, 
	\begin{align}
		\bbE\big[\sgn\big(\bar{\btheta}_0^\top \bX'_j\big)\bX'_j ~|~\xi_0,\bmu^{\bot}, Y_j'=1 \big]  
		&=\bigg(2\rmQ \bigg(-\frac{\alpha}{\sigma}\bigg)-1 \bigg)\bmu+\frac{2\sigma}{\sqrt{2\pi}}\exp\bigg(-\frac{\alpha^2}{2\sigma^2}\bigg)\bar{\btheta}_0.
	\end{align}
	Thus, recall $J_{\sigma}$ and $K_{\sigma}$ defined in \eqref{Eq:J func} and \eqref{Eq:K func}, $\bar{\btheta}_0=\alpha\bmu+\beta \bup$ and $\bmu_1^{\xi_0,\bmu^{\bot}}$ is given by
	\begin{align}
		&\bmu_1^{\xi_0,\bmu^{\bot}}=\bbE[\sgn(\btheta_0^\top \bX'_j)\bX'_j|\xi_0,\bmu^{\bot}] \nn\\
		&=\bigg(1-2\rmQ \bigg(\frac{\alpha}{\sigma}\bigg) \bigg)\bmu+\frac{2\sigma}{\sqrt{2\pi}}\exp\bigg(-\frac{\alpha^2}{2\sigma^2}\bigg)\bar{\btheta}_0\\
		&=\bigg(1-2\rmQ \bigg(\frac{\alpha}{\sigma}\bigg)+\frac{2\sigma \alpha}{\sqrt{2\pi}}\exp\bigg(-\frac{\alpha^2}{2\sigma^2}\bigg) \bigg)\bmu+\frac{2\sigma \beta}{\sqrt{2\pi}}\exp\bigg(-\frac{\alpha^2}{2\sigma^2}\bigg)\bup\\
		&=J_{\sigma}(\alpha(\xi_0,\bmu^{\bot}))\bmu+K_{\sigma}(\alpha(\xi_0,\bmu^{\bot}))\bup. \label{Eq:mu1 decomp}
	\end{align}
	From \eqref{Eq:exact gent}, the gen-error at $t=1$ is given by
	\begin{align}
		\mathrm{gen}_1&=\frac{1}{m}\sum_{i=1}^{m}\bbE_{\xi_0,\bmu^{\bot}}\bbE_{\btheta_1|\xi_0,\bmu^{\bot}}\Big[\HD(P_{\bX_i',\hatY_i'|\xi_0,\bmu^{\bot}}\| P_{\bX_i',\hatY_i'|\xi_0,\bmu^{\bot},\btheta_1}| p_{\btheta_1}) \nn\\
		&\qquad +\HD(P_{\bZ}\| P_{\bX_i',\hatY_i'|\xi_0,\bmu^{\bot}}|p_{\btheta_1}) \Big]. \label{Eq:exact gen1}
	\end{align}
	Next, we calculate the two $\HD$ terms in \eqref{Eq:exact gen1} respectively.
	\begin{itemize}
		\item 	\textbf{Calculate $\bbE_{\btheta_1|\xi_0,\bmu^{\bot}}\big[  \HD(P_{\bX_i',\hatY_i'|\xi_0,\bmu^{\bot}}\| P_{\bX_i',\hatY_i'|\xi_0,\bmu^{\bot},\btheta_1}| p_{\btheta_1})\big]$:}
		\begin{align}
			&\bbE_{\btheta_1|\xi_0,\bmu^{\bot}}\big[  \HD(P_{\bX_i',\hatY_i'|\xi_0,\bmu^{\bot}}\|P_{\bX_i',\hatY_i'|\xi_0,\bmu^{\bot},\btheta_1}| p_{\btheta_1})\big]\nn\\
			&=\bbE_{\btheta_1|\xi_0,\bmu^{\bot}}\Big[h(P_{\bX_i',\hatY_i'|\xi_0,\bmu^{\bot}},p_{\btheta_1})-h(P_{\bX_i',\hatY_i'|\xi_0,\bmu^{\bot},\btheta_1},p_{\btheta_1}) \Big]\\
			&=\frac{1}{2\sigma^2}\int Q_{\btheta_1|\xi_0,\bmu^{\bot}}(\btheta|\xi_0,\bmu^{\bot}) \big(P_{\bX_i',\hatY_i'|\xi_0,\bmu^{\bot}}(\bx,y|\xi_0,\bmu^{\bot}) \nn\\
			&\qquad  \qquad -P_{\bX_i',\hatY_i'|\xi_0,\bmu^{\bot},\btheta_1}(\bx,y|\xi_0,\bmu^{\bot},\btheta) \big)(\bx^\top\bx-2y\btheta^\top\bx+\btheta^\top \btheta) \rmd \bx \rmd y \rmd \btheta \\
			&=-\frac{1}{\sigma^2}\int P_{\bX_i',\hatY_i'|\xi_0,\bmu^{\bot}}(\bx,y|\xi_0,\bmu^{\bot}) \big(Q_{\btheta_1|\xi_0,\bmu^{\bot}}(\btheta|\xi_0,\bmu^{\bot}) \nn\\*
			&\qquad \qquad  -P_{\btheta_1|\bX_i',\hatY_i',\xi_0,\bmu^{\bot}}(\btheta| \bx,y,\xi_0,\bmu^{\bot}) \big)(y\btheta^\top\bx) \rmd \bx \rmd y \rmd \btheta\\
			&=-\frac{1}{\sigma^2}\int P_{\bX_i',\hatY_i'|\xi_0,\bmu^{\bot}}(\bx,y|\xi_0,\bmu^{\bot}) \bigg(\frac{\bmu_1^{\xi_0,\bmu^{\bot}}-y\bx}{m} \bigg)^\top  (y\bx) \rmd \bx \rmd y \\
			&=\frac{1}{m\sigma^2}\Big(\bbE[\bX_i'^\top\bX_i'|\xi_0,\bmu^{\bot}]-(\bmu_1^{\xi_0,\bmu^{\bot}})^\top\bmu_1^{\xi_0,\bmu^{\bot}}\Big)\\
			&=\frac{d\sigma^2+\bmu^\top\bmu-(\bmu_1^{\xi_0,\bmu^{\bot}})^\top\bmu_1^{\xi_0,\bmu^{\bot}}}{m\sigma^2}\\
			&=\frac{d\sigma^2+1-J_{\sigma}^2(\alpha)-K_{\sigma}^2(\alpha)}{m\sigma^2}.
		\end{align}
		
		\item \textbf{Calculate $\HD(P_{\bZ}\| P_{\bX_i',\hatY_i'|\xi_0,\bmu^{\bot}}|p_{\btheta_1})$:}
		Given any $(\xi_0,\bmu^{\bot})$, in the following, we drop the condition on $\xi_0,\bmu^{\bot}$ for notational simplicity.
		Since $P_{\hatY_i'}(1)=P_{\hatY_i'}(-1)=P_{Y_i'}(1)=P_{Y_i'}(-1)=\frac{1}{2}$ (cf. \eqref{Eq:hatY prob}), we have
		\begin{align}
			&\HD(P_{\bZ}\| P_{\bX_i',\hatY_i'|\xi_0,\bmu^{\bot}}|p_{\btheta_1})=h(P_{\bZ},p_{\btheta_1})-h(P_{\bX_i',\hatY_i'|\xi_0,\bmu^{\bot}},p_{\btheta_1}) \nn\\
			&=\frac{1}{2}\int (P_{\bX|Y=1}(\bx)-P_{\bX_i'|\hatY_i'=1}(\bx))\log\frac{1}{ P_Y(1)p_{\btheta_1}(\bx|1)} \rmd \bx \nn\\
			&\quad +\frac{1}{2}\int (P_{\bX|Y=-1}(\bx)-P_{\bX_i'|\hatY_i'=-1}(\bx))\log\frac{1}{ P_Y(-1)p_{\btheta_1}(\bx|-1)} \rmd \bx \\
			&=\frac{1}{2}\int (P_{\bX|Y=1}(\bx)-P_{\bX_i'|\hatY_i'=1}(\bx))\log\frac{1}{ p_{\btheta_1}(\bx|1)} \rmd \bx \nn\\
			&\quad +\frac{1}{2}\int (P_{\bX|Y=-1}(\bx)-P_{\bX_i'|\hatY_i'=-1}(\bx))\log\frac{1}{ p_{\btheta_1}(\bx|-1)} \rmd \bx. \label{Eq:decomp cross-entropy div}
		\end{align}
		Since given $\btheta_1$, $p_{\btheta_1}(\bx|\cdot)$ is a Gaussian distribution, for any $y\in\{\pm 1\}$, we have
		\begin{align}
			&\frac{1}{2}\int P_{\btheta_1|\xi_0,\bmu^{\bot}}(\btheta) \big(P_{\bX|Y}(\bx|\by)-P_{\bX_i'|\hatY_i'}(\bx|y) \big)\log \frac{1}{ p_{\btheta}(\bx|y)} \rmd \bx  \rmd \btheta \nn\\
			&=\frac{1}{4\sigma^2}\int P_{\btheta_1|\xi_0,\bmu^{\bot}}(\btheta) \big(P_{\bX|Y}(\bx|y)-P_{\bX_i'|\hatY_i'}(\bx|y) \big)\big(\bx^\top\bx-2y\btheta^\top \bx+\btheta^\top\btheta \big) \rmd \bx \rmd \btheta\\
			&=-\frac{1}{2\sigma^2}\int P_{\btheta_1|\xi_0,\bmu^{\bot}}(\btheta) \bigg(\frac{1}{2}P_{\bX|Y}(\bx|y)-\frac{1}{2}P_{\bX_i'|\hatY_i'}(\bx|y) \bigg)\big(y\btheta^\top \bx \big) \rmd \bx \rmd \btheta\\
			&=-\frac{1}{2\sigma^2}(\bmu_1^{\xi_0,\bmu^{\bot}})^\top\big(\bmu-\bmu_1^{\xi_0,\bmu^{\bot}} \big)\\
			&=\frac{J_{\sigma}^2(\alpha)+K_{\sigma}^2(\alpha)-J_{\sigma}(\alpha)}{2\sigma^2}.
		\end{align}
		Thus,
		\begin{align}
			\bbE_{\btheta_1|\xi_0,\bmu^{\bot}}[\HD(P_{\bZ}\| P_{\bX_i',\hatY_i'|\xi_0,\bmu^{\bot}}|p_{\btheta_1})]=\frac{J_{\sigma}^2(\alpha)+K_{\sigma}^2(\alpha)-J_{\sigma}(\alpha)}{\sigma^2}.
		\end{align}
	\end{itemize}
	Finally, the gen-error at $t=1$ can be characterized as follows:
	\begin{align}
		&\mathrm{gen}_1=\bbE_{\xi_0,\bmu^{\bot}}\bigg[\frac{J_{\sigma}^2(\alpha(\xi_0,\bmu^{\bot}))+K_{\sigma}^2(\alpha(\xi_0,\bmu^{\bot}))-J_{\sigma}(\alpha(\xi_0,\bmu^{\bot}))}{\sigma^2} \nn\\
		&\qquad \qquad~ +\frac{d\sigma^2+1-J_{\sigma}^2(\alpha(\xi_0,\bmu^{\bot}))-K_{\sigma}^2(\alpha(\xi_0,\bmu^{\bot}))}{m\sigma^2} \bigg]\\
		&=\bbE_{\xi_0,\bmu^{\bot}}\bigg[\frac{(m-1)(J_{\sigma}^2(\alpha(\xi_0,\bmu^{\bot}))+K_{\sigma}^2(\alpha(\xi_0,\bmu^{\bot})))-mJ_{\sigma}(\alpha(\xi_0,\bmu^{\bot}))+d\sigma^2+1}{m\sigma^2}\bigg].
	\end{align}
	\item \textbf{Pseudo-label using $\btheta_1$:} Let $\bar{\btheta}_1:=\btheta_1/\|\btheta_1\|_2$.  For any $i\in\calI_2$, the pseudo-labels are given by
	\begin{align}
		\hatY_{i}'=\sgn(\btheta_1^\top \bX'_i)=\sgn(\bar{\btheta}_1^\top \bX'_i).
	\end{align}
	It can be seen that the pseudo-labels $\{\hatY_{i}'\}_{i\in\calI_2}$ are conditionally i.i.d.\ given $\btheta_1$ and let us denote the conditional   distribution under fixed $\btheta_1$ as $P_{\hatY'|\btheta_1}\in\calP(\calY)$. The pseudo-labelled dataset is denoted as $\hatS_{\rmu,2}=\{(\bX'_i,\hatY_{i}')\}_{i\in\calI_2}$. 
	
	For any fixed $(\btheta_1,\xi_0,\bmu^{\bot})$, let $\btheta_1$ be decomposed as $\btheta_1=A_1(\xi_0,\bmu^{\bot})\bmu+B_1(\xi_0,\bmu^{\bot})\bup$, where $A_1^2(\xi_0,\bmu^{\bot})+B_1^2(\xi_0,\bmu^{\bot})=\|\btheta_1\|_2^2$. In addition, let $\alpha_1(\xi_0,\bmu^{\bot}):=A_1/\sqrt{A_1^2+B_1^2}$ and $\beta_1(\xi_0,\bmu^{\bot})=\sqrt{1-(\alpha_1(\xi_0,\bmu^{\bot}))^2}$.
	 In the following, we use $A_1$, $B_1$, $\alpha_1$, and $\beta_1$ for the above quantities if there is no risk of confusion.
	
	Recall the decomposition of $\bX_i'$ and $\bar{\btheta}_0^\top\bX_i'$ in \eqref{Eq:X_i decomp} and \eqref{Eq:theta_0X_i decomp}. Similarly, we have
	\begin{align}
		&\bar{\btheta}_1^\top\bX'_i=:Y_i'\alpha_1+\sigma h_i^1, \label{Eq:theta_1*X}
	\end{align}
	where $h_i^1 \sim \calN(0,1)$. 
	Note that $P_{\hatY_i'|\btheta_1,\xi_0,\bmu^{\bot}}=P_{\hatY_i'|\btheta_1}$ and then the conditional probability $P_{\hatY_i'|\btheta_1,\xi_0,\bmu^{\bot}}$ can be given by
	\begin{align}
		P_{\hatY_i'|\btheta_1,\xi_0,\bmu^{\bot}}(1)&=P_{\hatY_i'|\btheta_1}(1) 
		=\bbP_{\btheta_1}\big(\bar{\btheta}_1^\top\bX_i'>0\big)\\
		&=\frac{1}{2}\bbP_{\btheta_1}\big(\alpha_1+\sigma h_i^1>0 \big)+\frac{1}{2}\bbP_{\btheta_1}\big(\alpha_1+\sigma h_i^1\leq 0 \big)
		=\frac{1}{2}, \label{Eq:P_hatY 2 prob}
	\end{align}
	and $P_{\hatY_i'|\btheta_1,\xi_0,\bmu^{\bot}}(-1)=1/2$, where $\bbP_{\btheta_1}$ denotes the probability measure under parameter $\btheta_1$.

	\begin{figure}[t!]
		\centering
		\begin{minipage}[t]{0.4\linewidth}
			\centering
			\includegraphics[width=1\textwidth]{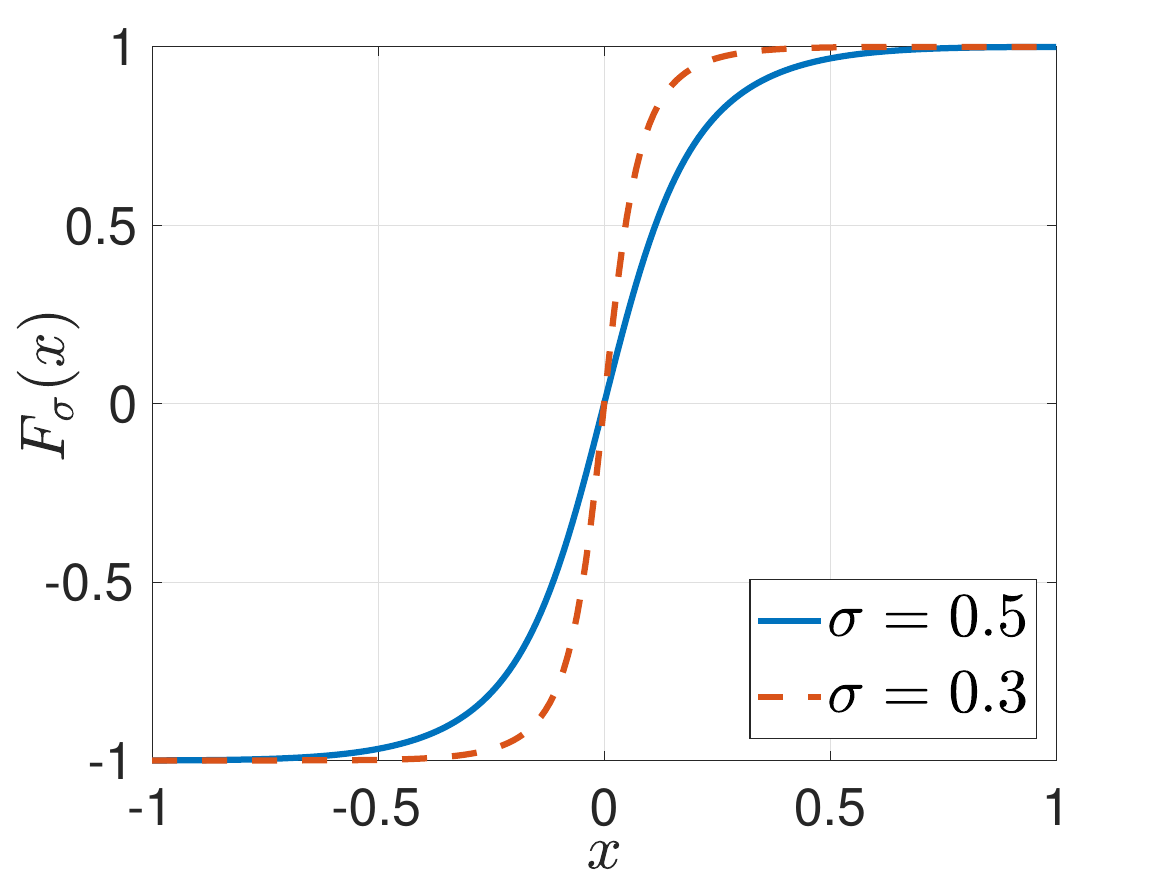}
			\caption{$F_{\sigma}(x)$ versus  $x$ for $\sigma\in\{0.3, 0.5\}$.}
			\label{Fig: Fsig_sig}
		\end{minipage}
		\hspace{5pt}
		\begin{minipage}[t]{0.4\linewidth}
			\centering
			\includegraphics[width=1\textwidth]{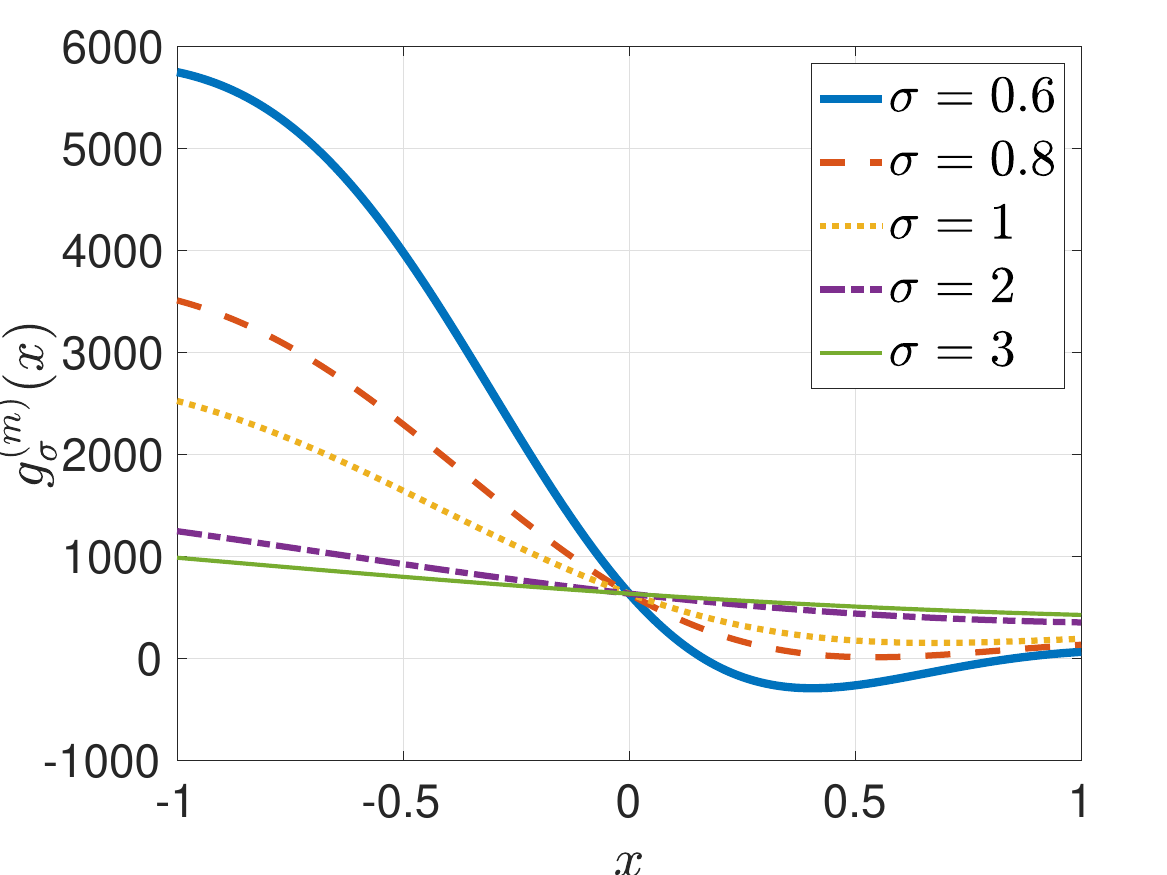}
			\caption{$g_{\sigma}^{(m)}(x)$ versus  $x$ for $\sigma\in\{0.6,0.8,1,2,3\}$.}
			\label{Fig: g_sig_m}
		\end{minipage}
	\end{figure}
	\item \textbf{Iteration $t=2$:} Recall \eqref{Eq:theta_t} and the new model parameter learned from the pseudo-labelled dataset $\hatS_{\rmu,2}$ is given by
	\begin{align}
		\btheta_2&=\frac{1}{m}\sum_{i\in\calI_2} \hatY_i'\bX'_i=\frac{1}{m}\sum_{i\in\calI_2}\sgn(\bar{\btheta}_1^\top \bX'_i)\bX'_i, \label{Eq:theta2}
	\end{align}
	where $\{ \sgn(\bar{\btheta}_1^\top \bX'_i)\bX'_i\}_{i\in\calI_2}$ are conditionally i.i.d.\ random variables given $\btheta_1,\xi_0,\bmu^{\bot}$.

	Similar to \eqref{Eq:mu1 decomp}, the expectation of $\btheta_2$ conditioned on $\btheta_1,\xi_0,\bmu^{\bot}$ is given by
	\begin{align}
		&\bmu_2^{\btheta_1,\xi_0,\bmu^{\bot}}:=\bbE[\btheta_2|\btheta_1,\xi_0,\bmu^{\bot}]=\bbE[\sgn(\bar{\btheta}_1^\top \bX'_j)\bX'_j|\btheta_1,\xi_0,\bmu^{\bot}] \nn\\
		&=\bigg(1-2\rmQ \bigg(\frac{\alpha_1}{\sigma}\bigg)+\frac{2\sigma \alpha_1}{\sqrt{2\pi}}\exp\bigg(-\frac{\alpha_1^2}{2\sigma^2}\bigg) \bigg)\bmu+\frac{2\sigma \beta_1}{\sqrt{2\pi}}\exp\bigg(-\frac{\alpha_1^2}{2\sigma^2}\bigg)\bup\\
		&=J_{\sigma}(\alpha_1(\xi_0,\bmu^{\bot}))\bmu+K_{\sigma}(\alpha_1(\xi_0,\bmu^{\bot}))\bup. \label{Eq:mu2 decomp}
	\end{align}
	As $m\to\infty$, by the strong law of large numbers, $\btheta_1|\xi_0,\bmu^{\bot} \to \bmu_1^{\xi_0,\bmu^{\bot}}$ almost surely. By the continuous mapping theorem, we also have $\alpha_1(\xi_0,\bmu^{\bot})\to F_{\sigma}(\alpha(\xi_0,\bmu^{\bot}))$ almost surely. Equivalently, for almost all sample paths,  there exists a vanishing sequence $\epsilon_m$ (i.e., $\epsilon_m\to 0$ as $m\to\infty$) such that $|\alpha_1(\xi_0,\bmu^{\bot})-F_{\sigma}(\alpha(\xi_0,\bmu^{\bot}))|=\epsilon_m$.
	
	The gen-error at $t=2$ is given by
	\begin{align}
		&\mathrm{gen}_2=\frac{1}{m}\sum_{i\in\calI_2}\bbE_{\btheta_1,\xi_0,\bmu^{\bot}}\Big[ \bbE_{\btheta_2|\btheta_1,\xi_0,\bmu^{\bot}}\big[ \HD(P_{\bZ}\| P_{\bX_i',\hatY_i'|\btheta_1,\xi_0,\bmu^{\bot}}|p_{\btheta_2}) \nn\\
		&\qquad \qquad + \HD(P_{\bX_i',\hatY_i'|\btheta_1,\xi_0,\bmu^{\bot}}\| P_{\bX_i',\hatY_i'|\btheta_2,\btheta_1,\xi_0,\bmu^{\bot}} | p_{\btheta_2}) \big] \Big].
	\end{align}
	
	By applying the same techniques in the iteration $t=1$, we obtain the exact characterization of gen-error at $t=2$ as follows. By the uniform continuity of $J_{\sigma}$ , for any vanishing sequence $\epsilon_m>0$, there exist $\epsilon_m', \epsilon_m''>0$ such that  $\sup_{x\in[0,1]}|J_{\sigma}(x+\epsilon_m)-J_{\sigma}(x)|=\epsilon_m'$ and $\sup_{x\in[0,1]}|J_{\sigma}(x-\epsilon_m)-J_{\sigma}(x)|=\epsilon_m''$, where $\epsilon_m',\epsilon_m''\downarrow 0$ as $\epsilon_m\downarrow 0$.
	The same result holds for $K_{\sigma}$.
	
	Finally we can obtain the gen-error as follows. For almost all sample paths, there exists a vanishing sequence $\epsilon_m$ (i.e., $\epsilon_m\to 0$ as $m\to\infty$), such that
	\begin{align}
		 \mathrm{gen}_2&=\bbE_{\xi_0,\bmu^{\bot}}\bigg[\frac{J_{\sigma}^2(F_{\sigma}(\alpha))+K_{\sigma}^2(F_{\sigma}(\alpha))-J_{\sigma}(F_{\sigma}(\alpha))}{\sigma^2} \nn\\
		&\qquad \qquad~ +\frac{d\sigma^2+1-J_{\sigma}^2(F_{\sigma}(\alpha))-K_{\sigma}^2(F_{\sigma}(\alpha))}{m\sigma^2} \bigg]+\epsilon_m\\
		&=\bbE_{\xi_0,\bmu^{\bot}}\bigg[\frac{(m-1)(J_{\sigma}^2(F_{\sigma}(\alpha))+K_{\sigma}^2(F_{\sigma}(\alpha)))-mJ_{\sigma}(F_{\sigma}(\alpha))}{m\sigma^2}\bigg]+\epsilon_m',
	\end{align}
	where $\epsilon_m'=\epsilon_m+\frac{d\sigma^2+1}{m\sigma^2}$ and $\alpha$ stands for $\alpha(\xi_0,\bmu^{\bot})$.
	
	\item \textbf{Iteration $t\in[2:\tau]$:} By iteratively implementing the calculation, we finally obtain the characterization of $\mathrm{gen}_t$ as follows. For almost all sample paths, there exists a vanishing sequence $\epsilon_{m,t}$ ($\epsilon_{m,t}\to 0$ as $m\to\infty$) , such that
	\begin{align}
		&\mathrm{gen}_t=\bbE_{\xi_0,\bmu^{\bot}}\bigg[\frac{(m-1)(J_{\sigma}^2(F_{\sigma}^{(t-1)}(\alpha))+K_{\sigma}^2(F_{\sigma}^{(t-1)}(\alpha)))-mJ_{\sigma}(F_{\sigma}^{(t-1)}(\alpha))}{m\sigma^2}\bigg]+\epsilon_{m,t}',
	\end{align}
	where $\epsilon_{m,t}'=\epsilon_{m,t}+\frac{d\sigma^2+1}{m\sigma^2}$ and $\alpha$ stands for $\alpha(\xi_0,\bmu^{\bot})$.
	
	The proof is thus completed.
\end{itemize}

\begin{remark}[Numerical plots of $F_{\sigma}^{(t)}(\cdot)$ and $g_{\sigma}^{(m)}(\cdot)$]
Recall $g_{\sigma}^{(m)}(x)=((m-1)(J_{\sigma}^2(x)+K_{\sigma}^2(x))-mJ_{\sigma}(x))/\sigma^2$ for any $x\in[-1,1]$, which is the quantity that determines the behaviour of \eqref{Eq:exact gent bGMM}.
To gain more insight, we numerically plot $F_{\sigma}^{(t)}(x)$ for $t=0,1,2$ in Figure \ref{Fig: Fsig} and $F_{\sigma}(x)$ under different $\sigma$ in Figure \ref{Fig: Fsig_sig}. We also plot $g_{\sigma}^{(m)}(x)$ under different $\sigma$ in Figure \ref{Fig: g_sig_m}.
\end{remark}

\section{Applying Theorem \ref{Coro: sub Gau gen} to bGMMs}\label{pf of Thm:gen bound GMM}

In anticipation of leveraging Theorem~\ref{Coro: sub Gau gen} together with the sub-Gaussianity of the loss function for the bGMM to derive generalization bounds in terms of information-theoretic quantities (just as in  \citet{russo2016controlling,xu2017information, bu2020tightening}), we find it convenient to show that  
$\bX$ and $l(\btheta,(\bX,Y))$ are bounded w.h.p.. By defining the $\ell_\infty$ ball $\calB_r^y:=\{\bx \in\bbR^d: \|\bx-y \bmu\|_\infty \leq r \}$, we see that 
\vspace{-0.6em}
\begin{align}
	\Pr (\bX\in\calB_r^Y)= \bigg(1-2\Phi\Big(-\frac{ r}{\sigma} \Big)\bigg)^d=:1-\delta_{r,d},
\end{align}
where $\Phi(\cdot)$ is the Gaussian cumulative distribution function.
By choosing $r$ appropriately, the failure probability $\delta_{r,d}$ can be made arbitrarily small.

To show that $\btheta$ is bounded with high probability, define the set $\Theta_{\bmu,c}:=\{\btheta\in\Theta: \|\btheta-\bmu\|_\infty\leq c\}$ for some $c>0$.
For any $\btheta\in\Theta_{\bmu,c}$, we have
\begin{align}
	\min_{(\bx,y)\in\calZ}  l(\btheta,(\bx,y))&= \log (2\sqrt{(2\pi)^d}\sigma^d ) =:c_1, \text{~~~and} \\
	   	\max_{\bx\in\calB_r^y, y\in\calY}  l(\btheta,(\bx,y))& \le  \log (2\sqrt{(2\pi)^d}\sigma^d)  +  \frac{d (c+r)^2}{2\sigma^2}=:c_2.
\end{align} 
For any $(\bX,Y)$ from the bGMM($\bmu,\sigma$) and any $\btheta\in\Theta_{\bmu,c}$, the probability that $l(\btheta,(\bX,Y))$ belongs to the interval $[c_1,c_2]$ ($c_1$, $c_2$ depend on $\delta_{r,d}$) can be lower bounded by
\vspace{-0.6em}
\begin{align}
	&\Pr\big(l(\btheta,(\bX,Y))\in[c_1,c_2] \big) 
	\geq 1-\delta_{r,d}.
\end{align}

Thus, according to Hoeffding's lemma, with probability at least $1-\delta_{r,d}$, $l(\btheta,(\bX,Y))\sim \mathrm{subG}((c_2-c_1)/2)$ under $(\bX,Y)\sim  \calN(Y\bmu,\sigma^2 \bI_d)\otimes P_Y$ for all $\btheta\in\Theta_{\bmu,c}$, i.e., for all $\lambda\in\bbR$,
\begin{align}
	&\bbE_{\bX,Y}\big[\exp\big(\lambda\big(l(\btheta,(\bX,Y))-\bbE_{\bX,Y}[l(\btheta,(\bX,Y))]\big) \big)\big] \nn\\
	&\leq \exp\big(\lambda^2 (c_2-c_1)^2/8 \big). \label{Eq:loss sub-Gaussian}
\end{align}

Recall the definition of $\alpha$ in \eqref{Eq:rho_0} and the decomposition of $\bar{\btheta}_0$ in \eqref{Eq:theta0 decomp mu up}.
Define the {\em KL-divergence  between the pseudo-labelled data distribution and the
	true data distribution after the first iteration} $G_\sigma: [-1,1]\times\bbR\times\bbR^d\to [0,\infty)$, as follows:
\begin{align}
	G_{\sigma}(\alpha,\xi_0,\bmu^{\bot})  :=  D\big( p'_{\tilg,\tilde{\bg}^{\bot}} \big\| p_{\tilg} \otimes p_{\tilde{\bg}^{\bot}} \big), \label{Eq:G_sigma}
\end{align}
where
\vspace{-1em}
\begin{align}
	p'_{\tilg,\tilde{\bg}^{\bot}}=&\Phi\Big(-\frac{\alpha}{\sigma}\Big)p_{\tilg+\frac{2\alpha}{\sigma}|\tilg\leq -\frac{\alpha}{\sigma}}\otimes p_{\tilde{\bg}^{\bot}+\bar{\btheta}_0^{\bot}} +\Phi\Big(\frac{\alpha}{\sigma}\Big)p_{\tilg|\tilg\leq \frac{\alpha}{\sigma}} \otimes p_{\tilde{\bg}^{\bot}},
\end{align}
$\tilg\sim\calN(0,1)$, $\tilde{\bg}^{\bot}\sim\calN(0,\bI_d-\bar{\btheta}_0\bar{\btheta}_0^\top)$, $\tilde{\bg}^{\bot}$ is independent of $\tilg$ and perpendicular to $\bar{\btheta}_0$. Note that $p_{\tilg+\frac{2\alpha}{\sigma}|\tilg\leq -\frac{\alpha}{\sigma}}$ is the Gaussian probability density function with mean $\frac{2\alpha}{\sigma}$ and variance $1$ {\em truncated} to   the interval $(-\infty,-\frac{ \alpha}{\sigma})$, and similarly for $p_{\tilg|\tilg\leq \frac{\alpha}{\sigma}}$.
In general, when $G_{\sigma}(\alpha,\xi_0,\bmu^{\bot})$ is small, so is the generalization error.

By applying the result in Theorem \ref{Coro: sub Gau gen}, the following theorem provides an upper bound for the generalization error at each iteration $t$ for $m$ large enough. 
\begin{theorem}\label{Thm:gen bound GMM}
	Fix any $\sigma\in\bbR_+$, $d\in\bbN$, $\epsilon\in\bbR_+$ and $\delta\in(0,1)$.
	With probability at least $1-\delta$, 
	the absolute generalization error at $t=0$ can be upper bounded as follows
	\begin{align}
		&    \big|\mathrm{gen}_0(P_\bZ, P_\bX, P_{\btheta_0|S_{\rml},S_{\rmu}} ) \big| \leq \sqrt{ \frac{(c_2-c_1)^2 d}{4}  \log\frac{n}{n-1} }. \label{Eq:gen_0 bound}
	\end{align}
	For each $t\in[1:\tau]$, for $m$ large enough, with probability at least $1-\delta$,  
	\begin{align}
		&\big|\mathrm{gen}_t(P_\bZ, P_\bX, \{P_{\btheta_k|S_{\rml},S_{\rmu}}\}_{k=0}^t, \{f_{\btheta_k}\}_{k=0}^{t-1} ) \big| \nn\\
		&   \leq  \frac{c_2-c_1}{\sqrt{2}}\bbE_{\xi_0,\bmu^{\bot}}\bigg[ \sqrt{G_{\sigma}\big(F_\sigma^{(t-1)}(\alpha(\xi_0,\bmu^{\bot})),\xi_0,\bmu^{\bot}\big) + \epsilon} \bigg]. \label{Eq:upper bd gent}
	\end{align}
\end{theorem}
The proof of Theorem \ref{Thm:gen bound GMM} is provided in Appendix \ref{pf: proof of thm: gen bound gmm}.
Several remarks about Theorem \ref{Thm:gen bound GMM} are as follows.

\begin{figure}[!t]
	\centering
	\begin{minipage}{1\linewidth}
		\centering
		\vspace{-1em}
		\subfigure[$\sigma=0.6$]{
			\includegraphics[width=0.4\linewidth, trim=5 0 10 0,clip]{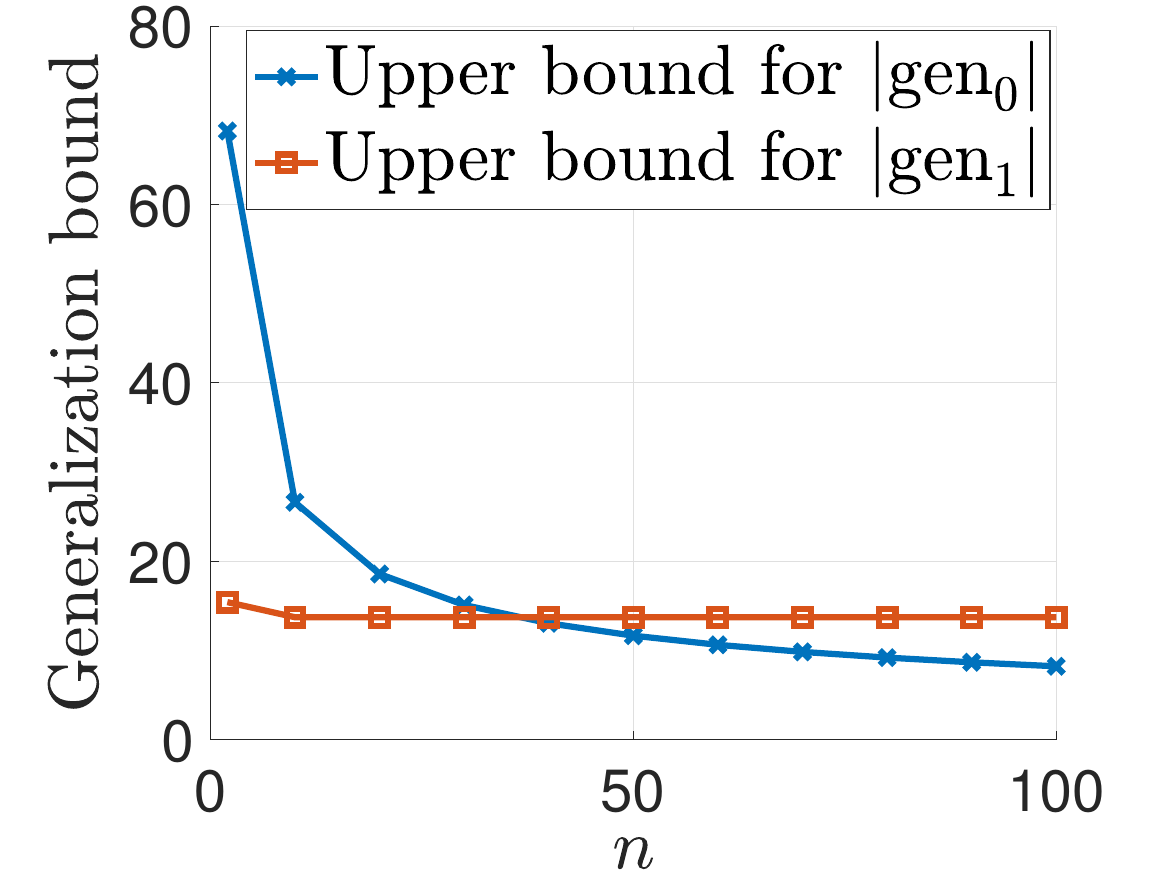}
			\label{Fig: gen_vs_sig 0.6}}
		\subfigure[$\sigma=3$]{
			\includegraphics[width=0.4\linewidth, trim=10 0 10 0,clip]{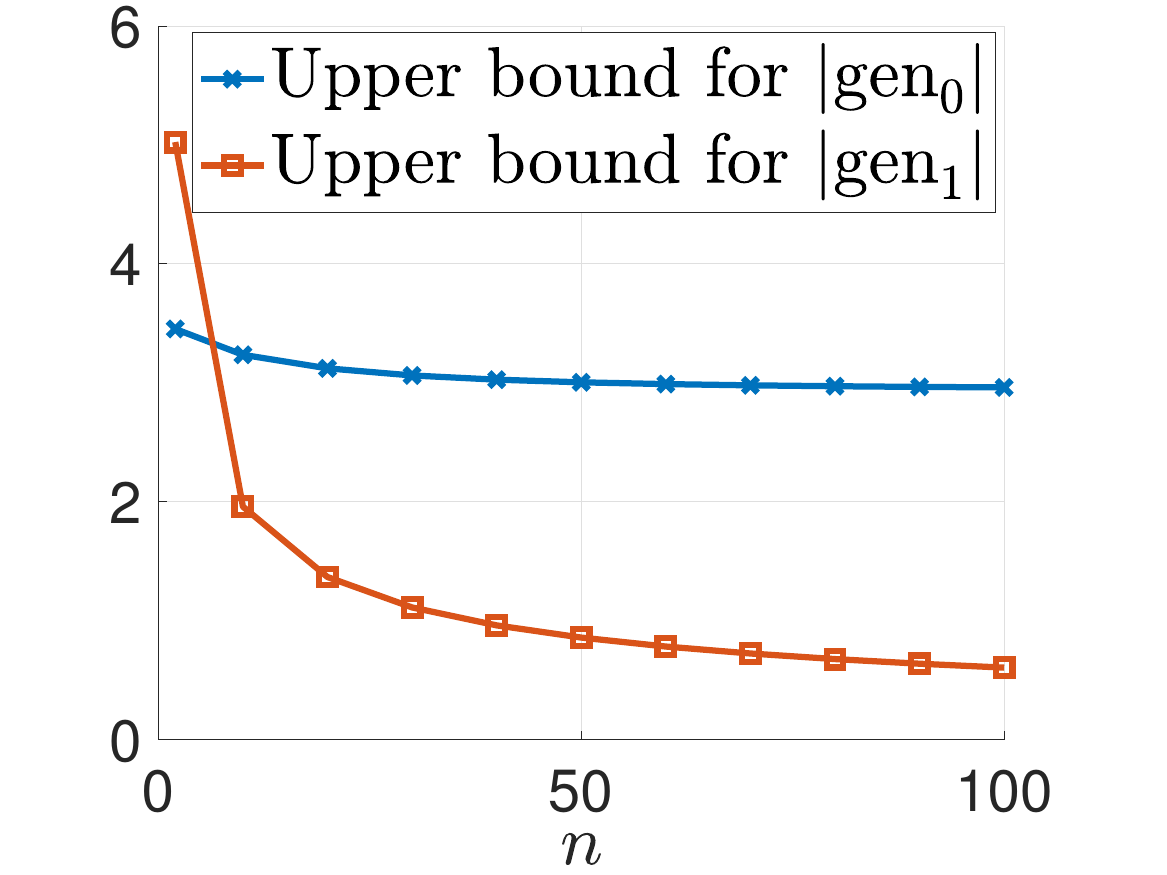}
			\label{Fig: gen_vs_sig 0.7}}
		\caption{Upper bounds for generalization error at $t=0$ and $t=1$ under different $\sigma$ when $d=2$ and $\bmu=(1,0)$. }
	\end{minipage}
	\begin{minipage}{1\linewidth}
		\centering
		\subfigure[$\sigma=0.6$]{
			\includegraphics[width=0.4\linewidth, trim=5 0 10 0,clip]{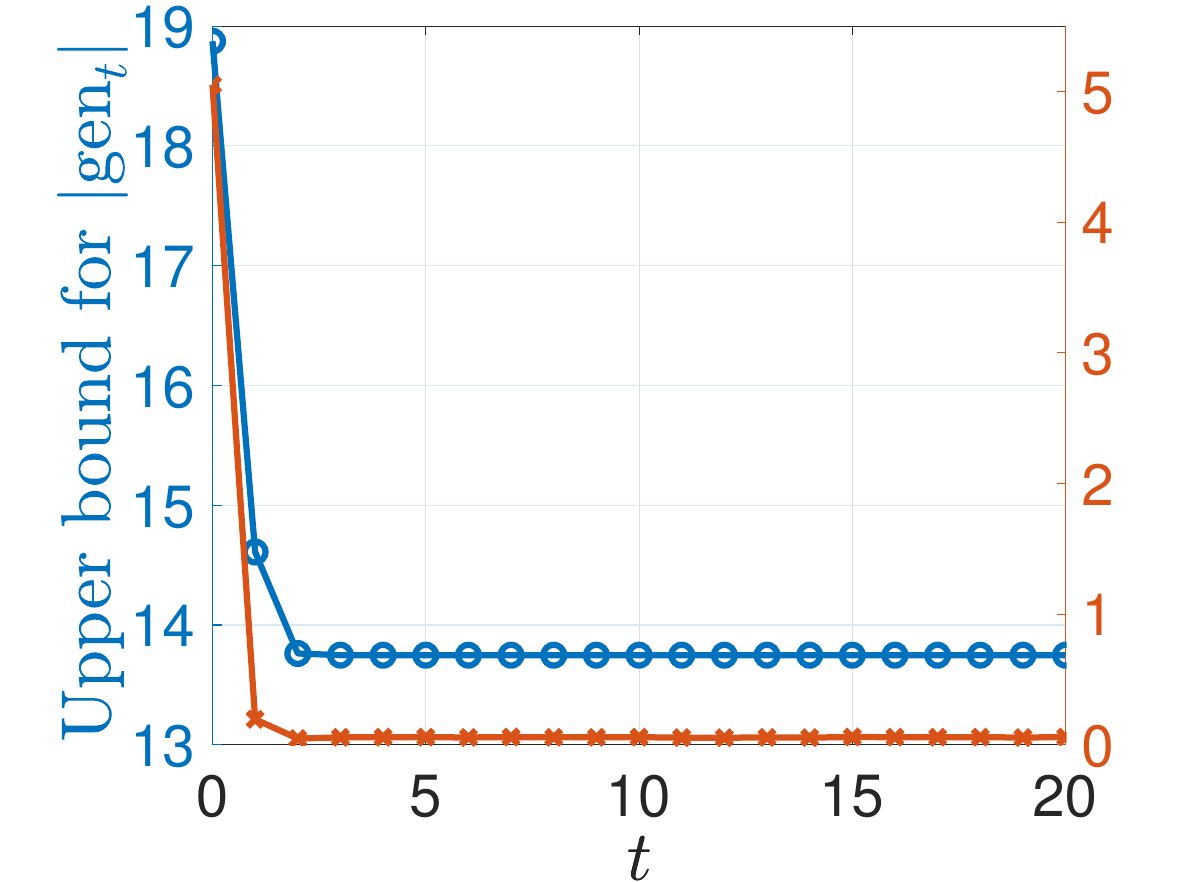}
			\label{Fig:gen bd and empirical}}
		\subfigure[$\sigma=3$]{
			\includegraphics[width=0.4\linewidth, trim=5 0 10 0,clip]{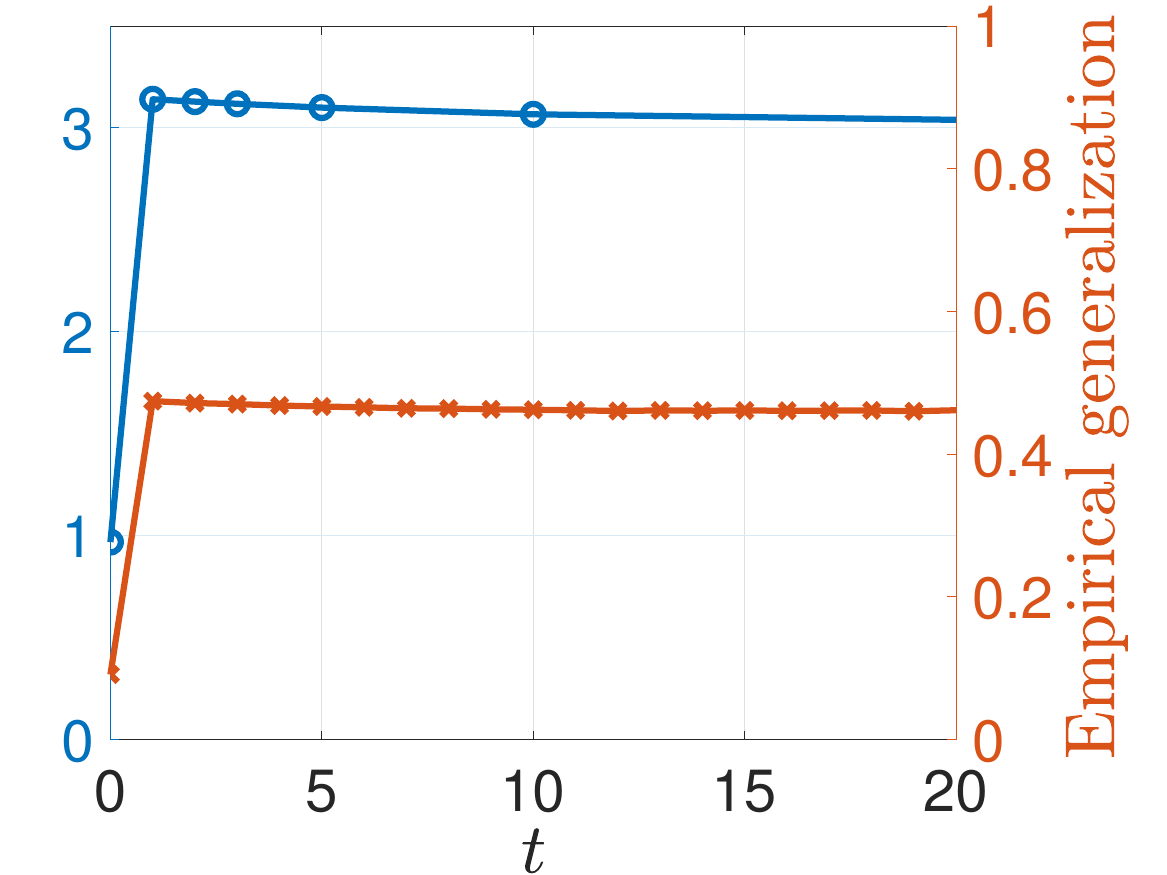}
			\label{Fig:gen bd and empirical_big_sig}}
		
		\caption{The comparison between the upper bound for $|\mathrm{gen}_t|$ and the empirical generalization error at each iteration $t$. The upper bounds are both for $d=2$.}
	\end{minipage}
\end{figure}
\begin{figure}[!t]
	\centering
	\includegraphics[width=.43\linewidth]{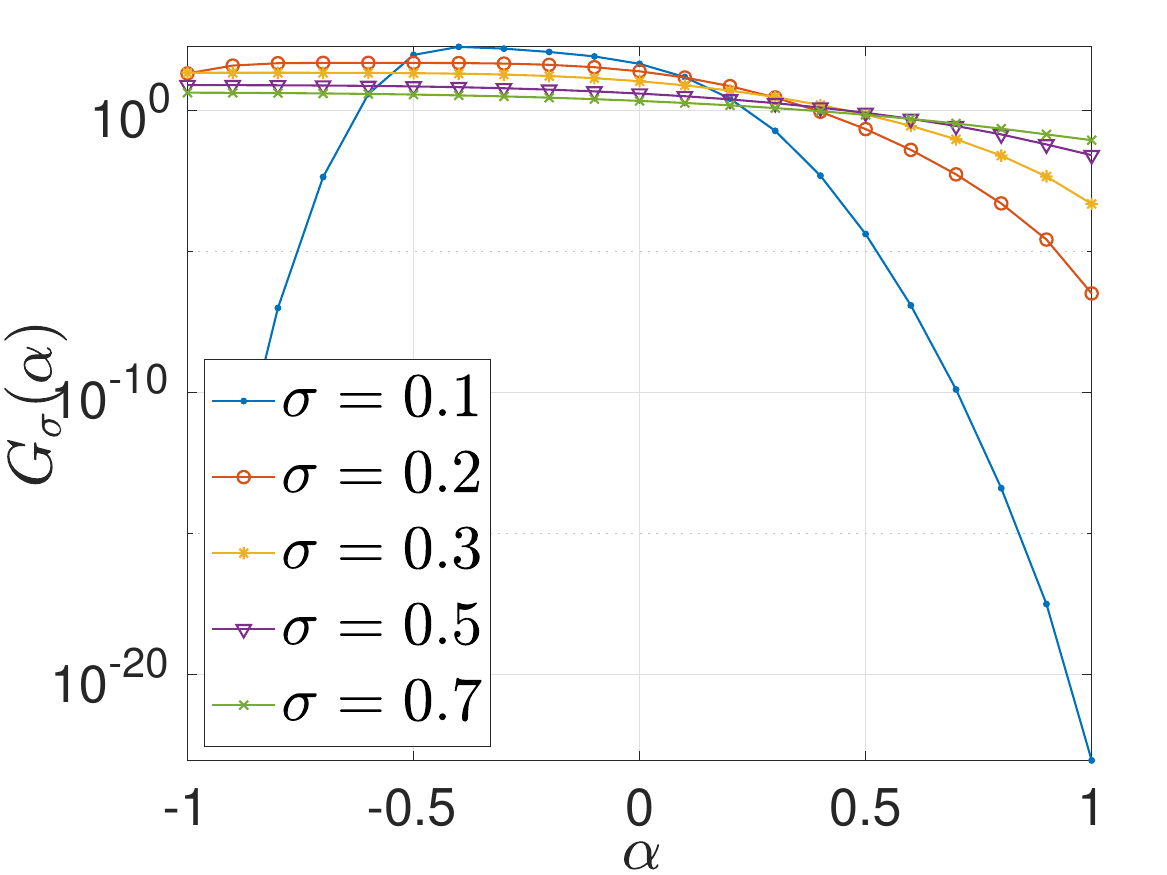}
	\caption{$G_{\sigma}(\alpha)$ versus  $\alpha$ for different $\sigma$.}
	\label{Fig:H_alp}
\end{figure}

First, we compare the upper bounds for $|\mathrm{gen}_0|$ and $|\mathrm{gen}_1|$, as shown in Figures~\ref{Fig: gen_vs_sig 0.6} and~\ref{Fig: gen_vs_sig 0.7}. For any fixed $\sigma$, when $n$ is sufficiently small, the upper bound for $|\mathrm{gen}_0|$ is greater than that for $|\mathrm{gen}_1|$. As $n$ increases, the upper bound for $|\mathrm{gen}_1|$ surpasses that of $|\mathrm{gen}_0|$, as shown in Figure \ref{Fig: gen_vs_sig 0.7}. This is consistent with the intuition that when the labelled data is limited, using the unlabelled data can help improve the generalization performance. However, as the number of labelled data increases, using the unlabelled data may degrade the generalization performance, if the distributions corresponding to classes $+1$ and $-1$ have a large overlap.  This is because the labelled data is already effective in learning the unknown parameter $\btheta_t$ well and additional pseudo-labelled data does not help to further boost the generalization performance. Furthermore, by comparing Figures \ref{Fig: gen_vs_sig 0.6} and \ref{Fig: gen_vs_sig 0.7}, we can see that for smaller $\sigma$, the improvement from $|\mathrm{gen}_0|$ to $|\mathrm{gen}_1|$ is more pronounced. The intuition is that when $\sigma$ decreases, the data samples have smaller variance and thus the pseudo-labelling is more accurate. In this case, unlabelled data can improve the generalization performance. Let us examine the effect of $n$, the number of labelled training samples. Recall the Taylor expansion of $\alpha$ in \eqref{Eq:alpha taylor}.
It can be seen that as $n$ increases, $\alpha$ converges to $1$ in probability. Suppose the dimension $d=2$ and $\bmu=(1,0)$. Then $\bmu^{\bot}=[0,\mu_2^{\bot}]$ where $\mu_2^{\bot}\sim\calN(0,1)$. The upper bound for the absolute generalization error at $t=1$ can be rewritten as
\vspace{-0.6em}
\begin{align}
	  |\mathrm{gen}_1| 
	\lessapprox\frac{c_2-c_1}{\sqrt{2}}   \int_{-\sqrt{2}}^{\sqrt{2}}\sqrt{\frac{n}{\pi\sigma^2} }\, e^{-\frac{ny^2}{\sigma^2}} \,   \sqrt{ G_{\sigma}(1-y^2)} ~\rmd y, 
\end{align}
which is a decreasing function of $n$, as shown in Figures~\ref{Fig: gen_vs_sig 0.6} and~\ref{Fig: gen_vs_sig 0.7}.

Second, in Figure \ref{Fig:gen bd and empirical}, we plot the theoretical upper bound in \eqref{Eq:upper bd gent} by ignoring~$\epsilon$. Unfortunately it is computationally difficult to numerically calculate the bound in \eqref{Eq:upper bd gent} for high dimensions $d$ (due to the need for high-dimensional numerical integration), but we can still gain insight from the result for $d=2$.
It is shown that the upper bound for $|\mathrm{gen}_{t}|$ decreases as $t$ increases and finally converges to a non-zero constant. The gap between the upper bounds for $|\mathrm{gen}_t|$ and for $|\mathrm{gen}_{t+1}|$ decreases as $t$ increases and shrinks to almost 0 for $t\geq 2$. The intuition is that as $m\to\infty$, there are sufficient data at each iteration and the algorithm can converge at very early stage. In the empirical simulation, we let $n=10$, $d=50$, $\bmu=(1,0,\ldots,0)$ and iteratively run the self-training procedure for $20$ iterations and $2000$ rounds. We find that the behaviour of the empirical generalization error (the red `-x' line) is similar to the theoretical upper bound (the blue `-o' line), which almost converges to its final value at $t=2$.
This result shows that the theoretical upper bound in \eqref{Eq:upper bd gent} serves as a useful rule-of-thumb for how the generalization error changes over iterations. In Figure \ref{Fig:gen bd and empirical_big_sig}, we plot the theoretical bound and result from the empirical simulation based on the toy example for $d=2$ but larger $n$ and $\sigma$. This figure shows that when we increase $n$ and $\sigma$, using unlabelled data may not be able to improve the generalization performance. The intuition is that for $n$ large enough, merely using the labelled data can yield sufficiently low generalization error and for subsequent iterations with the pseudo-labelled data, the reduction in the test loss is negligible but the training loss will decrease more significantly (thus causing the generalization error to increase). When $\sigma$ is larger, the data samples have larger variance and the classes have a  larger overlap, and thus,   the initial parameter $\btheta_0$ learned by the labelled data cannot produce pseudo-labels with sufficiently high accuracy. Thus, the pseudo-labelled data cannot help to improve the generalization error significantly.

\begin{remark}[Numerical plot of $G_{\sigma}(\cdot)$]
	To gain more insight, we   plot  $G_{\sigma}(\alpha,\xi_0,\bmu^{\bot})$ when $d=2$ and $\bmu=(1,0)$ in Figure~\ref{Fig:H_alp}. Under these settings, $G_{\sigma}(\alpha,\xi_0,\bmu^{\bot})$ depends only on $\alpha$ and hence, we can rewrite it as $G_\sigma(\alpha)$. As shown in Figure~\ref{Fig:H_alp} in Appendix \ref{pf of Thm:gen bound GMM}, for all $\sigma_1>\sigma_2$,  there exists an $\alpha_0\in[-1,1]$ such that for all $\alpha\geq \alpha_0= \alpha_0( \sigma_1,\sigma_2)$, $G_{\sigma_1}(\alpha) >G_{\sigma_2}(\alpha) $. From \eqref{Eq:rho_0}, we can see that $\alpha$ is close to 1 of high probability, which means that $\sigma\mapsto G_\sigma(\alpha)$ is monotonically increasing in $\sigma$ with high probability. As a result, $\bbE_{\alpha}[\sqrt{G_{\sigma}(\alpha)}]$ increases as $\sigma$ increases.  This is consistent with the intuition that when the training data has larger in-class variance, it is more difficult to generalize well.
\end{remark}

	\begin{remark}[Discussion on Theorem \ref{Thm:gen bound GMM}] \label{rmk:minus_one}
		As $n\to\infty$, $\btheta_0\to \bmu$ and $\alpha=\rho(\btheta_0,\bmu)\to 1$ almost surely, which means that the estimator converges to the optimal classifier for this bGMM. However, since there is no margin between two groups of data samples, the error probability $\Pr(\hatY_j'\neq Y_j')\to\rmQ (1/\sigma)>0$ (which is the Bayes error rate) and the disintegrated KL-divergence $D_{\xi_0,\bmu^{\bot}}(P_{\bX_j',\hatY_j'} \| P_{\bX,Y})$ between the estimated and underlying distributions cannot converge to $0$.  
		
		In the other extreme case, when $\alpha=\rho(\btheta_0,\bmu)=-1$ and $\bar{\btheta}_0=-\bmu$, the error probability $\Pr(\hatY_j'\neq Y_j')=1-\rmQ (1/\sigma)>\frac{1}{2}$  (for all $\sigma>0$) and $D_{\xi_0,\bmu^{\bot}}(P_{\bX_j',\hatY_j'} \| P_{\bX,Y})<\infty$, so in this other extreme (flipped) scenario, we have more mistakes than correct pseudo-labels. The reason why $D_{\alpha,\bmu^{\bot}}(P_{\bX_j',\hatY_j'} \| P_{\bX,Y})$ is finite is that when $P_{\bX,Y}(\bx,y)$ is  small, it means that $\bx$ is far from both $-\bmu$ and $\bmu$, and then $P_\bX(\bx)$ is also  small. Thus, $P_{\bX_j',\hatY_j'}(\bx,y)=P_{\hatY_j'|\bX_j'}(y|\bx)P_\bX(\bx)$ is also small.
	\end{remark}

\section{Proof of Theorem \ref{Thm:gen bound GMM}}\label{pf: proof of thm: gen bound gmm}
For simplicity, in the following, we abbreviate $\mathrm{gen}_t(P_\bZ, P_\bX, \{P_{\btheta_k|S_{\rml},S_{\rmu}}\}_{k=0}^t, \{f_{\btheta_k}\}_{k=0}^{t-1} )$ as $\mathrm{gen}_t$.
\begin{enumerate}[leftmargin= 15 pt]
	\item \textbf{Initial round $t=0$:} Since $Y_i\bX_i\overset{\text{i.i.d.}}{\sim} \calN(\bmu,\sigma^2 \bI_d)$, we have $\btheta_0\sim \calN(\bmu,\frac{\sigma^2}{n} \bI_d)$ and for some constant $c\in\bbR_+$,
	\begin{align}
		\Pr(\btheta_0 \in \Theta_{\bmu,c})= \Pr(\|\btheta_0-\bmu\|_\infty \leq c)=\bigg(1-2\Phi\Big(-\frac{\sqrt{n}c}{\sigma} \Big)\bigg)^d=:1-\delta_{\sqrt{n}c,d}. \label{Eq:theta_0 prob}
	\end{align}
	By choosing $c$ large enough, $\delta_{\sqrt{n}c,d}$ can be made arbitrarily small.
	Consider $\tilde{\btheta}_0$ and $(\tilde{\bX},\tilY)$ as independent copies of $\btheta_0 \sim Q_{\btheta_0}$ and $(\bX,Y)\sim P_{\bX,Y}=P_Y\otimes \calN(Y\bmu,\sigma^2 I_d)$, respectively, such that $P_{\tilde{\theta}_0,\tilde{\bX},\tilY}=Q_{\btheta_0}\otimes P_{\bX,Y}$. Then the probability that  $l(\btheta_0,(\bX,Y))\sim \mathrm{subG}((c_2-c_1)/2)$  under $(\bX,Y)\sim P_{\bX,Y}$ is given as follows
	\begin{align}
		&\Pr\bigg(\Lambda_{l(\tilde{\btheta}_0,(\tilde{\bX},\tilY))}(\lambda,\tilde{\btheta}_0)\leq \frac{\lambda^2(c_2-c_1)^2}{8} \bigg)\\
		&\geq \Pr\bigg(\Lambda_{l(\tilde{\btheta}_0,(\tilde{\bX},\tilY))}(\lambda,\tilde{\btheta}_0)\leq \frac{\lambda^2(c_2-c_1)^2}{8} \text{ and } \tilde{\btheta}_0\in\Theta_{\bmu,c} \bigg)\\
		&=\Pr(\tilde{\btheta}_0\in\Theta_{\bmu,c}) \Pr\bigg(\Lambda_{l(\tilde{\btheta}_0,(\tilde{\bX},\tilY))}(\lambda,\tilde{\btheta}_0)\leq \frac{\lambda^2(c_2-c_1)^2}{8} \Big| \tilde{\btheta}_0\in\Theta_{\bmu,c} \bigg)\\
		&=(1-\delta_{\sqrt{n}c,d})(1-\delta_{r,d}), \label{Eq:l(theta_0) sub-G}
	\end{align}
	where \eqref{Eq:l(theta_0) sub-G} follows from \eqref{Eq:loss sub-Gaussian} and \eqref{Eq:theta_0 prob}.
	
	Fix some $d\in\bbN$, $\epsilon>0$ and $\delta\in(0,1)$. There exists $n_0(d,\delta)\in\bbN$,  $r_0(d,\delta)\in\bbR_+$ such that for all $n>n_0, r>r_0$, $\delta_{\sqrt{n}c,d}<\frac{\delta}{3}$, $\delta_{r,d}<\frac{\delta}{3}$, and then with probability at least $1-\delta$, the absolute generalization error can be upper bounded as follows
	\begin{align}
		&|\mathrm{gen}_0| \leq \frac{1}{n}\sum_{i=1}^{n}\sqrt{\frac{(c_2-c_1)^2}{2}I(\btheta_0;\bX_i,Y_i) }.
	\end{align} 
	
	Then mutual information can be calculated as follows 
	\begin{align}
		I(\btheta_0;\bX_i,Y_i) 
		&=h(\btheta_0)-h(\btheta_0|\bX_i,Y_i)\\
		&=h\bigg(\frac{1}{n}\sum_{j=1}^n Y_j \bX_j\bigg)-h\bigg(\frac{1}{n}\sum_{j=1}^n Y_j \bX_j \bigg| \bX_i,Y_i \bigg)\\
		&=\frac{d}{2}\log\bigg(\frac{2\pi e \sigma^2}{n} \bigg)-h\bigg(\frac{1}{n}\sum_{j\in[n], j\ne i} Y_j \bX_j \bigg)\\
		&=\frac{d}{2}\log\bigg(\frac{2\pi e \sigma^2}{n} \bigg)-\frac{d}{2}\log\bigg(\frac{2\pi e (n-1)\sigma^2 }{n^2} \bigg)\\
		&=\frac{d}{2}\log\frac{n}{n-1}.
	\end{align} 
	Thus we obtain~\eqref{Eq:gen_0 bound}.

	\item \textbf{Pseudo-label using $\btheta_0$:} The same as those in Appendix \ref{pf of Thm:exact gen GMM}.

	\item \textbf{Iteration $t=1$:} Recall $\btheta_1$ in \eqref{Eq: theta1}, the new model parameter learned from the pseudo-labelled dataset $\hatS_{\rmu,1}$.
	
	\begin{enumerate}[leftmargin= 5 pt]
		\item Recall the condition expectation of $\bmu_1^{\xi_0,\bmu^{\bot}}$ in \eqref{Eq:mu1 decomp}. 
		The $l_\infty$ norm between $\bmu_1^{\xi_0,\bmu^{\bot}}$ and $\bmu$ can be upper bounded by
		\begin{align}
			\|\bmu_1^{\xi_0,\bmu^{\bot}}-\bmu \|_{\infty} 
			&\leq \sqrt{\bigg(-2\rmQ \bigg(\frac{\alpha}{\sigma}\bigg)+\frac{2\sigma \alpha}{\sqrt{2\pi}}\exp\bigg(-\frac{\alpha^2}{2\sigma^2}\bigg) \bigg)^2+\frac{2\sigma^2 \beta^2}{\pi}\exp\bigg(-\frac{2\alpha^2}{2\sigma^2}\bigg)}\\
			&< \sqrt{\bigg(2\Phi\bigg(\frac{1}{\sigma}\bigg)+\frac{2\sigma}{\sqrt{2\pi}}\bigg)^2+\frac{2\sigma^2}{\pi}}=:\tilc_1, \label{Eq:mu1 mu distance}
		\end{align}
		where $\tilc_1$ is a constant only dependent on $\sigma$.
		
		\item Next, we need to calculate the probability that $l(\btheta_1,(\bX,Y))\sim \mathrm{subG}((c_2-c_1/2))$ under $(\bX,Y)\sim P_{\bX,Y}$. 
		
		Let $\bV_i=\sgn(\bar{\btheta}_0^\top\bX_i')\bX_i'-\bmu_1^{\xi_0,\bmu^{\bot}}$. For any $k\in[d]$, let $V_{i,k}$, $\theta_{1,k}$, $\mu_{1,k}$ denote the $k$-th components of $\bV_i$, $\btheta_1$ and  $\bmu_{1}^{\xi_0,\bmu^{\bot}}$, respectively. 
		Recall the decompositions $\bX_i'=Y_i'\bmu+\sigma \tilg_i\bar{\btheta}_0+\sigma \tilde{\bg}_i^{\bot}$ in \eqref{Eq:X new decomp} and $\bar{\btheta}_0\bX_i'=Y_i'\alpha+ \sigma\tilg_i$ in~\eqref{Eq:theta*X new decomp}. Suppose the basis of $\bbR^d$ is denoted by $B=\{\bv_1,\ldots,\bv_d\}$ and let $\bv_1=\bar{\btheta}_0$. Then we have
		\begin{align}
			&\tilde{\bg}_i^{\bot}=\tilg_{i,2}^{\bot}\bv_2+\ldots+\tilg_{i,d}^{\bot}\bv_d,
		\end{align}
		where $\tilg_{i,k}^{\bot}\sim \calN(0,1)$ for any $k\in[2:d]$ and $\{\tilg_{i,k}\}_{k=2}^d$ are mutually independent. We also let $\bmu=(\mu_{0,1},\ldots,\mu_{0,d})$.
		
		Given any $(\xi_0,\bmu^{\bot})$,
		the moment generating function (MGF) of $V_{i,1}$ is given as follows: for any $s_1>0$,
		\begin{align}
			&\bbE_{V_{i,1}}[e^{s_1 V_{i,1}}]\nn\\
			&=\rmQ\bigg(-\frac{\alpha}{\sigma}\bigg)\bbE_{ \tilg_i}\Big[e^{s_1(\mu_{0,1}-\mu_{1,1}+\sigma \tilg_i)}\Big|\tilg_i>-\frac{\alpha}{\sigma} \Big] +\rmQ\bigg(\frac{\alpha}{\sigma}\bigg)\bbE_{ \tilg_i}\Big[e^{s_1(-\mu_{0,1}-\mu_{1,1}+\sigma \tilg_i)}\Big|\tilg_i>\frac{\alpha}{\sigma} \Big] \\
			&= e^{s_1(\mu_{0,1}-\mu_{1,1})}e^{\frac{\sigma^2 s_1^2}{2}}\Phi\bigg(\frac{\alpha}{\sigma}+\sigma s_1 \bigg) +e^{s_1(-\mu_{0,1}-\mu_{1,1})}e^{\frac{\sigma^2 s_1^2}{2}}\Phi\bigg(-\frac{\alpha}{\sigma}+\sigma s_1\bigg).
		\end{align}
		The final equality follows from the fact that the MGF of a zero-mean univariate Gaussian truncated to $(a,b)$ is $e^{\sigma^2 s^2/2} \Big[ \frac{\Phi(b-\sigma s)- \Phi(a-\sigma s)}{\Phi(b)-\Phi(a)} \Big]$. 
		The second derivative of $\log \bbE_{V_{i,1}}[e^{s_1 V_{i,1}}]$ is given as
		\begin{align}
			&\tilR_1(s_1):=\frac{\rmd^2 \log\bbE_{V_{i,1}}[e^{s_1 V_{i,1}}]}{\rmd s_1^2}\\
			&\leq \sigma^2+\frac{\mathrm{const.}}{\big(\Phi(\frac{\alpha}{\sigma}+\sigma s_1 )e^{s_k\mu_{0,k}}+\Phi(\frac{-\alpha}{\sigma}+\sigma s_1 )e^{-s_k\mu_{0,k}}\big)^2}<\infty.
		\end{align}		
		For $k\in[2:d]$ and any $s_k>0$, the MGF of $V_{i,k}$ is given as
		\begin{align}
			&\bbE_{V_{i,k}}[e^{s_k V_{i,k}}]=\bbE_{\sigma\tilg_{i,k}^{\bot},Y_i'}\Big[e^{s_k(Y_i'\mu_{0,k}-\mu_{1,k}+\sigma \tilg_{i,k}^{\bot})}\Big]\\
			&=Q\bigg(-\frac{\alpha}{\sigma}\bigg)e^{s_k(\mu_{0,k}-\mu_{1,k})}e^{\frac{\sigma^2 s_k^2}{2}}+Q\bigg(\frac{\alpha}{\sigma}\bigg)e^{s_k(-\mu_{0,k}-\mu_{1,k})}e^{\frac{\sigma^2 s_k^2}{2}},
		\end{align}
		and the second derivative of $\log \bbE_{V_{i,k}}[e^{s_k V_{i,k}}]$ is given by
		\begin{align}
			\tilR_k(s_k):=\frac{\rmd^2 \log\bbE_{V_{i,k}}[e^{s_k V_{i,k}}]}{\rmd s_k^2}=\sigma^2+\frac{4\mu_{0,k}^2Q(-\frac{\alpha}{\sigma})Q(\frac{-\alpha}{\sigma})}{(Q(-\frac{\alpha}{\sigma})e^{s_k\mu_{0,k}}+Q(\frac{\alpha}{\sigma})e^{-s_k\mu_{0,k}})^2}.
		\end{align}
		
		Fix $k \in [1:d]$. According to   Taylor's theorem, we have
		\begin{align}
			\log \bbE_{V_{i,k}}[e^{s_k V_{i,k}}]=\frac{\tilR_k(\xi_{\rmL,k})}{2}s_k^2,
		\end{align} 
		for some  $\xi_{\rmL,k}\in (0,s_k) $ and  $\tilR_k(\xi_{\rmL,k})<\infty$.
		Then the Cram\'{e}r transform of $\log\bbE_{V_{i,k}}[e^{s_k V_{i,k}}]$ can be lower bounded as follows: for any $\veps>0$,
		\begin{align}
			&\sup_{s_k>0}\Big(s_k\veps - \log\bbE_{V_{i,k}}[e^{s_k V_{i,k}}] \Big)
			\geq \sup_{s_k>0}\Big(s_k\veps - \frac{\tilR_k(\xi_{\rmL,k}) s_k^2}{2} \Big)
			=\frac{\veps^2}{2\tilR_k(\xi_{\rmL,k})}. \label{Eq:cramer transform V_i,k}
		\end{align}

		
		Let $\tilR^*=\max_{\xi_0, \bmu^{\bot}}\min_{k\in[1:d]}\tilR_k(\xi_{\rmL,k})$, which is a finite constant only dependent on $\sigma$. Since $\{\sgn(\bar{\btheta}_0^\top \bX_i')\bX_i'\}_{i=1}^m$ are i.i.d.\ random variables conditioned on $(\xi_0, \bmu^{\bot})$, by applying Chernoff-Cram\'{e}r inequality, we have for all $\veps> 0$
		\begin{align}
			&\bbP_{\xi_0,\bmu^{\bot}}\Big(\|\btheta_1-\bmu_1^{\xi_0,\bmu^{\bot}}\|_{\infty} > \veps \Big) \nn\\
			&=\bbP_{\xi_0,\bmu^{\bot}}\Big(\max_{k\in[1:d]}|\theta_{1,k}-\mu_{1,k}|>\veps \Big)\\
			&\leq \sum_{k=1}^d\bbP_{\xi_0,\bmu^{\bot}}\Big(|\theta_{1,k}-\mu_{1,k}|>\veps \Big)\\
			&= \sum_{k=1}^d\bbP_{\xi_0,\bmu^{\bot}}\bigg(\bigg|\frac{1}{m}\sum_{i=1}^m V_{i,k} \bigg|>\veps \bigg)\\
			&\leq \sum_{k=1}^d 2\exp\Big(-m\sup_{s>0}\Big(s\veps-\log\bbE_{V_{i,k}}[e^{s V_{i,k}}] \Big) \Big)\\
			&\leq 2d\exp\bigg(-\frac{m \veps^2}{2\tilR^*} \bigg)\\
			&=:\delta_{m,\veps,d}, \label{Eq:theta1 concentrate ineq}
		\end{align}
		where $\delta_{m,\veps,d}\xrightarrow{\text{a.s.}}0$ as $m\to\infty$ and does not depend on $\xi_0,\bmu^{\bot}$. 
		
		Choose some $c\in(\tilc_1,\infty)$ ($\tilc_1$ defined in \eqref{Eq:mu1 mu distance}). We have
		\begin{align}
			\bbP_{\xi_0,\bmu^{\bot}}\big(\btheta_1\in\Theta_{\bmu,c})\geq \bbP_{\xi_0,\bmu^{\bot}}(\|\btheta_1-\bmu_1^{\xi_0,\bmu^{\bot}}\|_{\infty} \leq c-\tilc_1 \big)\geq 1-\delta_{m,c-\tilc_1,d}.
		\end{align}
		
		Consider $\tilde{\btheta}_1$ as an independent copy of $\btheta_1$ and   independent of $(\tilde{\bX},\tilY)$. Then the probability that  $l(\btheta_1,(\bX,Y))\sim \mathrm{subG}((c_2-c_1)/2)$  under $(\bX,Y)\sim P_{\bX,Y}$ is given as follows
		\begin{align}
			&\bbP_{\xi_0,\bmu^{\bot}}\bigg(\Lambda_{l(\tilde{\btheta}_1,(\tilde{\bX},\tilY))}(\lambda,\tilde{\btheta}_1)\leq \frac{\lambda^2(c_2-c_1)^2}{8} \bigg)\\
			&\geq \bbP_{\xi_0,\bmu^{\bot}}(\tilde{\btheta}_1\in\Theta_{\bmu,c}) \bbP_{\xi_0,\bmu^{\bot}}\bigg(\Lambda_{l(\tilde{\btheta}_1,(\tilde{\bX},\tilY))}(\lambda,\tilde{\btheta}_1)\leq \frac{\lambda^2(c_2-c_1)^2}{8} \Big| \tilde{\btheta}_1\in\Theta_{\bmu,c} \bigg)\\
			&=(1-\delta_{m,c,d})(1-\delta_{r,d}).
		\end{align}

	\end{enumerate}

	Thus, for some $c\in(\tilc_1,\infty)$, with probability at least $(1-\delta_{m,c-\tilc_1,d})(1-\delta_{r,d})$, the absolute generalization error can be upper bounded as follows:
	\begin{align}
		&|\mathrm{gen}_1|=|\bbE[L_{P_{\bZ}}(\btheta_1)-L_{\hatS_{\rmu,1}}(\btheta_1)]|\\
		&=\bigg|\frac{1}{m}\sum_{i=1}^m \bbE_{\xi_0,\bmu^{\bot}}\bigg[\bbE\left[l(\btheta_1,(\bX,Y))-l(\btheta_1,(\bX'_i,\hatY_i'))|\xi_0,\bmu^{\bot} \right]\bigg] \bigg|\\
		&\leq \frac{1}{m}\sum_{i=1}^m \bbE_{\xi_0,\bmu^{\bot}}\bigg[\sqrt{\frac{(c_2-c_1)^2}{2}\Big(I_{\xi_0,\bmu^{\bot}}(\btheta_1,(\bX'_i,\hatY_i'))+D_{\xi_0,\bmu^{\bot}}(P_{\bX'_i,\hatY_i'}\| P_{\bX,Y}) \Big)} ~\bigg], \label{Eq:last line of gen_1 bound}
	\end{align}
	where $P_{\btheta_1,(\bX,Y)|\xi_0,\bmu^{\bot}}=Q_{\btheta_1|\xi_0,\bmu^{\bot}}\otimes P_{\bX,Y}$ and $Q_{\btheta_1|\xi_0,\bmu^{\bot}}$ denotes the marginal distribution of $\btheta_1$ under parameters $(\xi_0,\bmu^{\bot})$.

	In the following, we derive   closed-form expressions of the mutual information and KL-divergence in \eqref{Eq:last line of gen_1 bound}. For any $j\in[1:m]$:
	\begin{itemize}[leftmargin= 10 pt]
		\item \textbf{Calculate $I_{\xi_0,\bmu^{\bot}}(\btheta_1; \bX'_j,\hatY_j')$:}
		For arbitrary random variables $X$ and $U$, we define the {\em disintegrated conditional  differential entropy} of $X$ given $U$ as 
		\begin{align}
			h_U(X):=h(P_{X|U}).
		\end{align}
		This is a $\sigma(U)$-measurable random variable.
		Conditioned on a certain pair   $(\xi_0,\bmu^{\bot})$, the mutual information between $\btheta_1$ and $(\bX'_i,\hatY_i')$ is
		\begin{align}
			&I_{\xi_0,\bmu^{\bot}}(\btheta_1; \bX'_i,\hatY_i')\nn\\
			&=h_{\xi_0,\bmu^{\bot}}\bigg(\frac{1}{m}\sum_{i=1}^m\sgn(\btheta_0^\top \bX'_i)\bX'_i \bigg)-h_{\xi_0,\bmu^{\bot}}\bigg(\frac{1}{m}\sum_{j=1}^m \hatY_j'\bX'_j \bigg|\bX'_i,\hatY_i' \bigg)\\
			&=h_{\xi_0,\bmu^{\bot}}\bigg(\frac{1}{m}\sum_{i=1}^m\sgn(\btheta_0^\top \bX'_i)\bX'_i \bigg)-h_{\xi_0,\bmu^{\bot}}\bigg(\frac{1}{m}\sum_{j\in[m],j\ne i}\sgn(\btheta_0^\top \bX'_j)\bX'_j \bigg)\\
			&=h_{\xi_0,\bmu^{\bot}}\bigg(\frac{1}{m}\sum_{i=1}^m\sgn(\btheta_0^\top \bX'_i)\bX'_i \bigg) \nn\\
			&\quad -h_{\xi_0,\bmu^{\bot}}\bigg(\frac{1}{m-1}\sum_{j\in[m],j\ne i}\sgn(\btheta_0^\top \bX'_j)\bX'_j \bigg)-d\log \frac{m-1}{m}. \label{Eq: I calculation}
		\end{align}
		As $m\to\infty$, $I_{\xi_0,\bmu^{\bot}}(\btheta_1; \bX'_i,\hatY_i')\to 0$ almost surely and  hence, in probability. Thus, for any $\epsilon>0$, and there exists $m_0(\epsilon,d,\delta)\in\bbN$ such that for all $m>m_0$,
		\begin{align}
			\bbP_{\xi_0,\bmu^{\bot}}(I_{\xi_0,\bmu^{\bot}}(\btheta_1; \bX'_i,\hatY_i')>\epsilon)\leq \delta. \label{Eq:I_1 Op1}
		\end{align}
		

		\item \textbf{Calculate $D_{\xi_0,\bmu^{\bot}}(P_{\bX_j',\hatY_j'} \| P_{\bX,Y})$:}
		First of all, since $P_{\hatY_j'}=P_{Y}$ (cf.~\eqref{Eq:hatY prob}) regardless of the values of $(\xi_0,\bmu^{\bot})$, the {\em disintegrated conditional KL-divergence} can be rewritten as
		\begin{align}
			&D_{\xi_0,\bmu^{\bot}}(P_{\bX_j',\hatY_j'} \| P_{\bX,Y}) \nn\\
			&=P_{ \hatY_j'}(-1)D_{\xi_0,\bmu^{\bot}}(P_{\bX_j'| \hatY_j'=-1} \| P_{\bX | Y=-1})+P_{ \hatY_j'}(1)D_{\xi_0,\bmu^{\bot}}(P_{\bX_j'| \hatY_j'=1} \| P_{\bX | Y=1}). \label{Eq:simplify KL div}
		\end{align}
		
		Recall the decomposition of a Gaussian vector $\tilde{\bg}_j\sim\calN(0,\bI_d)$ in \eqref{Eq:new decomp Gaussian vec}. Note that $\rank(\cov(\tilde{\bg}_j^{\bot}))=\rank(\bI_d-\bar{\btheta}\bar{\btheta}^\top)=d-1$.
		
		For any pair of labelled data sample $(\bX,Y)$, from \eqref{Eq:X new decomp}, we similarly decompose $\bX$ as $\bX=Y\bmu+\sigma(\tilg\bar{\btheta}_0+\tilde{\bg}^{\bot})$, where $\tilg\sim\calN(0,1)$ and $\tilde{\bg}^{\bot}\sim\calN(0,\bI_d-\bar{\btheta}_0\bar{\btheta}_0^\top)$. Let $p_{\tilg}$ and $p_{\tilde{\bg}^{\bot}}$ denote the probability density functions of $\tilg$ and $\tilde{\bg}^{\bot}$, respectively. 
		For any $\bx=\bmu+\sigma(u\bar{\btheta}_0+\bu^{\perp})\in\bbR^d$, the joint probability distribution at $(\bX,Y)=(\bx,1)$ is given by 
		\begin{align}
			P_{\bX,Y}(\bx,1)&=P_Y(1)p_{\bmu}(\bx|1)\nn\\
			&=\frac{P_Y(1)}{\sqrt{(2\pi)^d}\sigma^d}\exp\bigg(-\frac{1}{2\sigma^2}(\bx-y\bmu)^\top(\bx-y\bmu) \bigg)\\
			&=\frac{P_Y(1)}{\sqrt{(2\pi)^d}\sigma^d}\exp\bigg(-\frac{1}{2\sigma^2}(\sigma u\bar{\btheta_0}+\sigma\bu^{\bot})^\top(\sigma u\bar{\btheta_0}+\sigma\bu^{\bot}) \bigg)\\
			&=\frac{P_Y(y)}{\sqrt{(2\pi)^d}\sigma^d}\exp\bigg(-\frac{u^2 }{2}\bigg)\exp\bigg(-\frac{(\bu^{\bot})^\top \bu^{\bot} }{2}\bigg)\\
			&=P_Y(1)p_{\tilg}(u)p_{\tilde{\bg}^{\bot}}(\bu^{\bot}).
		\end{align}
		
		Similarly, for any $\bx=-\bmu+\sigma(u\bar{\btheta}_0+\bu^{\perp})\in\bbR^d$, the joint probability density evaluated at $(X,Y)=(\bx,-1)$ is given by
		\begin{align}
			P_{\bX,Y}(\bx,-1)=P_Y(-1)p_{\bmu}(\bx|-1)
			=P_Y(-1)p_{\tilg}(u)p_{\tilde{\bg}^{\bot}}(\bu^{\bot}).
		\end{align}
		
		Second, we have $P_{\bX_j'|\hatY_j'}=\sum_{y\in\{-1,+ 1\}}P_{\bX_j'|\hatY_j',Y_j'=y}P_{Y_j'=y|\hatY_j'}$.
		The conditional probability distribution $P_{Y_j'|\hatY_j'}$ can be calculated as follows
		\begin{align}
			P_{Y_j'|\hatY_j'}=\frac{ P_{\hatY_j'|Y_j'}P_{Y_j'}}{P_{\hatY_j'}}=P_{\hatY_j'|Y_j'},
		\end{align}
		where the last equality follows since $P_{Y_j'}(-1)=P_{Y_j'}(1)=P_{\hatY_j'}(-1)=P_{\hatY_j'}(1)=1/2$.	
		Since $\hatY_j'=\sgn(Y_j' \alpha+\sigma \tilg_j)$ (cf. \eqref{Eq:theta*X new decomp}), we have
		\begin{align}
			P_{\hatY_j'|Y_j'}(-1|-1)=\Pr(Y_j' \alpha+\sigma \tilg_j<0|Y_j'=-1)=\rmQ \bigg(-\frac{\alpha}{\sigma} \bigg),
		\end{align}
		and similarly, 
		\begin{align}
			P_{\hatY_j'|Y_j'}(1|-1)=\rmQ \bigg(\frac{\alpha}{\sigma} \bigg),\;\;\;\; P_{\hatY_j'|Y_j'}(-1|1)=\rmQ \bigg(\frac{\alpha}{\sigma} \bigg), \;\;\;\; P_{\hatY_j'|Y_j'}(1|1)=\rmQ \bigg(-\frac{\alpha}{\sigma} \bigg).
		\end{align}
		Thus, we conclude that
		\begin{align}
			P_{Y_j'|\hatY_j'} (y_j' | \haty_j') =\left\{\begin{array}{cc}
				\rmQ (-\frac{\alpha}{\sigma} ) &y_j'=\haty_j'\\
				\rmQ (\frac{\alpha}{\sigma} ) &y_j'\neq \haty_j'.
			\end{array} \right.
		\end{align}
		
		To calculate the conditional probability distribution $P_{\bX_j'|\hatY_j',Y_j'}$, recall the decomposition of $\bX_j'$ and $\bar{\btheta}_0^\top\bX_j'$ in \eqref{Eq:X new decomp} and \eqref{Eq:theta*X new decomp}.
		Since the event $\{\hatY_j'=-1,Y_j'=-1\}$ is equivalent to $\{\tilg_j<\alpha/\sigma\}$ and $\tilg_j\sim\calN(0,1)$, 
		the conditional density of $\tilg_j$ given $\hatY_j'=-1,Y_j'=-1$ is given by
		\begin{align}
			p_{\tilg_j|\hatY_j',Y_j'}(u|-1,-1)=p_{\tilg_j|\tilg_j\leq\alpha/\sigma}(u)=\frac{\mathbbm{1}\{u\leq \alpha/\sigma\}p_{\tilg_j}(u)}{\Phi(\alpha/\sigma)}, \quad \forall u\in\bbR.
		\end{align}
		Similarly, for any $u\in\bbR$
		\begin{align}
			p_{\tilg_j|\hatY_j',Y_j'}(u|-1,1)&=p_{\tilg_j|\tilg_j\leq-\alpha/\sigma}(u)=\frac{\mathbbm{1}\{u\leq -\alpha/\sigma\}f_{\tilg_j}(u)}{\Phi(-\alpha/\sigma)},\\
			p_{\tilg_j|\hatY_j',Y_j'}(u|1,-1)&=p_{\tilg_j|\tilg_j>\alpha/\sigma}(u)=\frac{\mathbbm{1}\{z> \alpha/\sigma\}f_{\tilg_j}(u)}{\rmQ (\alpha/\sigma)},\\
			p_{\tilg_j|\hatY_j',Y_j'}(u|1,1)&=p_{\tilg_j|\tilg_j>-\alpha/\sigma}(u)=\frac{\mathbbm{1}\{u> -\alpha/\sigma\}p_{\tilg_j}(u)}{\rmQ (-\alpha/\sigma)}.
		\end{align}
		
		For any $\bx=\bmu+\sigma(u\bar{\btheta}_0+\bu^{\perp})\in\bbR^d$, given $\hatY_j'=1,Y_j'=1$, the conditional probability distribution at $\bX_j'=\bx$ is given by
		\begin{align}
			P_{\bX_j'|\hatY_j',Y_j'}(\bx|1,1)&=P_{\bmu+\sigma\tilg_j \bar{\btheta}_0+\sigma\tilde{\bg}_j^{\bot}|\hatY_j',Y_j'}(\bmu+\sigma(u\bar{\btheta}_0+\bu^{\perp})|1,1)\\
			&=P_{\sigma\tilg_j \bar{\btheta}_0+\sigma\tilde{\bg}_j^{\bot}|\hatY_j',Y_j'}(\sigma(u\bar{\btheta}_0+\bu^{\perp})|1,1)\\
			&=p_{\tilg_j|\hatY_j',Y_j'}(u|1,1)p_{\tilde{\bg}_j^{\bot}}(\bu^{\bot}) \label{Eq:from P_x to f_g},
		\end{align}
		where \eqref{Eq:from P_x to f_g} follows since $\tilg_j$ and $\tilde{\bg}_j^{\bot}$ are mutually independent and $\bar{\btheta}_0\perp \tilde{\bg}_j^{\bot}$.
		
		Since we can decompose $2\bmu/\sigma$ as 
		\begin{align}
			\frac{2\bmu}{\sigma}=\frac{2\alpha \bar{\btheta}_0+2\beta^2\bmu-2\alpha\beta\bup}{\sigma}=\frac{2\alpha}{\sigma} \bar{\btheta}_0+\bar{\btheta}_0^{\bot},
		\end{align}
		given $\hatY_j'=1,Y_j'=-1$, the conditional probability distribution at $\bX_j'=\bx$ is given by
		\begin{align}
			P_{\bX_j'|\hatY_j',Y_j'}(\bx|1,-1)&=P_{-\bmu+\sigma\tilg_j \bar{\btheta}_0+\sigma\tilde{\bg}_j^{\bot}|\hatY_j',Y_j'}(\bmu+\sigma(u\bar{\btheta}_0+\bu^{\perp})|1,-1)\\
			&=P_{\sigma\tilg_j \bar{\btheta}_0+\sigma\tilde{\bg}_j^{\bot}|\hatY_j',Y_j'}\Big(\sigma\Big(\frac{2\bmu}{\sigma}+u\bar{\btheta}_0+\bu^{\perp} \Big) \Big| 1,-1\Big)\\
			&=p_{\tilg_j|\hatY_j',Y_j'}\Big(u+\frac{2\alpha}{\sigma} \Big|1,-1\Big)p_{\tilde{\bg}_j^{\bot}}(\bu^{\bot}+\bar{\btheta}_0^{\bot}). \label{Eq:from P_x to f_g 2}
		\end{align}
		Similarly, 	for any $\bx=-\bmu+\sigma(u\bar{\btheta}_0+\bu^{\perp})\in\bbR^d$, given $\hatY_j'=-1,Y_j'=1$, the conditional   distribution at $\bX_j'=\bx$ is given by
		\begin{align}
			P_{\bX_j'|\hatY_j',Y_j'}(\bx|-1,1)&=P_{\bmu+\sigma\tilg_j \bar{\btheta}_0+\sigma\tilde{\bg}_j^{\bot}|\hatY_j',Y_j'}(-\bmu+\sigma(u\bar{\btheta}_0+\bu^{\perp})|-1,1)\\
			&=p_{\tilg_j|\hatY_j',Y_j'}\Big(u-\frac{2\alpha}{\sigma} \Big|-1,1\Big)p_{\tilde{\bg}_j^{\bot}}(\bu^{\bot}-\bar{\btheta}_0^{\bot});
		\end{align}
		and given $\hatY_j'=-1,Y_j'=-1$,
		\begin{align}
			P_{\bX_j'|\hatY_j',Y_j'}(\bx|-1,-1)&=P_{-\bmu+\sigma\tilg_j \bar{\btheta}_0+\sigma\tilde{\bg}_j^{\bot}|\hatY_j',Y_j'}(-\bmu+\sigma(u\bar{\btheta}_0+\bu^{\perp})|-1,-1)\\
			&=p_{\tilg_j|\hatY_j',Y_j'}(u|-1,-1)p_{\tilde{\bg}_j^{\bot}}(\bu^{\bot}).
		\end{align}
		
		Furthermore, for any $\bx=-\bmu+\sigma(u\bar{\btheta}_0+\bu^{\perp})\in\bbR^d$, we have
		\begin{align}
			P_{\bX_j'|\hatY_j'=-1}(\bx)&=\sum_{y\in\{-1,+ 1\}}P_{\bX_j'|\hatY_j'=-1,Y_j'=y}(\bx)P_{Y_j'|\hatY_j'=-1}(y)\\
			&= P_{Y_j'|\hatY_j'=-1}(1)p_{\tilg_j|\hatY_j',Y_j'}\Big(u-\frac{2\alpha}{\sigma} \Big|-1,1\Big)p_{\tilde{\bg}_j^{\bot}}(\bu^{\bot}-\bar{\btheta}_0^{\bot}) \nn\\
			&\quad +P_{Y_j'|\hatY_j'=-1}(-1)p_{\tilg_j|\hatY_j',Y_j'}(u|-1,-1)p_{\tilde{\bg}_j^{\bot}}(\bu^{\bot})\\
			&=\mathbbm{1}\Big\{u\leq \frac{\alpha}{\sigma} \Big\}p_{\tilg_j}\Big(u-\frac{2\alpha}{\sigma}\Big)p_{\tilde{\bg}_j^{\bot}}(\bu^{\bot}-\bar{\btheta}_0^{\bot})+\mathbbm{1}\Big\{u\leq \frac{\alpha}{\sigma} \Big\}p_{\tilg_j}(u)p_{\tilde{\bg}_j^{\bot}}(\bu^{\bot});
		\end{align}
		for any $\bx=\bmu+\sigma(u\bar{\btheta}_0+\bu^{\perp})\in\bbR^d$,  we have
		\begin{align}
			&P_{\bX_j'|\hatY_j'=1}(\bx)=\sum_{y\in\{-1,+1\}} P_{\bX_j'|\hatY_j'=1,Y_j'=y}(\bx)P_{Y_j'|\hatY_j'=1}(y)\\
			&=\mathbbm{1}\Big\{u> -\frac{\alpha}{\sigma} \Big\}p_{\tilg_j}\Big(u+\frac{2\alpha}{\sigma}\Big)p_{\tilde{\bg}_j^{\bot}}(\bu^{\bot}+\bar{\btheta}_0^{\bot})+\mathbbm{1}\Big\{u> -\frac{\alpha}{\sigma} \Big\}p_{\tilg_j}(u)p_{\tilde{\bg}_j^{\bot}}(\bu^{\bot}).
		\end{align}
		
		
		Define the set $\calU_0^{\bot}(\xi_0,\bmu^{\bot}):=\{\bu^{\bot}\in\bbR^d: \bu^{\bot}\perp \btheta_0\}$. We also use $\calU_0^{\bot}$ to represent  $\calU_0^{\bot}(\xi_0,\bmu^{\bot})$, if there is no risk of confusion. Recall \eqref{Eq:G_sigma} and note that $\int_{\calU_0^{\bot}}p_{\tilde{\bg}^{\bot}}(\bu^{\bot})\rmd \bu^{\bot}=1$. Finally, the KL-divergence is given by
		\begin{align}
			&D_{\xi_0,\bmu^{\bot}}(P_{\bX_j'| \hatY_j'=-1} \| P_{\bX | Y=-1}) \nn\\
			&=\int_{\calU_0^{\bot}}\int^{\frac{\alpha}{\sigma}}_{-\infty}\bigg(p_{\tilg_j}\Big(u-\frac{2\alpha}{\sigma}\Big)p_{\tilde{\bg}_j^{\bot}}(\bu^{\bot}-\bar{\btheta}_0^{\bot})+p_{\tilg_j}(u)p_{\tilde{\bg}_j^{\bot}}(\bu^{\bot}) \bigg) \nn\\
			&\qquad  \times \log\bigg(1+ \frac{p_{\tilg_j}\Big(u-\frac{2\alpha}{\sigma}\Big)p_{\tilde{\bg}_j^{\bot}}(\bu^{\bot}-\bar{\btheta}_0^{\bot}) }{p_{\tilg_j}(u)p_{\tilde{\bg}_j^{\bot}}(\bu^{\bot})}\bigg)\, \rmd u\, \rmd \bu^{\bot}
			=G_{\sigma}(\alpha,\xi_0,\bmu^{\bot}) 
		\end{align}
		and 
		\begin{align}
			&D_{\xi_0,\bmu^{\bot}}(P_{\bX_j'| \hatY_j'=1} \| P_{\bX | Y=1}) \nn\\
			&=\int_{\calU_0^{\bot}}\int_{-\frac{\alpha}{\sigma}}^{+\infty}\bigg(p_{\tilg_j}\Big(u+\frac{2\alpha}{\sigma}\Big)p_{\tilde{\bg}_j^{\bot}}(\bu^{\bot}+\bar{\btheta}_0^{\bot})+p_{\tilg_j}(u)p_{\tilde{\bg}_j^{\bot}}(\bu^{\bot}) \bigg) \nn\\
			&\qquad \times  \log\bigg(1+ \frac{p_{\tilg_j}\Big(u+\frac{2\alpha}{\sigma}\Big)p_{\tilde{\bg}_j^{\bot}}(\bu^{\bot}+\bar{\btheta}_0^{\bot}) }{p_{\tilg_j}(u)p_{\tilde{\bg}_j^{\bot}}(\bu^{\bot})}\bigg)\, \rmd u\, \rmd \bu^{\bot}
			 =G_{\sigma}(\alpha,\xi_0,\bmu^{\bot}) , \label{Eq:f shift}
		\end{align}
		where \eqref{Eq:f shift} follows from since $p_{\tilg_j}$ and $p_{\tilde{\bg}_j^{\bot}}$ are zero-mean Gaussian distributions.
		Then from~\eqref{Eq:simplify KL div}, we have
		\begin{align}\label{Eq:disintegrated KL}
			&D_{\xi_0,\bmu^{\bot}}(P_{\bX_j',\hatY_j'} \| P_{\bX,Y})=G_{\sigma}(\alpha,\xi_0,\bmu^{\bot}).
		\end{align}

	\end{itemize}
	
	Thus, by combining the aforementioned results, we get the closed-form expression of the upper bound for $|\mathrm{gen}_1|$. Indeed, if we fix some $d\in\bbN$, $\epsilon>0$ and $\delta\in(0,1)$,  there exists $n_0(d,\delta)\in\bbN$, $m_0(\epsilon,d,\delta)\in\bbN$, $c_0(d,\delta)\in(\tilc_1,\infty)$,  $r_0(d,\delta)\in\bbR_+$ such that for all $n>n_0, m>m_0, c>c_0, r>r_0$, $\delta_{m,c-\tilc_1,d}<\frac{\delta}{3}$, $\delta_{r,d}<\frac{\delta}{3}$, 
	and with probability at least $1-\delta$, 
	\begin{align}
		&|\mathrm{gen}_1| 
		\leq \sqrt{\frac{(c_2-c_1)^2}{2}} \bbE_{\xi_0,\bmu^{\bot}}\bigg[\sqrt{ G_{\sigma}(\alpha(\xi_0,\bmu^{\bot}),\xi_0,\bmu^{\bot})+\epsilon} ~\bigg]. 
	\end{align}

	\item \textbf{Pseudo-label using $\btheta_1$:} The same as those in Appendix \ref{pf of Thm:exact gen GMM}.

	\item \textbf{Iteration $t=2$:} Recall $\btheta_2$ in \eqref{Eq:theta2}, the new model parameter learned from the pseudo-labelled dataset $\hatS_{\rmu,2}$.
	
	Given any $(\btheta_1,\xi_0,\bmu^{\bot})$, for any $j\in\calI_2$, let $\bmu_2^{\btheta_1,\xi_0,\bmu^{\bot}}:=\bbE[\sgn(\bar{\btheta}_1^\top \bX'_j)\bX'_j|\btheta_1,\xi_0,\bmu^{\bot}]$ and $\bbP_{\btheta_1,\xi_0,\bmu^{\bot}}$ denotes the probability measure under the parameters $\btheta_1,\xi_0,\bmu^{\bot}$.
	Following the similar steps that derive \eqref{Eq:theta1 concentrate ineq}, for any $\veps>0$, we have
	\begin{align}
		\bbP_{\btheta_1,\xi_0,\bmu^{\bot}}\big(\| \btheta_2-\bmu_2^{\btheta_1,\xi_0,\bmu^{\bot}}\|_{\infty}>\veps \big)\leq \delta_{m,\veps,d}. \label{Eq:theta2 concentrate ineq}
	\end{align}
	
	From \eqref{Eq:mu1 mu distance}, no matter what $\btheta_1$ is, we always have $\|\bmu_2^{\btheta_1,\xi_0,\bmu^{\bot}}-\bmu^{\xi_0,\bmu^{\bot}}\|\leq \tilc_1$. Then, for some $c\in(\tilc_1,\infty)$,
	\begin{align}
		\bbP_{\btheta_1,\xi_0,\bmu^{\bot}}(\btheta_2 \in \Theta_{\bmu,c})\geq 1-\delta_{m,c-\tilc_1,d}.
	\end{align}
	With probability at least $(1-\delta_{m,c-\tilc_1,d})(1-\delta_{r,d})$, the absolute  generalization error can be upper bounded as follows:
	\begin{align}
		&|\mathrm{gen}_2|=|\bbE[L_{P_\bZ}(\btheta_2)-L_{\hatS_{\rmu,2}}(\btheta_2)]|\\
		&=\bigg|\frac{1}{m}\sum_{i\in\calI_2} \bbE_{\btheta_1,\xi_0,\bmu^{\bot}}\bigg[\bbE\left[l(\btheta_2,(\bX,Y))-l(\btheta_2,(\bX'_i,\hatY_i'))|\btheta_1,\xi_0,\bmu^{\bot} \right]\bigg] \bigg|\\
		&\leq \sqrt{\frac{(c_2-c_1)^2}{2}}  \frac{1}{m}\sum_{i\in\calI_2} \bbE_{\btheta_1,\xi_0,\bmu^{\bot}}\bigg[\sqrt{I_{\btheta_1,\xi_0,\bmu^{\bot}}(\btheta_2;(\bX'_i,\hatY_i'))+D_{\btheta_1,\xi_0,\bmu^{\bot}}(P_{\bX'_i,\hatY_i'}\| P_{\bX,Y})} \bigg],
	\end{align}
	where $P_{\btheta_2,\bX,Y|\btheta_1,\xi_0,\bmu^{\bot}}=P_{\btheta_2|\btheta_1,\xi_0,\bmu^{\bot}}\otimes P_{\bX,Y}$.
	
	Similar to \eqref{Eq:I_1 Op1}, for any $\epsilon>0$ and $\delta\in(0,1)$, there exists $m_1(\epsilon,d,\delta)$ such that for all $m>m_1$,
	\begin{align}
		\bbP_{\btheta_1,\xi_0,\bmu^{\bot}}(I_{\btheta_1,\xi_0,\bmu^{\bot}}(\btheta_2;(\bX'_i,\hatY'_i))>\epsilon)\leq \delta.
	\end{align}
	
	
	Recall \eqref{Eq:P_hatY 2 prob} that $P_{\hatY_i'|\btheta_1,\xi_0,\bmu^{\bot}}\sim \textrm{unif}(\{-1,+1\})$. 
	For any fixed $(\btheta_1,\xi_0,\bmu^{\bot})$, recall  $\bar{\btheta}_1$ can be decomposed as $\bar{\btheta}_1=\alpha_1(\xi_0,\bmu^{\bot}) \bmu+\beta_1(\xi_0,\bmu^{\bot}) \bup$.
	
	By following the similar steps in the first iteration, the disintegrated conditional KL-divergence between pseudo-labelled distribution and true distribution is given by
	\begin{align}
		&D_{\btheta_1,\xi_0,\bmu^{\bot}}\big(P_{\bX'_i,\hatY'_i}\| P_{\bX,Y}\big) \nn\\
		&=\frac{1}{2}D_{\btheta_1,\xi_0,\bmu^{\bot}}\big(P_{\bX_i'| \hatY_i'=-1} \| P_{\bX | Y=-1}\big)+\frac{1}{2}D_{\btheta_1,\xi_0,\bmu^{\bot}}\big(P_{\bX_i'| \hatY_i'=1} \| P_{\bX | Y=1}\big)\\
		&=G_{\sigma}\big(\alpha_1(\xi_0,\bmu^{\bot}),\xi_0,\bmu^{\bot}\big).
	\end{align}

	Given any pair of $(\xi_0,\bmu^{\bot})$, recall the decomposition of $\bmu_1^{\xi_0,\bmu^{\bot}}$ in \eqref{Eq:mu1 decomp}. Then the correlation between $\bmu_1^{\xi_0,\bmu^{\bot}}$ and $\bmu$ is given by
	\begin{align}
		\rho(\bmu_1^{\xi_0,\bmu^{\bot}},\bmu)&=\frac{1-2\rmQ \big(\frac{\alpha}{\sigma}\big)+\frac{2\sigma\alpha}{\sqrt{2\pi}}\exp(-\frac{\alpha^2}{2\sigma^2})}{\sqrt{\big(1-2\rmQ \big(\frac{\alpha}{\sigma}\big)+\frac{2\sigma\alpha}{\sqrt{2\pi}}\exp(-\frac{\alpha^2}{2\sigma^2}) \big)^2+\frac{2\sigma^2(1-\alpha^2)}{\pi}\exp(-\frac{\alpha^2}{\sigma^2})}}\\
		&=F_{\sigma}(\alpha(\xi_0,\bmu^{\bot})). \label{Eq:corre mu1 mu}
	\end{align}
	By the strong law of large numbers, we have $\alpha_1(\xi_0,\bmu^{\bot})\xrightarrow{\text{a.s.}}F_{\sigma}(\alpha(\xi_0,\bmu^{\bot}))$ as $m\to\infty$. Then for any $\epsilon>0$ and $\delta\in(0,1)$, there exists $m_2(\epsilon,d,\delta)$ such that for all $m>m_2$,
	\begin{align}
		\bbP_{\btheta_1,\xi_0,\bmu^{\bot}}\Big( \Big|G_{\sigma}\big(\alpha_1(\xi_0,\bmu^{\bot}),\xi_0,\bmu^{\bot}\big)-G_{\sigma}\Big(F_{\sigma}(\alpha(\xi_0,\bmu^{\bot})),\xi_0,\bmu^{\bot}\Big) \Big|>\epsilon \Big)\leq \delta.
	\end{align}
	
	Therefore, fix some $d\in\bbN$, $\epsilon>0$ and $\delta\in(0,1)$. There exists $n_0(d,\delta)\in\bbN$, $m_3(\epsilon,d,\delta)\in\bbN$, $c_0(d,\delta)\in(\tilc_1,\infty)$,  $r_0(d,\delta)\in\bbR_+$ such that for all $n>n_0, m>m_3, c>c_0, r>r_0$, $\delta_{m,c-\tilc_1,d}<\frac{\delta}{3}$, $\delta_{r,d}<\frac{\delta}{3}$, 
	and then with probability at least $1-\delta$, the absolute generalization error at $t=2$ can be upper bounded as follows:
	\begin{align}
		&|\mathrm{gen}_2|\leq  \frac{c_2-c_1}{\sqrt{2}} \bbE_{\xi_0,\bmu^{\bot}}\left[\sqrt{ G_{\sigma}\Big(F_{\sigma}(\alpha(\xi_0,\bmu^{\bot})),\xi_0,\bmu^{\bot}\Big)+\epsilon} \right].
	\end{align}

	\item \textbf{Any iteration $t\in[3:\tau]$:} By similarly repeating the calculation in iteration $t=2$, we obtain the upper bound for $|\mathrm{gen}_t|$ in \eqref{Eq:upper bd gent}.
	
\end{enumerate}

\section{Reusing $S_{\rml}$ in Each Iteration}\label{Append:reuse S_l}
If the labelled data $S_{\rml}$ are reused in each iteration and $w=\frac{n}{n+m}$ (cf.~\eqref{Eq: def of empirical risk}), for each $t\in[1:\tau]$, the learned model parameter is given by
\begin{align}
	 \btheta_t'&=\frac{n}{n+m}\btheta_0+\frac{1}{n+m}\sum_{i\in\calI_t} \hatY_i'\bX'_i\\
	&=\frac{n}{n+m}\btheta_0+\frac{1}{n+m}\sum_{i\in\calI_t}\sgn(\bar{\btheta}_{t-1}'^\top \bX'_i)\bX'_i. \label{Eq:theta_t with labelled}
\end{align}

Similarly to $F_{\sigma}$, let us define the {\em enhanced correlation evolution function} $\tilF_{\sigma,\xi_0,\bmu^{\bot}} : [-1,1]\to [-1,1]$ as follows:
\begin{align}
	\tilF_{\sigma,\xi_0,\bmu^{\bot}}(x) =\bigg(1+\frac{\big(w\frac{\sigma\|\bmu^{\bot}\|_2}{n}+(1-w)(\frac{2\sigma \sqrt{1-x^2}}{\sqrt{2\pi}}\exp(-\frac{x^2}{2\sigma^2}) \big)^2}{\big(w(1+\frac{\sigma}{\sqrt{n}}\xi_0)+(1-w)(1-2\rmQ \big(\frac{x}{\sigma}\big)+\frac{2\sigma x}{\sqrt{2\pi}}\exp(-\frac{x^2}{2\sigma^2}))\big)^2} \bigg)^{-\frac{1}{2}}. \label{Eq:new F_sig}
\end{align}
From Theorem \ref{Thm:gen bound GMM}, we can obtain similar characterization for $\mathrm{gen}_t$.
\begin{corollary}\label{Coro:reuse labeldata gen bound}
Fix any $\sigma\in\bbR_+$, $d\in\bbN$ and  $\alpha=\alpha(\xi_0,\bmu^{\bot})$. For almost all sample paths, 
\begin{align}
	  &\mathrm{gen}_t-o(1)\nn\\
	&=\bbE_{\xi_0,\bmu^{\bot}}\bigg[\frac{m(m\!-\! 1)(J_{\sigma}^2(\tilF_{\sigma,\xi_0,\bmu^{\bot}}(\alpha)) \! +\!  K_{\sigma}^2(\tilF_{\sigma,\xi_0,\bmu^{\bot}}(\alpha))) \! -\!  m(m-n)J_{\sigma}(\tilF_{\sigma,\xi_0,\bmu^{\bot}}(\alpha))\! -\! nm}{(n+m)^2\sigma^2} \bigg]   .\label{Eq:exact gent reuse labelled data}
\end{align}
\end{corollary}
The proof of Corollary \ref{Coro:reuse labeldata gen bound} is provided in Appendix \ref{pf of Coro:reuse labeldata gen bound}.

Recall the definition of the function $\tilF_{\sigma,\xi_0,\bmu^{\bot}}$ in \eqref{Eq:new F_sig}. Let the $t$-th iterate of $\tilF_{\sigma,\xi_0,\bmu^{\bot}}$ be denoted as $\tilF_{\sigma,\xi_0,\bmu^{\bot}}^{(t)}$ with initial condition $\tilF^{(0)}_{\sigma,\xi_0,\bmu^{\bot}}(x)=x$. As shown in Figure \ref{Fig:newFsig}, we can see that for any fixed $(\sigma, \xi_0, \bmu^{\bot})$, $\tilF_{\sigma,\xi_0,\bmu^{\bot}}^{(t)}$ has a similar behaviour as $F_{\sigma}^{(t)}$ as $t$ increases, which implies that the gen-error in \eqref{Eq:exact gent reuse labelled data} in Corollary~\ref{Coro:reuse labeldata gen bound} also decreases as $t$ increases. As a result, $\tilF^{(t)}_{\sigma,\xi_0,\bmu^{\bot}}$ represents the improvement of the model parameter $\btheta_t$ over the iterations.

\begin{figure}[!t]
	\centering
	\begin{minipage}[t]{0.43\linewidth}
		\centering
		\includegraphics[width=1\linewidth]{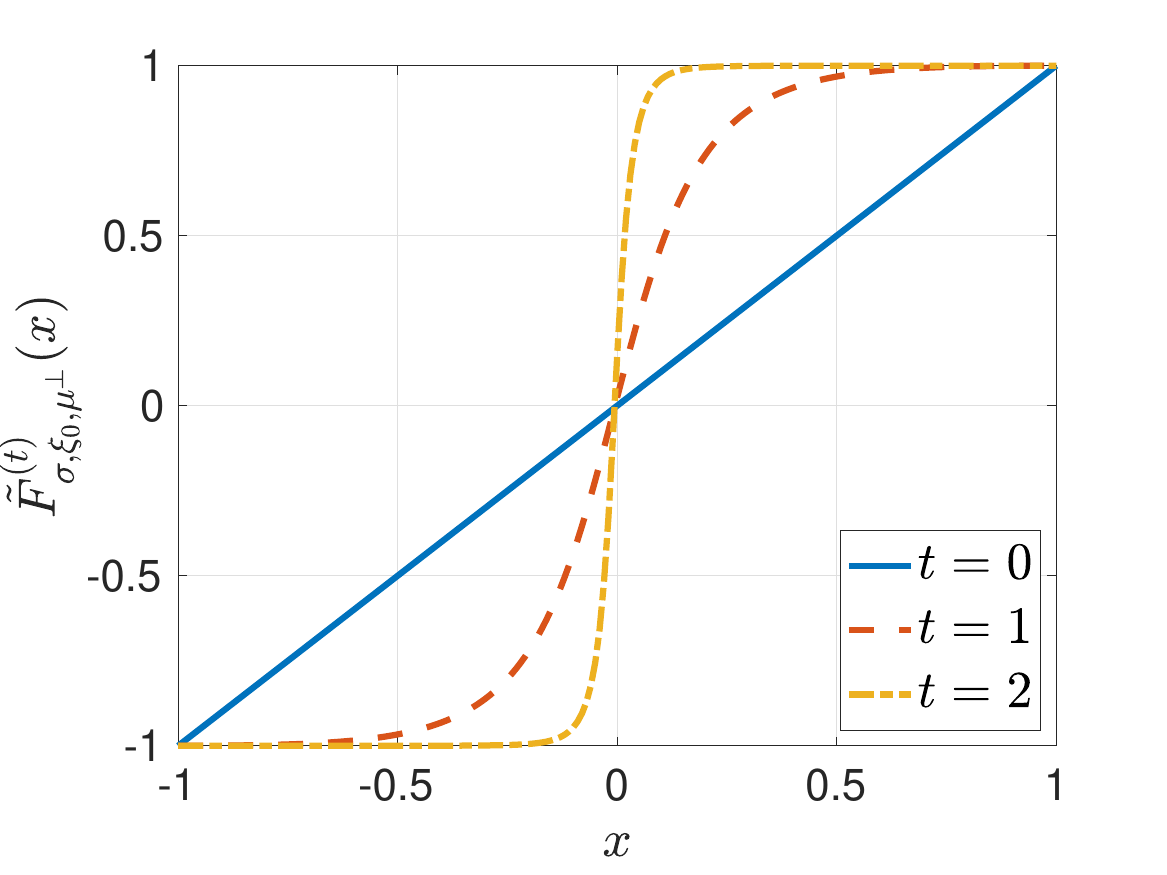}
		\caption{$\tilF_{\sigma,\xi_0,\bmu^{\bot}}^{(t)}(x)$ versus  $x$ for $t\in\{0,1,2\}$ when $\sigma=0.5$, $\xi_0=0$, $\|\bmu^{\bot}\|_2=1$, $n=10$,  and $m=1000$.}
		\label{Fig:newFsig}
	\end{minipage}
	\hspace{5pt}
	\begin{minipage}[t]{0.43\linewidth}
		\centering
		\includegraphics[width=1\linewidth]{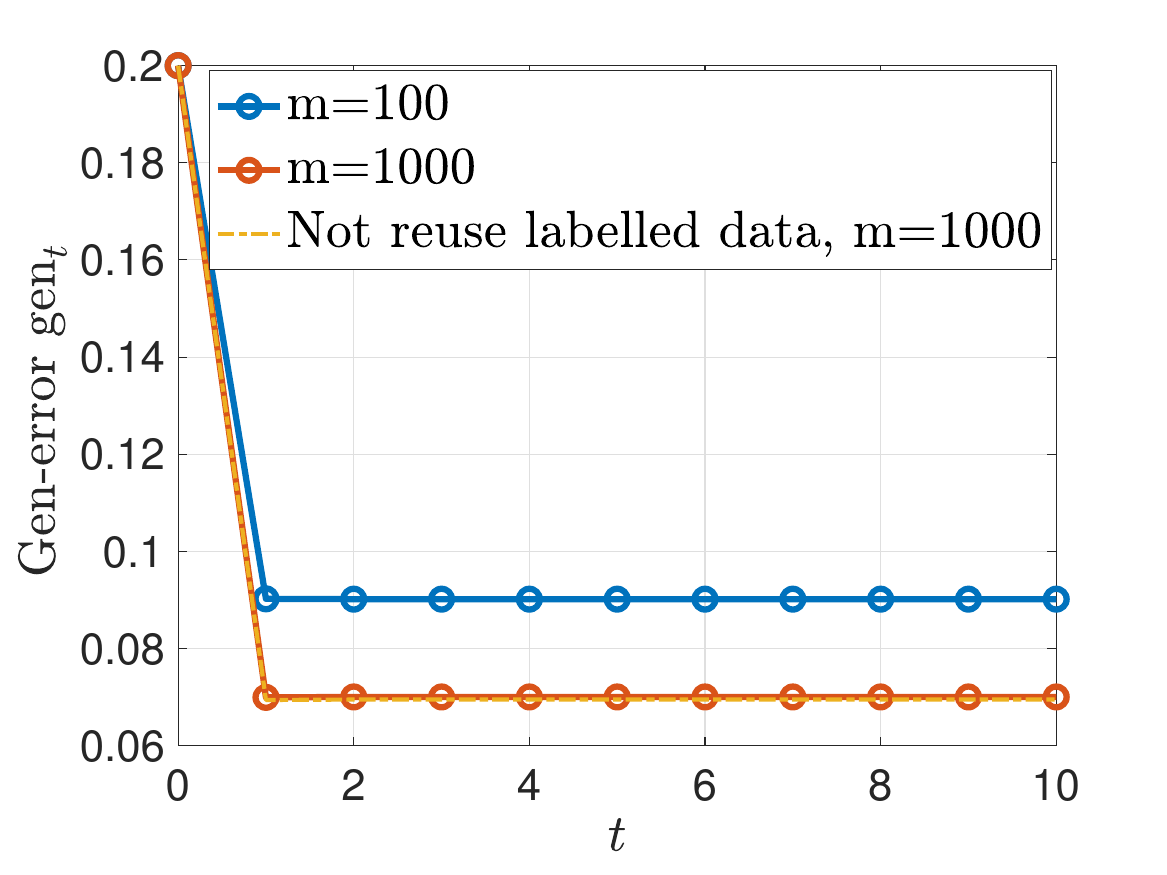}
		\caption{$\mathrm{gen}_t$ versus  $t$ for $m=100$ and $m=1000$, when $n=10$, $\sigma=0.6$, $d=2$, and $\bmu=(1,0)$.}
		\label{Fig:gen_bd_with_labelled}
	\end{minipage}
\end{figure}

As shown in Figure \ref{Fig:gen_bd_with_labelled}, under the same setup as Figure \ref{Fig:gen bd and empirical}, when the labelled data $S_{\rml}$ are reused in each iteration, 

the upper bound for $|\mathrm{gen}_t|$ is also a decreasing function of $t$. When $m=1000$, $\mathrm{gen}_t$ is almost the same as the gen-error when the labelled data are not reused in the subsequent iterations, which means that for large enough $m/n$, reusing the labelled data does not necessarily help to improve the generalization performance. Moreover, when $m=100$, $\mathrm{gen}_t$ is higher than that for $m=1000$, which coincides with the intuition that increasing the number of unlabelled data helps to reduce the generalization error.

\section{Proof of Corollary \ref{Coro:reuse labeldata gen bound}}\label{pf of Coro:reuse labeldata gen bound}
Following similar steps as in Appendix \ref{pf of Thm:gen bound GMM}, we first derive the upper bound for $|\mathrm{gen}_1|$.

At $t=1$, from \eqref{Eq:theta_0} and \eqref{Eq:mu1 decomp}, the expectation $\bmu_1'^{\xi_0,\bmu^{\bot}}:=\bbE[\btheta_1'|\xi_0,\bmu^{\bot}]$ is rewritten as
\begin{align}
	\bmu_1'^{\xi_0,\bmu^{\bot}}&=\frac{n}{n+m}\btheta_0+\frac{1}{n+m}\sum_{i=1}^m\bbE[\sgn(\bar{\btheta}_0^\top \bX'_j)\bX'_j|\xi_0,\bmu^{\bot}] \nn\\
	&=\frac{n}{n+m}\bigg(\bigg(1+\frac{\sigma}{\sqrt{n}}\xi_0\bigg)\bmu+\frac{\sigma}{\sqrt{n}}\bmu^{\bot} \bigg) \nn\\*
	&\quad +\frac{m}{n+m}\bigg(\bigg(1-2\rmQ \bigg(\frac{\alpha}{\sigma}\bigg)+\frac{2\sigma \alpha}{\sqrt{2\pi}}\exp\bigg(-\frac{\alpha^2}{2\sigma^2}\bigg) \bigg)\bmu+\frac{2\sigma \beta}{\sqrt{2\pi}}\exp\bigg(-\frac{\alpha^2}{2\sigma^2}\bigg)\bup \bigg)\\
	&=\bigg(1+\frac{\sqrt{n}\sigma\xi_0}{n+m}+\frac{m}{n+m}\bigg(-2\rmQ \bigg(\frac{\alpha}{\sigma}\bigg)+\frac{2\sigma \alpha}{\sqrt{2\pi}}\exp\bigg(-\frac{\alpha^2}{2\sigma^2}\bigg) \bigg) \bigg)\bmu \nn\\
	&\quad +\bigg(\frac{\sqrt{n}\sigma \|\bmu^{\bot}\|_2}{n+m}+\frac{m}{n+m}\frac{2\sigma \beta}{\sqrt{2\pi}}\exp\bigg(-\frac{\alpha^2}{2\sigma^2}\bigg) \bigg)\bup.
\end{align}
Then the correlation between $\bmu_1'^{\xi_0,\bmu^{\bot}}$ and $\bmu$ is given by
\begin{align}
	\rho(\bmu_1'^{\xi_0,\bmu^{\bot}},\bmu)=\tilF_{\sigma,\xi_0,\bmu^{\bot}}(\alpha).
\end{align}

The gen-error $\mathrm{gen}_1$ is given by
\begin{align}
	\mathrm{gen}_1&=\frac{1}{n+m}\sum_{i=1}^n\bbE\Big[l(\btheta_1',(\bX,Y))-l(\btheta_1,(\bX_i,Y_i))\Big] \nn\\
	&\quad +\frac{1}{n+m}\sum_{i=1}^m \bbE_{\xi_0,\bmu^{\bot}}\bigg[\bbE\left[l(\btheta_1',(\bX,Y))-l(\btheta_1',(\bX'_i,\hatY_i'))|\xi_0,\bmu^{\bot} \right]\bigg]\\
	&=\frac{1}{n+m}\sum_{i=1}^n\bbE_{\btheta_1'} \Big[\HD(P_{\bZ}\|P_{\bZ_i|\btheta_1'}|p_{\btheta_1'}) \Big] \nn\\
	&\quad +\frac{1}{n+m}\sum_{i=1}^{m}\sum_{i=1}^n\bbE_{\xi_0,\bmu^{\bot}}\bbE_{\btheta_1'|\xi_0,\bmu^{\bot}} \bigg[\HD(P_{\bZ}\| P_{\bX_i',\hatY_i'|\xi_0,\bmu^{\bot}}|p_{\btheta_1'}) \nn\\
	&\quad +\HD(P_{\bX_i',\hatY_i'|\xi_0,\bmu^{\bot}}\| P_{\bX_i',\hatY_i'|\xi_0,\bmu^{\bot},\btheta_1'}| p_{\btheta_1}) \bigg].
\end{align}

\begin{itemize}
	\item 	\textbf{Calculate $\bbE_{\btheta_1'} \Big[\HD(P_{\bZ}\|P_{\bZ_i|\btheta_1'}|p_{\btheta_1'}) \Big] $:}
	\begin{align}
		&\bbE_{\btheta_1'} \Big[\HD(P_{\bZ}\|P_{\bZ_i|\btheta_1'}|p_{\btheta_1'}) \Big] \nn\\
		&=\int Q_{\btheta_1'}(\btheta)(P_{\bZ}(\bz)-P_{\bZ_i|\btheta_1'}(\bz|\btheta))\log\frac{1}{p_{\btheta}(\bz)} \rmd \bz \rmd \btheta\\
		&=-\frac{1}{2\sigma^2}\int \Big( P_{\bZ}(\bx,1)(Q_{\btheta_1'}(\btheta)-P_{\btheta_1'|\bZ_i}(\btheta|\bx,1)) \nn\\
		&\qquad - P_{\bZ}(\bx,-1)(Q_{\btheta_1'}(\btheta)-P_{\btheta_1'|\bZ_i}(\btheta|\bx,-1)) \Big)2\btheta^\top\bx ~\rmd \bx \rmd \btheta\\
		&=-\frac{1}{\sigma^2}\int \Big(\frac{\bmu-\bx}{n+m} P_{\bZ}(\bx,1)- \frac{\bmu+\bx}{n+m}P_{\bZ}(\bx,-1) \Big)^\top\bx ~\rmd \bx\\
		&=-\frac{1}{\sigma^2}\bigg(\frac{\bmu^\top\bmu-\bmu^\top\bmu-d\sigma^2}{2(n+m)}-\frac{-\bmu^\top\bmu+\bmu^\top\bmu+d\sigma^2}{2(n+m)} \bigg)\\
		&=\frac{d}{n+m}.
	\end{align}
	
	\item \textbf{Calculate $\bbE_{\btheta_1'|\xi_0,\bmu^{\bot}}\big[  \HD(P_{\bX_i',\hatY_i'|\xi_0,\bmu^{\bot}}\| P_{\bX_i',\hatY_i'|\xi_0,\bmu^{\bot},\btheta_1'}| p_{\btheta_1})\big]$:}
		\begin{align}
			&\bbE_{\btheta_1'|\xi_0,\bmu^{\bot}}\big[  \HD(P_{\bX_i',\hatY_i'|\xi_0,\bmu^{\bot}}\|P_{\bX_i',\hatY_i'|\xi_0,\bmu^{\bot},\btheta_1'}| p_{\btheta_1})\big]\nn\\
			&=-\frac{1}{\sigma^2}\int P_{\bX_i',\hatY_i'|\xi_0,\bmu^{\bot}}(\bx,y|\xi_0,\bmu^{\bot}) \big(Q_{\btheta_1'|\xi_0,\bmu^{\bot}}(\btheta|\xi_0,\bmu^{\bot}) \nn\\
			&\qquad \qquad  -P_{\btheta_1'|\bX_i',\hatY_i',\xi_0,\bmu^{\bot}}(\btheta| \bx,y,\xi_0,\bmu^{\bot}) \big)(y\btheta^\top\bx) \rmd \bx \rmd y \rmd \btheta\\
			&=-\frac{1}{\sigma^2}\int P_{\bX_i',\hatY_i'|\xi_0,\bmu^{\bot}}(\bx,y|\xi_0,\bmu^{\bot}) \bigg(\frac{\bmu_1^{\xi_0,\bmu^{\bot}}-y\bx}{n+m} \bigg)^\top  (y\bx) \rmd \bx \rmd y \\
			&=\frac{1}{(n+m)\sigma^2}\Big(\bbE[\bX_i'^\top\bX_i'|\xi_0,\bmu^{\bot}]-(\bmu_1^{\xi_0,\bmu^{\bot}})^\top\bmu_1^{\xi_0,\bmu^{\bot}}\Big)\\
			&=\frac{d\sigma^2+\bmu^\top\bmu-(\bmu_1^{\xi_0,\bmu^{\bot}})^\top\bmu_1^{\xi_0,\bmu^{\bot}}}{(n+m)\sigma^2}\\
			&=\frac{d\sigma^2+1-J_{\sigma}^2(\alpha)-K_{\sigma}^2(\alpha)}{(n+m)\sigma^2}.
		\end{align}
	
	\item \textbf{Calculate $\bbE_{\btheta_1'|\xi_0,\bmu^{\bot}}[\HD(P_{\bZ}\| P_{\bX_i',\hatY_i'|\xi_0,\bmu^{\bot}}|p_{\btheta_1'})]$:}
	Since given any fixed $\btheta_1$, $p_{\btheta_1'}(\bx|\cdot)$ is a Gaussian distribution, for any $y\in\{\pm 1\}$, we have
	\begin{align}
		&\frac{1}{2}\int P_{\btheta_1'|\xi_0,\bmu^{\bot}}(\btheta) \big(P_{\bX|Y}(\bx|\by)-P_{\bX_i'|\hatY_i'}(\bx|y) \big)\log \frac{1}{ p_{\btheta}(\bx|y)} \rmd \bx  \rmd \btheta \nn\\
		&=\frac{1}{4\sigma^2}\int P_{\btheta_1'|\xi_0,\bmu^{\bot}}(\btheta) \big(P_{\bX|Y}(\bx|y)-P_{\bX_i'|\hatY_i'}(\bx|y) \big)\big(\bx^\top\bx-2y\btheta^\top \bx+\btheta^\top\btheta \big) \rmd \bx \rmd \btheta\\
		&=-\frac{1}{2\sigma^2}\int P_{\btheta_1'|\xi_0,\bmu^{\bot}}(\btheta) \bigg(\frac{1}{2}P_{\bX|Y}(\bx|y)-\frac{1}{2}P_{\bX_i'|\hatY_i'}(\bx|y) \bigg)\big(y\btheta^\top \bx \big) \rmd \bx \rmd \btheta\\
		&=-\frac{ 1}{2\sigma^2}(\bmu_1'^{\xi_0,\bmu^{\bot}})^\top\big(\bmu-\bmu_1^{\xi_0,\bmu^{\bot}} \big)\\
		&=\frac{m(J_{\sigma}^2(\alpha)+K_{\sigma}^2(\alpha))-(m-n)J_{\sigma}(\alpha)-n}{2\sigma^2(n+m)},
	\end{align}
	and then 
	\begin{align}
		\bbE_{\btheta_1'|\xi_0,\bmu^{\bot}}[\HD(P_{\bZ}\| P_{\bX_i',\hatY_i'|\xi_0,\bmu^{\bot}}|p_{\btheta_1'})]=\frac{m(J_{\sigma}^2(\alpha)+K_{\sigma}^2(\alpha))-(m-n)J_{\sigma}(\alpha)-n}{\sigma^2(n+m)}.
	\end{align}
\end{itemize}


Therefore, the gen-error at $t=1$ can be exactly characterized as follows
\begin{align}
	\mathrm{gen}_1&=\bbE_{\xi_0,\bmu^{\bot}}\bigg[\frac{nd}{(n+m)^2\sigma^2} \nn\\
	&\qquad  +\frac{m}{n+m}\frac{d\sigma^2+1-J_{\sigma}^2(\alpha)-K_{\sigma}^2(\alpha)+m(J_{\sigma}^2(\alpha)+K_{\sigma}^2(\alpha))-(m-n)J_{\sigma}(\alpha)-n}{(n+m)\sigma^2}\bigg]\\
	&=\bbE_{\xi_0,\bmu^{\bot}}\bigg[\frac{m(m-1)(J_{\sigma}^2(\alpha)+K_{\sigma}^2(\alpha))-m(m-n)J_{\sigma}(\alpha)-nm+m(1+d\sigma^2)+nd}{(n+m)^2\sigma^2} \bigg].
\end{align}

For $t\geq 2$, similar to the derivation in Appendix \ref{pf of Thm:exact gen GMM}, by iteratively implementing the calculation, we only need to replace $\alpha$ with the correlation evolution function $\tilF_{\sigma,\xi_0,\bmu^{\bot}}(\cdot)$ (cf.~\eqref{Eq:new F_sig}) and then the gen-error for any $t\geq 1$ is characterized as follows.
For almost all sample paths, there exists a vanishing sequence $\epsilon_m$ ($\epsilon_m\to 0$ as $m\to\infty$) , such that
\begin{align}
	\mathrm{gen}_t
	=\bbE_{\xi_0,\bmu^{\bot}}\bigg[&\frac{m(m-1)(J_{\sigma}^2(\tilF_{\sigma,\xi_0,\bmu^{\bot}}(\alpha))+K_{\sigma}^2(\tilF_{\sigma,\xi_0,\bmu^{\bot}}(\alpha)))}{(n+m)^2\sigma^2} \nn\\
	& +\frac{-m(m-n)J_{\sigma}(\tilF_{\sigma,\xi_0,\bmu^{\bot}}(\alpha))-nm}{(n+m)^2\sigma^2}\bigg]+\epsilon_m',
\end{align}
where $\epsilon_m'=\epsilon_m+\frac{m(1+d\sigma^2)+nd}{(n+m)^2\sigma^2}\to 0$ as $m\to\infty$ and $\alpha$ stands for $\alpha(\xi_0,\bmu^{\bot})$.

The proof of Corollary~\ref{Coro:reuse labeldata gen bound} is thus completed.

\section{Proof of Theorem \ref{Thm:gen bound GMM reg}}\label{pf of Thm:gen bound GMM reg}
Theorem \ref{Thm:gen bound GMM reg} can be proved similarly from the proof of Theorem \ref{Thm:gen bound GMM}. For simplicity, in the following proofs, we abbrviate $\mathrm{gen}_t(P_\bZ, P_\bX, \{P_{\btheta_k|S_{\rml},S_{\rmu}}\}_{k=0}^t, \{f_{\btheta_k}\}_{k=0}^{t-1} )$ as $\mathrm{gen}_t$. With $\ell_2$-regularization, the algorithm operates in the following steps. Let $\lambda\in\bbR_+$ be the regularization parameter.

\begin{itemize}[leftmargin= 15 pt]
	\item  \textbf{Step 1: Initial round $t=0$ with $S_{\rml}$:} By minimizing the regularized empirical risk of labelled dataset $S_{\rml}$
	\begin{align}
		L_{S_{\rml}}^{\mathrm{reg}}(\btheta)&=\frac{1}{n}\sum_{i=1}^n l(\btheta,(\bX_i,Y_i))+\frac{\lambda}{2}\|\btheta\|_2^2 \overset{\rmc}{=} \frac{1}{2\sigma^2 n}\sum_{i=1}^n(\bX_i-Y_i\btheta)^\top(\bX_i-Y_i\btheta)+\frac{\lambda}{2}\|\btheta\|_2^2,
	\end{align}	
	where $\stackrel{\rmc}{=}$ means that both sides differ by a constant independent of $\btheta$,  we obtain the minimizer
	\begin{align}
		\btheta_0^{\mathrm{reg}}&= \argmin_{\btheta\in\Theta}L_{S_{\rml}}(\btheta)=\frac{1}{n}\sum_{i=1}^n \frac{Y_i \bX_i}{1+\sigma^2\lambda} \nn\\
		&= \frac{\btheta_0}{1+\sigma^2\lambda} \sim \calN\bigg(\frac{\bmu}{1+\sigma^2\lambda}, \frac{\sigma^2}{n(1+\sigma^2\lambda)^2}\bI_d \bigg). \label{Eq: def theta_0 reg}
	\end{align}
	
	\item \textbf{Step 2: Pseudo-label data in $S_{\rmu}$:} At each iteration $t\in[1:\tau]$, for any $i\in\calI_t$, we use $\btheta_{t-1}^{\mathrm{reg}}$ to assign a pseudo-label for $\bX_i'$, that is, $\hatY_i'=f_{\btheta_{t-1}^{\mathrm{reg}}}(\bX_i')=\sgn(\bX_i'^\top \btheta_{t-1}^{\mathrm{reg}})$.
	
	\item \textbf{Step 3: Refine the model:} We then use the pseudo-labelled dataset $\hatS_{\rmu,t}$ to train the new model. By minimizing the empirical risk of $\hatS_{\rmu,t}$
	\begin{align}
		L_{\hatS_{\rmu,t}}(\btheta)& = \frac{1}{m}\sum_{i\in\calI_t} l(\btheta,(\bX'_i,\hatY_i')) +\frac{\lambda}{2m}\|\btheta\|_2^2 \nn\\
		& \overset{\rmc}{=}  \frac{1}{2\sigma^2 m} \sum_{i\in\calI_t}(\bX'_i-\hatY_i'\btheta)^\top(\bX'_i-\hatY_i'\btheta)+\frac{\lambda}{2}\|\btheta\|_2^2,
	\end{align}
	we obtain the new model parameter
	\begin{align}
		\btheta_t^{\mathrm{reg}}&=\frac{1}{m}\sum_{i\in\calI_t} \frac{\hatY_i'\bX'_i}{1+\sigma^2\lambda}=\frac{1}{m}\sum_{i\in\calI_t}\frac{\sgn(\bX_i'^\top \btheta_{t-1}^{\mathrm{reg}} )\bX'_i}{1+\sigma^2\lambda}. \label{Eq:theta_t reg}
	\end{align}
	If $t<\tau$, go back to Step 2.
	
\end{itemize}

\begin{enumerate}

	\item \textbf{Characterization of $|\mathrm{gen}_1|$}:
	
	From \eqref{Eq: def theta_0 reg}, we still have $\rho(\btheta_0^{\mathrm{reg}}, \bmu)=\alpha(\xi_0,\bmu^{\bot})$
		and
	\begin{align}
				\bar{\btheta}_0^{\mathrm{reg}}:=\frac{\btheta_0^{\mathrm{reg}}}{\|\btheta_0^{\mathrm{reg}}\|_2^2}=\alpha\bmu+\beta\bup=\bar{\btheta}_0. \label{Eq:normalized theta_0^R = theta_0}
	\end{align}
	From \eqref{Eq:normalized theta_0^R = theta_0}, we can rewrite \eqref{Eq:theta_t reg} as follows
	\begin{align}
		\btheta_1^{\mathrm{reg}}&
		=\frac{1}{m}\sum_{i=1}^{m}\frac{\sgn(\bX_i'^\top \bar{\btheta}_1^{\mathrm{reg}} )\bX'_i}{1+\sigma^2\lambda}=\frac{1}{m}\sum_{i=1}^{m}\frac{\sgn(\bX_i'^\top \bar{\btheta}_0 )\bX'_i}{1+\sigma^2\lambda}=\frac{\btheta_1}{1+\sigma^2\lambda}.
	\end{align}
	Thus, the expectation of $\btheta_1^{\mathrm{reg}}$ conditioned on $(\xi_0,\bmu^{\bot})$ is given by
	\begin{align}
		&\bmu_1^{\mathrm{reg}|\xi_0,\bmu^{\bot}}:=\frac{1}{1+\sigma^2\lambda}\bbE[\sgn(\bX_j'^\top \btheta_0^{\mathrm{reg}} )\bX'_j|\xi_0,\bmu^{\bot}] \\
		&=\frac{1}{1+\sigma^2\lambda} \bmu_1^{\xi_0,\bmu^{\bot}}\\
		&=\frac{1}{1+\sigma^2\lambda} \bigg(\bigg(1-2\rmQ \bigg(\frac{\alpha}{\sigma}\bigg)+\frac{2\sigma \alpha}{\sqrt{2\pi}}\exp\bigg(-\frac{\alpha^2}{2\sigma^2}\bigg) \bigg)\bmu+\frac{2\sigma \beta}{\sqrt{2\pi}}\exp\bigg(-\frac{\alpha^2}{2\sigma^2}\bigg)\bup \bigg)\\
		&=\frac{J_{\sigma}(\alpha)\bmu+K_{\sigma}(\alpha)\bup}{1+\sigma^2\lambda}.  \label{Eq:mu1 decomp reg}
	\end{align}
	Recall $\mathrm{gen}_1$ given in \eqref{Eq:exact gen1}. In the case with regularization, the gen-error $\mathrm{gen}_1^{\reg}$ has the same definition as $\mathrm{gen}_1$. To derive $\mathrm{gen}_1^{\reg}$, we need to calculate the following two terms.
	\begin{itemize}
		\item 	\textbf{Calculate $\bbE_{\btheta_1^{\reg}|\xi_0,\bmu^{\bot}}\big[  \HD(P_{\bX_i',\hatY_i'|\xi_0,\bmu^{\bot}} \| P_{\bX_i',\hatY_i'|\xi_0,\bmu^{\bot},\btheta_1^{\reg}} | p_{\btheta_1^{\reg}})\big]$:}
		\begin{align}
			&\bbE_{\btheta_1^{\reg}|\xi_0,\bmu^{\bot}}\big[  \HD(P_{\bX_i',\hatY_i'|\xi_0,\bmu^{\bot}} \| P_{\bX_i',\hatY_i'|\xi_0,\bmu^{\bot},\btheta_1^{\reg}}| p_{\btheta_1^{\reg}})\big]\nn\\
			&=\frac{1}{2\sigma^2}\int Q_{\btheta_1^{\reg}|\xi_0,\bmu^{\bot}}(\btheta|\xi_0,\bmu^{\bot}) \big(P_{\bX_i',\hatY_i'|\xi_0,\bmu^{\bot}}(\bx,y|\xi_0,\bmu^{\bot}) \nn\\
			&\qquad  \qquad -P_{\bX_i',\hatY_i'|\xi_0,\bmu^{\bot},\btheta_1^{\reg}}(\bx,y|\xi_0,\bmu^{\bot},\btheta) \big)(\bx^\top\bx-2y\btheta^\top\bx+\btheta^\top \btheta) \rmd \bx \rmd y \rmd \btheta \\
			&=-\frac{1}{\sigma^2}\int P_{\bX_i',\hatY_i'|\xi_0,\bmu^{\bot}}(\bx,y|\xi_0,\bmu^{\bot}) \big(Q_{\btheta_1^{\reg}|\xi_0,\bmu^{\bot}}(\btheta|\xi_0,\bmu^{\bot}) \nn\\
			&\qquad \qquad  -P_{\btheta_1^{\reg}|\bX_i',\hatY_i',\xi_0,\bmu^{\bot}}(\btheta| \bx,y,\xi_0,\bmu^{\bot}) \big)(y\btheta^\top\bx) \rmd \bx \rmd y \rmd \btheta\\
			&=-\frac{1}{\sigma^2}\int P_{\bX_i',\hatY_i'|\xi_0,\bmu^{\bot}}(\bx,y|\xi_0,\bmu^{\bot}) \bigg(\frac{\bmu_1^{\xi_0,\bmu^{\bot}}-y\bx}{m(1+\sigma^2\lambda)} \bigg)^\top  (y\bx) \rmd \bx \rmd y \\
			&=\frac{1}{m\sigma^2(1+\sigma^2\lambda)}\Big(\bbE[\bX_i'^\top\bX_i'|\xi_0,\bmu^{\bot}]-(\bmu_1^{\xi_0,\bmu^{\bot}})^\top\bmu_1^{\xi_0,\bmu^{\bot}}\Big)\\
			&=\frac{d\sigma^2+\bmu^\top\bmu-(\bmu_1^{\xi_0,\bmu^{\bot}})^\top\bmu_1^{\xi_0,\bmu^{\bot}}}{m\sigma^2(1+\sigma^2\lambda)}\\
			&=\frac{d\sigma^2+1-J_{\sigma}^2(\alpha)-K_{\sigma}^2(\alpha)}{m\sigma^2(1+\sigma^2\lambda)}.
		\end{align}
		
		\item \textbf{Calculate $\bbE_{\btheta_1^{\reg}|\xi_0,\bmu^{\bot}}\big[ h(P_{\bZ},p_{\btheta_1^{\reg}})- h(P_{\bX_i',\hatY_i'|\xi_0,\bmu^{\bot}},p_{\btheta_1^{\reg}}) \big]$:}
		Given any $(\xi_0,\bmu^{\bot})$, in the following, we drop the condition on $\xi_0,\bmu^{\bot}$ for notational simplicity.
		Since given $\btheta_1$, $p_{\btheta_1}(\bx|\cdot)$ is a Gaussian distribution, for any $y\in\{\pm 1\}$, we have
		\begin{align}
			&\frac{1}{2}\int P_{\btheta_1^{\reg}|\xi_0,\bmu^{\bot}}(\btheta) \big(P_{\bX|Y}(\bx|\by)-P_{\bX_i'|\hatY_i'}(\bx|y) \big)\log \frac{1}{ p_{\btheta}(\bx|y)} \rmd \bx  \rmd \btheta \nn\\
			&=-\frac{1}{2\sigma^2}\int P_{\btheta_1^{\reg}|\xi_0,\bmu^{\bot}}(\btheta) \bigg(\frac{1}{2}P_{\bX|Y}(\bx|y)-\frac{1}{2}P_{\bX_i'|\hatY_i'}(\bx|y) \bigg)\big(y\btheta^\top \bx \big) \rmd \bx \rmd \btheta\\
			&=-\frac{1}{2\sigma^2}(\bmu_1^{\reg|\xi_0,\bmu^{\bot}})^\top\big(\bmu-\bmu_1^{\xi_0,\bmu^{\bot}} \big)\\
			&=\frac{J_{\sigma}^2(\alpha)+K_{\sigma}^2(\alpha)-J_{\sigma}(\alpha)}{2\sigma^2(1+\sigma^2\lambda)}.
		\end{align}
		Thus, we have
		\begin{align}
			\bbE_{\btheta_1|\xi_0,\bmu^{\bot}}[\HD(P_{\bZ}\| P_{\bX_i',\hatY_i'|\xi_0,\bmu^{\bot}}|p_{\btheta_1})]=\frac{J_{\sigma}^2(\alpha)+K_{\sigma}^2(\alpha)-J_{\sigma}(\alpha)}{\sigma^2(1+\sigma^2\lambda)}.
		\end{align}
	\end{itemize}
	Finally, the gen-error at $t=1$ can be characterized as follows:
	\begin{align}
		&\mathrm{gen}_1^{\reg}=\bbE_{\xi_0,\bmu^{\bot}}\bigg[ \frac{J_{\sigma}^2(\alpha)+K_{\sigma}^2(\alpha)-J_{\sigma}(\alpha)}{\sigma^2(1+\sigma^2\lambda)}+\frac{d\sigma^2+1-J_{\sigma}^2(\alpha)-K_{\sigma}^2(\alpha)}{m\sigma^2(1+\sigma^2\lambda)} \bigg]=\frac{\mathrm{gen}_1}{1+\sigma^2\lambda},
	\end{align}
	where $\alpha$ stands for $\alpha(\xi_0,\bmu^{\bot})$.

	\item \textbf{Iteration $t\in[2:\tau]$:} Since $\btheta_t^{\reg}=\frac{\btheta_t}{1+\sigma^2\lambda}$, by iteratively applying the same techniques in iteration $t=1$ and in Appendix \ref{pf of Thm:exact gen GMM}, the gen-error at any $t\in[2:\tau]$ can be characterized as follows
	\begin{align}
		\mathrm{gen}_t^{\reg}=\frac{\mathrm{gen}_t}{1+\sigma^2\lambda}.
	\end{align}

\end{enumerate}
Theorem \ref{Thm:gen bound GMM reg} is thus proved.

\bibliography{ref.bib}

\end{document}